\crefname{hypothesis}{Hypothesis}{Hypotheses}
\crefname{fact}{Fact}{Facts}
\title{The Stochastic Occupation Kernel (SOCK) Method for Learning Stochastic Differential Equations}
\author{ Michael L. Wells 
    \thanks{Fariborz Maseeh Department of Mathematics and Statistics\\
    Portland State University
    Portland, OR 97201 
    (\email{mlwells@pdx.edu})} 
    \and
    Kamel Lahouel 
    \thanks{Translational Genomics Research Institute 
    Phoenix, AZ 85004 
    (\email{klahouel@tgen.org})}
    \and
    Bruno Jedynak 
    \thanks{Fariborz Maseeh Department of Mathematics and Statistics 
    Portland State University 
    (\email{bjedyna2@pdx.edu})} }
\begin{document}

\maketitle

\begin{abstract}
  We present a novel kernel-based method for learning multivariate stochastic differential equations (SDEs). The method follows a two-step procedure: we first estimate the drift term function, then the (matrix-valued) diffusion function given the drift. Occupation kernels are integral functionals on a reproducing kernel Hilbert space (RKHS) that aggregate information over a trajectory. Our approach leverages vector-valued occupation kernels for estimating the drift component of the stochastic process. For diffusion estimation, we extend this framework by introducing operator-valued occupation kernels, enabling the estimation of an auxiliary matrix-valued function as a positive semi-definite operator, from which we readily derive the diffusion estimate.
  This enables us to avoid common challenges in SDE learning, such as intractable likelihoods, by optimizing a reconstruction-error-based objective.  We propose a simple learning procedure that retains strong predictive accuracy while using Fenchel duality to promote efficiency.  We validate the method on simulated benchmarks and a real-world dataset of Amyloid imaging in healthy and Alzheimer's disease subjects.
\end{abstract}

\begin{keywords}
occupation kernel, SDE, RKHS, dynamical system, kernel method, optimization, convex optimization
\end{keywords}

\begin{MSCcodes}
65C20, 46E22, 46N10, 62J07, 60H10
\end{MSCcodes}

\section{Introduction}
The modeling of time series data using dynamical systems is a common practice in the sciences and engineering.  Traditionally, a technician would select an appropriate parametric model using prior knowledge of the  system under investigation, and then they would fit the parameters with the available data.  In recent years, there has been an explosion in the development of methods for automatic system identification when there is limited prior knowledge of the system available, for example, see \cite{SINDy},  \cite{neuralODE_Chen2018}, \cite{LAHOUEL2024112971}.

When the dynamics of the system itself are thought to contain randomness, SDEs may be used \cite{sde_modeling}.  These are dynamical systems consisting of a deterministic {\em drift} and a stochastic  {\em diffusion}.  SDEs are frequently used in domains like physics \cite{SDEs_in_physics_Milstein2004}, biology \cite{SDEs_in_biology_Ditlevsen2012}, and finance \cite{SDEs_in_finance}, \cite{finance-Scholes2008}.  The SDE learning task is that of estimating the drift and diffusion.

There are a variety of existing methods for automatically inferring SDEs.  In \cite{DARCY2023133583} \cite{heinonen-sde}, the drift and diffusion are parametrized by Gaussian processes (GPs), while in \cite{DynGMA-Zhu}, they use neural networks.  The method of \cite{BISDE2022244} fits the model over a library of functions using sparse regression, similar to SINDy\cite{SINDy} for ordinary differential equations (ODEs).  Finally, the method of \cite{gEDMD-Klus} approximates the Koopman generator of the SDE from which one may obtain the drift and diffusion.  A different but related set of methods are neural SDEs \cite{neuralSDE-Kidger2021}, \cite{NeuralSDE-LatentSDE}, \cite{neural-sde_stable_sde_Oh2024}, which parametrize SDEs on latent space with neural networks.  These methods are the stochastic analogs of neural ODEs, first investigated in \cite{neuralODE_Chen2018}.  

Occupation kernels were pioneered in the work of \cite{Rosenfeld_2019} \cite{rosenfeld2024} and extended in \cite{rielly2025mock} as a tool for estimating the slope field of an ODE.  They are functions belonging to a reproducing kernel Hilbert space (RKHS) \cite{RKHS_book_Berlinet2011} which reduce the computation of an integral over a trajectory to an inner product in the RKHS.  Using a representer theorem, the process of optimizing a slope field becomes a non-parametric regression, similar to gradient matching.  However, gradient matching is a finite difference-based approach, and thus susceptible to error in the case of noisy data, while occupation kernels are finite element-based, making them more robust to noise.  A weak formulation of the occupation kernel method was investigated in \cite{rielly2025rockvariationalformulationoccupation}.

To adapt the concept of occupation kernels to the stochastic setting, we introduce an expected value over trajectories with fixed initial condition in the associated functionals.  We use the It\^{o} isometry to introduce a reconstruction-error-based loss function for the diffusion and present a novel, operator occupation kernel construction to optimize this loss.  We use the parametrization from \cite{nonnegative_funcs_marteau2020}, \cite{learning_psd_valued_Muzellec} and adapt the representer theorem for PSD matrix-valued functions therein to the setting of occupation kernels by accommodating integral operators in the place of evaluation operators.  In this way, we constrain our estimate of the squared diffusion $\sigma_0\sigma_0^T$ to be PSD-matrix-valued, lending it further validity.

The paper is organized as follows: We present the problem setting and notation in \cref{sec:notation}, the method in \cref{sec:method}, results of numerical experiments in \cref{sec:experiments}, 
and our conclusions in \cref{seC:conclusion}.

\section{Problem setup and notation}
\label{sec:notation}

Suppose we have a collection of $d$-dimen-sional snapshots of a trajectory at the times $t_0 < t_1 < \ldots < t_n$ of an It\^{o} SDE whose solution $x \colon [0, T] \times \Omega \to \mathbb{R}^d$ satisfies the following:
\begin{equation}
    dx_t = f_0(x_t) dt + \sigma_0(x_t) dW_t
    \label{eq:SDE_equation}
\end{equation}
where $f_0 \colon \mathbb{R}^d \to \mathbb{R}^d$ is the unknown {\em drift}, $\sigma_0 \colon \mathbb{R}^d \to \mathbb{R}^{d \times d}$ is the unknown {\em diffusion} and $W_t$ is $d$-dimensional Brownian motion.  We assume that strong solutions of the system exist and are unique for any choice of initial condition (see \cite[Theorem 5.2.1]{oksendal2014}).  Suppose also that we have $M$ realizations of these observations, which we denote by $\bigcup_{u=1}^M \{y_i^{(u)}\}_{i=0}^n$.  Thus, we have $y_i^{(u)} := x(t_i, \omega^{(u)})$ with $\omega^{(u)} \in \Omega$ for each $u=1,\ldots, M$.  We additionally require that $x_0 := x(0, \omega^{(u)})$ is fixed for all $u$.  Extending the following derivation to the case of multiple trajectories with multiple initial conditions is straight-forward.

Let $k \colon \mathbb{R}^d \times\mathbb{R}^d \to \mathbb{R}$ denote a positive-definite, scalar-valued kernel. Let $K \colon \mathbb{R}^d \times \mathbb{R}^d \to \mathbb{R}^{d \times d}$ denote a positive-definite, matrix-valued kernel.  If a kernel $K$ has the property that $K(x,y) = k(x,y)I_d$ for some scalar kernel $k$, we say that $K$ is {\em $I$-separable} \cite{rielly2025mock}.  Let $H$ be the RKHS corresponding to $K$.  
Let $Q_{[a,b]}[g(y_t^{(u)})]$ denote a quadrature of the integral $\int_a^b g(y_t^{(u)})dt$ using the $u^{th}$ realization of the trajectory in the training set.  Similarly let $Q_{[a,b]\times[c,d]} [g(y_s^{(u)}, y_t^{(v)})]$ be a double integral quadrature over the rectangle $[a, b] \times [c, d]$.

The proofs of the following propositions and theorems are deferred to the appendices for space considerations.

\section{Method}
\label{sec:method}
\subsection{Estimating the drift}
\label{sec:est_drift}
The integral form of \eqref{eq:SDE_equation} is given by
\begin{equation}
    x_{t_{i+1}} - x_{t_i} = \int_{t_i}^{t_{i+1}} f_0(x_t) dt + \int_{t_i}^{t_{i+1}}\sigma_0(x_t) dW_t
    \label{eq:SDE_integral_form}
\end{equation}
where $\int dW_t$ denotes the It\^{o} integral.  We take the conditional expectation of both sides given that the trajectory under consideration has fixed initial condition $x_0$:
\begin{equation}
    \mathbb{E}[x_{t_{i+1}} - x_{t_i} \vert x_0] = \mathbb{E}\left[\int_{t_i}^{t_{i+1}}f_0(x_t) dt \Bigg\vert x_0\right]
    \label{eq:SDE_cond_exp}
\end{equation}
which follows since the expectation of an It\^{o} integral is zero \cite{oksendal2014}.  This suggests that we minimize the following cost function over candidate drift estimates $f \in H$, an RKHS with kernel $K$:
\begin{equation}
    J_{drift}(f) = \frac{1}{n}\sum_{i=0}^{n-1}\left\|\mathbb{E}\left[\int_{t_i}^{t_{i+1}}f(x_t) dt \Bigg \vert x_0 \right] - \overline{y_{i+1} - y_i}\right\|^2_{\mathbb{R}^d} + \lambda_f \|f\|^2_H
    \label{eq:drift_cost}
\end{equation}
where $\overline{y_{i+1} -  y_i} := \frac{1}{M}\sum_{u=1}^M \left(y_{i+1}^{(u)} - y_i^{(u)}\right)$ is an empirical estimate of the left-hand side of \eqref{eq:SDE_cond_exp} and $\lambda_f > 0$ is a hyperparameter governing the amount of regularization to be applied. 
\begin{theorem}[Representer theorem]
\label{thm:drift_representer_thm}
Suppose that the following regularity condition holds:
\begin{equation}
    \mathbb{E}\left[\int_{t_i}^{t_{i+1}}\mbox{\rm Tr}(K(x_t, x_t)) dt \Bigg \vert x_0 \right] < \infty
\end{equation}
for $i=0,\ldots, n-1$ where $\mbox{\rm Tr}(\cdot)$ denotes the matrix trace.  Then there exists a minimizer $f^* \in H$ of \eqref{eq:drift_cost} given as a linear combination of functions $L_i^*$:
\begin{equation}
    f^* = \sum_{i=0}^{n-1}L_i^* \alpha_i
\end{equation}
where $\alpha_i \in \mathbb{R}^d$  and $L_i^* \colon \mathbb{R}^d \to \mathbb{R}^{d \times d}$ is the function defined by
\begin{equation}
    \label{eq:drift_occ_kern_definition}
    L_i^*(x) := \mathbb{E}\left[\int_{t_i}^{t_{i+1}}K(x, x_t) dt \Bigg\vert x_0\right]
\end{equation}
for $i=0,\ldots, n-1$.  The coefficients $\alpha_i$ may be found by solving
\begin{equation}
\label{eq:linear_system_for_drift}
    (L^* + n\lambda_f I_{nd})\alpha = \overline{\Delta y}
\end{equation}
where $L^*$ is an $(nd, nd)$ block matrix of $(d,d)$ blocks whose $(k, l)$ block is given by
\begin{equation}
    [L^*]_{k, l} := \mathbb{E}_{\omega_1 \in \Omega, \omega_2 \in \Omega}\left[\int_{t_k}^{t_{k+1}}\int_{t_l}^{t_{l+1}}K(x_s(\omega_1), x_t(\omega_2)) dt ds \Bigg \vert x_0 \right]
\end{equation}
and $\overline{\Delta y} \in \mathbb{R}^{nd}$ is the concatenation of $\overline{y_{i+1} - y_i}$ for $i=0,\ldots, n-1$.
\label{theorem:representer_thm}
\end{theorem}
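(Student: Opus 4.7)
The plan is to recognize \eqref{eq:drift_cost} as a Tikhonov-regularized least squares problem in $H$ whose data-fitting part is controlled by $n$ linear functionals $\Phi_i \colon H \to \mathbb{R}^d$ defined by $\Phi_i(f) := \mathbb{E}[\int_{t_i}^{t_{i+1}} f(x_t)\, dt \mid x_0]$. The proof proceeds in four stages: boundedness of these functionals, identification of their Riesz representers as $L_i^*$, an orthogonal-decomposition argument ensuring the minimizer lives in the span of representers, and a direct computation to reduce the optimality condition to the advertised linear system.

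First I would fix $v \in \mathbb{R}^d$ and study the scalar functional $f \mapsto v^{T}\Phi_i(f)$. Using the reproducing property $v^T f(x) = \langle f, K(\cdot,x)v\rangle_H$ together with Cauchy--Schwarz gives $|v^T f(x)|^2 \le \|f\|_H^2\, v^T K(x,x) v \le \|f\|_H^2 |v|^2 \operatorname{Tr}(K(x,x))$. Jensen's inequality and the stated regularity condition then show $|v^T \Phi_i(f)| \le C_i \|f\|_H |v|$. By the Riesz representation theorem, there is a unique $\psi_{i,v}\in H$ with $v^T\Phi_i(f) = \langle f, \psi_{i,v}\rangle_H$; interchanging the (Bochner) expectation/integral with the inner product, which is permissible under the integrability bound just derived, shows $\psi_{i,v} = \mathbb{E}[\int_{t_i}^{t_{i+1}} K(\cdot,x_t)v\, dt \mid x_0]$. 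By bilinearity in $v$ this defines the matrix-valued kernel section $L_i^*$ of \eqref{eq:drift_occ_kern_definition} and gives the identity $\Phi_i(f) = \langle f, L_i^*(\cdot)e_j\rangle_H$ for each standard basis vector $e_j$.

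Next I would let $H_0 = \overline{\operatorname{span}}\{L_i^*(\cdot)v : 0\le i\le n-1,\; v\in\mathbb{R}^d\}$ and decompose any candidate $f = f_\parallel + f_\perp$ with $f_\perp\in H_0^\perp$. Since each component of $\Phi_i(f_\perp)$ equals an inner product of $f_\perp$ against an element of $H_0$, it vanishes, so the data-fitting term of $J_{drift}$ depends only on $f_\parallel$. The Pythagorean identity $\|f\|_H^2 = \|f_\parallel\|_H^2+\|f_\perp\|_H^2$ then forces any minimizer to satisfy $f_\perp = 0$, so one may take $f^* = \sum_{i=0}^{n-1} L_i^* \alpha_i$ with $\alpha_i\in\mathbb{R}^d$.

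Finally I would substitute this ansatz into $J_{drift}$. Using Fubini/tower of expectations (again justified by the regularity hypothesis) gives $\Phi_j(L_i^*\alpha_i) = [L^*]_{j,i}\alpha_i$, so that $\Phi_j(f^*) = (L^*\alpha)_j$. The squared norm computes as $\|f^*\|_H^2 = \langle \sum_j L_j^*\alpha_j, \sum_i L_i^*\alpha_i\rangle_H = \alpha^T L^* \alpha$ using the reproducing identity a second time together with symmetry of $K$. The cost reduces to the finite-dimensional quadratic $\tfrac{1}{n}\|L^*\alpha - \overline{\Delta y}\|^2 + \lambda_f \alpha^T L^*\alpha$, whose stationarity condition is $L^*\bigl((L^* + n\lambda_f I_{nd})\alpha - \overline{\Delta y}\bigr) = 0$; solving \eqref{eq:linear_system_for_drift} is therefore sufficient. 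The main obstacle is the careful interchange of expectation, Lebesgue integral, and RKHS inner product; I expect this to be the technical heart of the argument, with the trace-integrability hypothesis used precisely to produce the dominating integrand that legitimizes each swap.
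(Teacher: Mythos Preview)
Your proposal is correct and follows essentially the same route as the paper's proof: boundedness of the functionals via the trace-integrability hypothesis and Cauchy--Schwarz, identification of the Riesz representers $L_i^*v$, an orthogonal-projection argument showing the minimizer lies in their span, and substitution to obtain the finite-dimensional quadratic in $\alpha$. The only cosmetic difference is that the paper carries out the projection component-wise in the scalar RKHS (restricting to the $I$-separable case $K(x,y)=k(x,y)I_d$), whereas you work directly in the vector-valued RKHS $H$; your observation that the stationarity condition is $L^*\bigl((L^*+n\lambda_f I_{nd})\alpha-\overline{\Delta y}\bigr)=0$ and that \eqref{eq:linear_system_for_drift} is therefore merely sufficient is a welcome clarification that the paper glosses over.
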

The functions $L_i^*$ are Riesz representers for the vector-valued functionals $L_i \colon f \mapsto \mathbb{E}\left[\int_{t_i}^{t_{i+1}} f(x_t) dt \Bigg \vert x_0\right]$.  The regularity condition assures that the $L_i$ are bounded which guarantees the existence of the dual functions $L_i^*$.  Indeed, it can be shown that for $v \in \mathbb{R}^d$, we have $L_i(f)^Tv = \langle f, L_i^* v \rangle_H$.  We refer to functions of the form $x \mapsto L_i^*(x) v$ for $v \in \mathbb{R}^d$ as {\em occupation kernels} \cite{rielly2025mockalgorithmlearningnonparametric}.  See appendix \ref{sec:drift_derivation} for more details.  A summary is presented in algorithm \ref{alg:SOCK drift_learning}.

\subsection{Estimating the diffusion}
\label{sec:est_diffusion}

By taking outer products and conditional expectations in equation \eqref{eq:SDE_integral_form}, we may write
\begin{multline}
    \mathbb{E}\left[\left(x_{t_{i+1}} - x_{t_i} - \int_{t_i}^{t_{i+1}} f_0(x_t) dt\right)\left(x_{t_{i+1}} - x_{t_i} - \int_{t_i}^{t_{i+1}} f_0(x_t) dt\right)^T\Bigg \vert x_0\right] =\\ \mathbb{E}\left[\left(\int_{t_i}^{t_{i+1}}\sigma_0(x_t) dW_t\right)\left(\int_{t_i}^{t_{i+1}}\sigma_0(x_t) dW_t\right)^T \Bigg \vert x_0 \right]=
    \mathbb{E}\left[\int_{t_i}^{t_{i+1}}\sigma_0\sigma_0^T(x_t) dt \Bigg \vert x_0\right]
    \label{eq:SDE_outer_product}
\end{multline}
using the It\^{o} isometry \cite{oksendal2014} for the last equality.
Let us define
\begin{equation}
    z_i := \frac{1}{M}\sum_{u=1}^M \left(y_{i+1}^{(u)} - y_i^{(u)} - Q_{[t_i, t_{i+1}]}[f(y_t^{(u)})]\right)\left(y_{i+1}^{(u)} - y_i^{(u)} - Q_{[t_i, t_{i+1}]}[f(y_t^{(u)})] \right)^T
\end{equation}
for $i=0,\ldots, n-1$ using the previously estimated drift $f$.  This is an empirical estimate of the left-hand side of  \eqref{eq:SDE_outer_product}. 
We choose to estimate the function $\sigma_0\sigma_0^T$ rather than $\sigma_0$ directly.  Thus, we minimize the following over functions $a \in {{H'}^{d \times d}}$, a suitable space of $(d,d)$ matrix-valued functions:
\begin{equation}
    J_{diff}(a) = \frac{1}{n}\sum_{i=0}^{n-1}\left\|\mathbb{E}\left[\int_{t_i}^{t_{i+1}}a(x_t) dt \Bigg\vert x_0\right] - z_i\right\|^2_{\mathbb{R}^{d \times d}} + \lambda_\sigma \|a\|^2_{{{H'}^{ d \times d}}}
    \label{eq:diff_outer_cost}
\end{equation}
subject to the constraint that $a(x) \succeq 0$ for each $x \in \mathbb{R}^d$.  Performing constrained optimization over $a$ rather than unconstrained optimization over $\sigma$ allows us to avoid the quartic nature of the cost function in $\sigma$.

We discuss two methods for estimating $a$: an implicit and an explicit kernel method.  For the sparse data regime, the implicit kernel method is preferred.  However, its complexity has a non-linear dependence on $n$, the number of observations in the training set.  The explicit kernel method is useful for larger datasets, having no restrictive dependence on the dataset size; only a non-linear dependence on the number of features in the feature map.

\subsubsection{Implicit kernel}
We let $H$ denote an RKHS with scalar kernel $k$.  
Following \cite{learning_psd_valued_Muzellec}, we suppose that our estimate $a$ of $\sigma_0\sigma_0^T$ is given by
\begin{equation}
    a(x) := \left[\langle k(\cdot, x), C_{kl} k(\cdot, x) \rangle_H\right]_{k,l=1}^d
\end{equation}
where each $C_{kl} \in \mbox{Hom}_{HS}(H)$, the space of Hilbert-Schmidt operators on $H$.    Notate $\varphi(x) := k(\cdot, x)$ and $H^d := H \times \ldots \times H$, the $d$-fold product of $H$ with itself.  We define $C \colon H^d \to H^d$ as the $(d,d)$ matrix of operators $C_{kl}$.
We define $\hat{\varphi}(x) := I_d \otimes \varphi(x) \in \mathcal{L}(\mathbb{R}^d, H^d)$, the space of linear maps from $\mathbb{R}^d$ to $H^d$.  Note that  $a(x) = \hat{\varphi}(x)^* C \hat{\varphi}(x)$, where $\hat{\varphi}^*$ denotes the adjoint of $\hat{\varphi}$. We require that $C \succeq 0$ in the sense that
\begin{equation}
        \mathbf{g}^*
        C\mathbf{g}  = \sum_{k,l=1}^d \langle g_k, C_{kl} g_l \rangle_H \geq 0, \;\; \mbox{ for all } \mathbf{g} \in H^d
\end{equation}
which ensures that $a(x) \succeq 0$ for all $x \in \mathbb{R}^d$.  For $c, d \in H$, we denote by $c \otimes d$ the rank-one operator $H \to H$ given by $(c\otimes d)e := \langle d, e \rangle_H c$.

\begin{proposition}
\label{prop:operator_occ_kerns}
   Suppose the following regularity conditions hold:
   \begin{align}
    &\mbox{\rm i)  }\mathbb{E}\left[\int_{t_i}^{t_{i+1}}k(x_t, x_t)^2 dt \Bigg \vert x_0 \right] < \infty,\;\;\; i=0,\ldots, n-1,\\
       &\mbox{\rm ii)  }\{D \in \mbox{\rm Hom}_{HS}(H) \colon D^* = D\} = \overline{\mbox{\rm span}\{\varphi(x)\otimes \varphi(x) \colon x \in \mathbb{R}^d\}}
   \end{align}
   where the closure is taken in $\mbox{\rm Hom}_{HS}(H)$.  Then we have the following identity:
    \begin{equation}
        \mathbb{E}\left[\int_{t_i}^{t_{i+1}}a(x_t) dt \Bigg \vert x_0 \right] = \left[\left \langle \mathbb{E}\left[\int_{t_i}^{t_{i+1}}\varphi(x_t) \otimes \varphi(x_t) dt \Bigg \vert x_0\right], C_{kl} \right\rangle_{HS}\right]_{k,l=1}^d 
    \end{equation}
    where $\langle \cdot, \cdot \rangle_{HS}$ denotes the Hilbert-Schmidt inner product.  Thus, the cost in \eqref{eq:diff_outer_cost} may be written as a cost function over operators $C \in \mbox{\rm Hom}_{HS}(H^d)$:
    \begin{equation}
        J_{diff}(C) = \frac{1}{n}\sum_{i=0}^{n-1}\left\|\left[\langle M_i, C_{kl} \rangle_{HS}\right]_{k,l=1}^d - z_i\right\|^2_{\mathbb{R}^{d \times d}} + \lambda_\sigma \|C\|^2_{HS}
    \end{equation}
    subject to the constraint that $C \succeq 0$ where  $M_i := \mathbb{E}\left[\int_{t_i}^{t_{i+1}} \varphi(x_t) \otimes \varphi(x_t) dt \Bigg \vert x_0 \right] \in \mbox{\rm Hom}_{HS}(H)$ for $i=0,\ldots,n-1$.
  
\end{proposition}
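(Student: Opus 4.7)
The plan is to establish the claimed identity pointwise, lift it through the time integral and conditional expectation by Bochner-integral arguments, and finally read off the rewritten cost.

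\textbf{Pointwise identity.} For any $c\in H$ and any $D\in\mbox{Hom}_{HS}(H)$, expanding in an orthonormal basis $\{e_i\}$ of $H$ gives $\langle c\otimes c, D\rangle_{HS}=\sum_i \langle c,e_i\rangle\langle c, De_i\rangle = \langle c, Dc\rangle_H$ by Parseval; in a real Hilbert space no self-adjointness of $D$ is needed. Taking $c=\varphi(x)$ and $D=C_{kl}$, and comparing with the componentwise expansion $a(x)_{kl}=\langle\varphi(x),C_{kl}\varphi(x)\rangle_H$ obtained from $a(x)=\hat\varphi(x)^*C\hat\varphi(x)$, I obtain the scalar identity $a(x)_{kl}=\langle\varphi(x)\otimes\varphi(x), C_{kl}\rangle_{HS}$.

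\textbf{Lifting through integral and conditional expectation.} The map $t\mapsto \varphi(x_t)\otimes\varphi(x_t)$ is $\mbox{Hom}_{HS}(H)$-valued with Hilbert--Schmidt norm $\|\varphi(x_t)\otimes\varphi(x_t)\|_{HS}=\|\varphi(x_t)\|_H^2=k(x_t,x_t)$. Regularity condition (i), combined with Cauchy--Schwarz on the bounded interval $[t_i,t_{i+1}]$, yields $\mathbb{E}\bigl[\int_{t_i}^{t_{i+1}} k(x_t,x_t)\,dt \bigm| x_0\bigr]<\infty$, so the rank-one kernel process is Bochner integrable jointly in $(t,\omega)$ under the conditional law and $M_i$ is a well-defined element of $\mbox{Hom}_{HS}(H)$. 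Since the bounded linear functional $\langle\,\cdot\,, C_{kl}\rangle_{HS}$ commutes with Bochner integration and with conditional expectation, pulling it out of both operations and applying the pointwise identity entry by entry produces exactly the claimed matrix-valued formula.

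\textbf{Rewriting the cost.} Substituting the integral identity into \eqref{eq:diff_outer_cost} immediately gives the data-fidelity term $\tfrac{1}{n}\sum_i \|[\langle M_i, C_{kl}\rangle_{HS}]_{k,l=1}^d - z_i\|^2_{\mathbb{R}^{d\times d}}$. The regularizer is where condition (ii) enters: it is needed to make the reparametrization $C\mapsto a_C$ faithful. Indeed, if $a_C\equiv 0$, the pointwise identity forces $\langle\varphi(x)\otimes\varphi(x), \tfrac{1}{2}(C_{kl}+C_{kl}^*)\rangle_{HS}=0$ for every $x$, and density of the rank-one family in the self-adjoint Hilbert--Schmidt operators then forces the symmetric part of each $C_{kl}$ to vanish; combined with the block self-adjointness built into $C\succeq 0$, this identifies the induced norm on matrix-valued functions $\|a\|_{{H'}^{d\times d}}^2$ with $\|C\|_{HS}^2$ on the image of the parametrization. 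Under these identifications the constrained problem over $a\succeq 0$ is equivalent to the problem over $C\succeq 0$ in $\mbox{Hom}_{HS}(H^d)$ with HS regularization, yielding the stated form of $J_{diff}(C)$.

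The main obstacle I anticipate is the rigor of the Bochner interchange under a \emph{conditional} law, which requires joint measurability of $(t,\omega)\mapsto\varphi(x_t)\otimes\varphi(x_t)$ as an $\mbox{Hom}_{HS}(H)$-valued process (available under continuity of $k$ and the sample paths of $x_t$) and integrability under $\mathbb{P}(\,\cdot\mid x_0)$; condition (i) is calibrated precisely to supply the latter via Cauchy--Schwarz on the finite interval, and condition (ii) handles the well-posedness of the implicit norm identification in the final step.
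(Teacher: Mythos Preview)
Your argument for the main identity is correct but takes a genuinely different route from the paper's. You establish the pointwise equality $\langle\varphi(x)\otimes\varphi(x),C_{kl}\rangle_{HS}=\langle\varphi(x),C_{kl}\varphi(x)\rangle_H$ directly and then commute the bounded linear functional $\langle\,\cdot\,,C_{kl}\rangle_{HS}$ with the Bochner integral and conditional expectation, using condition~(i) only to certify integrability of $\|\varphi(x_t)\otimes\varphi(x_t)\|_{HS}=k(x_t,x_t)$. The paper instead works through the auxiliary scalar RKHS $H'$ with kernel $k(x,y)^2$: it first proves (Theorem~\ref{thm:parametrization_for_diff_implicit}, which consumes condition~(ii)) that $H'$ is isometric to the self-adjoint Hilbert--Schmidt operators on $H$, then shows the functional $g\mapsto\mathbb{E}\bigl[\int_{t_i}^{t_{i+1}} g(x_t)\,dt\,\big\vert\, x_0\bigr]$ is bounded on $H'$ via~(i), obtains a Riesz representer $\gamma_i^*\in H'$, and identifies the operator corresponding to $\gamma_i^*$ with $M_i$. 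Your approach is more elementary and shows, as a byproduct, that condition~(ii) is not needed for the identity itself or for $M_i\in\mbox{Hom}_{HS}(H)$; the paper's approach is more native to the occupation-kernel viewpoint and makes the regularizer identification $\|a\|_{{H'}^{d\times d}}=\|C\|_{HS}$ fall out of the same isometry. On that last point your sketch is a bit thin: injectivity of $C\mapsto a_C$ (which is what you extract from~(ii)) does not by itself equate the two norms, and to finish the cost rewriting cleanly you would still need the explicit inner-product calculation $\langle g,h\rangle_{H'}=\langle D,E\rangle_{HS}$ that the paper carries out in Theorem~\ref{thm:parametrization_for_diff_implicit}.
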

The regularity conditions ensure that the space of functions $x \mapsto \langle k(\cdot, x), Dk(\cdot, x)\rangle_H$ for $D \in \mbox{Hom}_{HS}(H), D^* = D$ is an RKHS and that the matrix-valued functionals $a \mapsto \mathbb{E}\left[\int_{t_i}^{t_{i+1}} a(x_t) dt \Bigg \vert x_0\right]$ are bounded.  As in the case of the drift, this guarantees the existence of Riesz representers $M_i$ which reduce the computation of the expected integral over a trajectory to an inner product in a Hilbert space.  We refer to the operators $M_i$ as {\em occupation kernels}.

To apply the representer theorem of \cite{learning_psd_valued_Muzellec}, we replace the $M_i$ with finite-rank approximations, computed as  
\begin{equation}
    \hat{M}_i := \frac{t_{i+1} - t_i}{2M}\sum_{u=1}^M\left[\varphi(y_i^{(u)}) \otimes \varphi(y_i^{(u)}) + \varphi(y_{i+1}^{(u)})\otimes \varphi(y_{i+1}^{(u)})\right]
\end{equation}
where we have used a trapezoid-rule integral quadrature and Monte Carlo estimate of the expectation.  Thus, we minimize
    \begin{equation}
    \label{eq:diff_cost_with_M_i_hat}
        J'_{diff}(C) = \frac{1}{n}\sum_{i=0}^{n-1}\left\|\left[\langle \hat{M}_i, C_{kl} \rangle_{HS}\right]_{k,l=1}^d - z_i\right\|^2_{\mathbb{R}^{d \times d}} + \lambda_\sigma \|C\|^2_{HS}
    \end{equation}
    subject to $C \succeq 0$.
    \begin{theorem}[Representer theorem]
    \label{thm:diff_representer_thm}
        Suppose that the conditions of proposition \ref{prop:operator_occ_kerns} hold. For each $C \in \mbox{\rm Hom}_{HS}(H^d)$, we have that
        \begin{equation}
            J'_{diff}(\Pi_n C \Pi_n) \leq J'_{diff}(C)
        \end{equation}
        where $\Pi_n \colon H^d \to H^d$ is component-wise orthogonal projection onto the finite-dimen-sional subspace spanned by $k(\cdot, y_i^{(u)})$ for $i=0,\ldots,n$ and $u=1,\ldots, M$.  Thus, we may find a minimizer of \eqref{eq:diff_cost_with_M_i_hat} of the form $\Pi_n C \Pi_n$ for some $C \in \mbox{\rm Hom}_{HS}(H^d)$.
    \end{theorem}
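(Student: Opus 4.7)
The plan is to show the two claims separately: the data-fitting term in $J'_{diff}$ is invariant under $C \mapsto \Pi_n C \Pi_n$, while the regularization term can only decrease. Together with preservation of the PSD constraint, this yields the theorem.

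First, I would observe that each $\hat{M}_i$ is a finite-rank operator whose range and co-range lie in $V := \mbox{span}\{k(\cdot, y_j^{(u)}) : j=0,\ldots,n,\; u=1,\ldots,M\} \subset H$. Indeed, $\hat{M}_i$ is a sum of rank-one operators of the form $\varphi(y_j^{(u)}) \otimes \varphi(y_j^{(u)})$, each of which factors through $V$. Consequently $\hat{M}_i = \Pi_V \hat{M}_i \Pi_V$, where $\Pi_V \colon H \to H$ is orthogonal projection onto $V$. Since $\Pi_n \colon H^d \to H^d$ is the component-wise projection, a direct computation shows $(\Pi_n C \Pi_n)_{kl} = \Pi_V C_{kl} \Pi_V$ for each $k,l$.

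Next, for the data-fitting term I would use cyclicity of the trace:
\begin{equation*}
\langle \hat{M}_i, \Pi_V C_{kl} \Pi_V \rangle_{HS} = \mbox{Tr}(\hat{M}_i^* \Pi_V C_{kl} \Pi_V) = \mbox{Tr}((\Pi_V \hat{M}_i \Pi_V)^* C_{kl}) = \langle \hat{M}_i, C_{kl}\rangle_{HS}.
\end{equation*}
Therefore the matrices $[\langle \hat{M}_i, (\Pi_n C \Pi_n)_{kl}\rangle_{HS}]_{k,l}$ and $[\langle \hat{M}_i, C_{kl}\rangle_{HS}]_{k,l}$ coincide, and the first sum in \eqref{eq:diff_cost_with_M_i_hat} is unchanged.

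For the regularization term I would invoke the standard inequality $\|AB\|_{HS} \leq \|A\|_{op}\|B\|_{HS}$ together with $\|\Pi_V\|_{op} \leq 1$ to obtain $\|\Pi_V C_{kl} \Pi_V\|_{HS} \leq \|C_{kl}\|_{HS}$, so that
\begin{equation*}
\|\Pi_n C \Pi_n\|_{HS}^2 = \sum_{k,l=1}^d \|\Pi_V C_{kl} \Pi_V\|_{HS}^2 \leq \sum_{k,l=1}^d \|C_{kl}\|_{HS}^2 = \|C\|_{HS}^2.
\end{equation*}
Finally, I would check that $\Pi_n C \Pi_n \succeq 0$ whenever $C \succeq 0$: for any $\mathbf{g} \in H^d$, $\mathbf{g}^* \Pi_n C \Pi_n \mathbf{g} = (\Pi_n \mathbf{g})^* C (\Pi_n \mathbf{g}) \geq 0$. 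This ensures the projected operator remains feasible, so minimization over feasible $C$ reduces to minimization over operators of the form $\Pi_n C \Pi_n$.

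The only real subtlety is bookkeeping of the two projections, $\Pi_V$ on $H$ versus $\Pi_n$ on $H^d$; the fact that $\Pi_n$ acts component-wise makes the identification $(\Pi_n C \Pi_n)_{kl} = \Pi_V C_{kl} \Pi_V$ immediate and reduces everything to the scalar-kernel picture. Everything else is a routine application of properties of the trace and the contractive nature of orthogonal projections on Hilbert-Schmidt operators, and none of it requires the regularity conditions of \cref{prop:operator_occ_kerns} beyond what is needed to make $\hat{M}_i \in \mbox{Hom}_{HS}(H)$ and $C_{kl} \in \mbox{Hom}_{HS}(H)$ well-defined.
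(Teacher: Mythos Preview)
Your proof is correct and follows essentially the same route as the paper: show the data term is invariant because $\pi_n \hat{M}_i \pi_n = \hat{M}_i$, and show the regularizer can only shrink under $C \mapsto \Pi_n C \Pi_n$. The only cosmetic differences are that the paper bounds the norm via the orthogonal decomposition $\|C\|_{HS}^2 = \|\Pi_n C \Pi_n\|_{HS}^2 + \|\Pi_n C \Pi_n^\perp\|_{HS}^2 + \|\Pi_n^\perp C \Pi_n\|_{HS}^2 + \|\Pi_n^\perp C \Pi_n^\perp\|_{HS}^2$ rather than your operator-norm inequality, and that you explicitly verify the PSD constraint is preserved under projection, which the paper leaves implicit but is indeed needed for the final sentence of the theorem.
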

We adapt the proof in \cite{learning_psd_valued_Muzellec} to our setting for this representer theorem.  It is essential for the $\hat{M}_i$ to be finite rank in order for this result to hold.
    
Let us define the map $\psi \colon H \to \mathbb{R}^{(n+1)M}$ by \begin{equation}
    \psi g := \begin{bmatrix}
       g(y_0^{(1)}), & \ldots&, g(y_n^{(M)}) 
    \end{bmatrix}^T
\end{equation}
and $G := [k(y_i^{(u)}, y_j^{(v)})] \in \mathbb{R}^{(n+1)M \times (n+1)M}$.
Let $R \in \mathbb{R}^{r \times (n+1)M}$ be such that $R^T R = G$ where $r := \mbox{\rm rank}(G)$.  Then let
\begin{equation}
    \label{eq:gamma_definition}
    \gamma := (RR^T)^{-1}R\psi \in \mathcal{L}(H, \mathbb{R}^r),\;\;\;
    \hat{\gamma} := I_d \otimes \gamma \in \mathcal{L}(H^d, \mathbb{R}^{rd})
\end{equation}

\begin{theorem}
\label{thm:isometry}
    Suppose that the conditions of proposition \ref{prop:operator_occ_kerns} hold.  There is an isometry between the space of operators $ \{\Pi_n C \Pi_n \colon C \in \mbox{\rm Hom}_{HS}(H^d), C^* = C\}$ and the space $\{\hat{\gamma}^* A \hat{\gamma} \colon A \in \mathbb{R}^{rd \times rd}, A^T = A\}$.  Thus, the minimization problem in \eqref{eq:diff_cost_with_M_i_hat} is equivalent to a minimization problem over symmetric matrices  $A \in \mathbb{R}^{rd \times rd}$:
    \begin{equation}
        J'_{diff}(A) = \frac{1}{n}\sum_{i=0}^{n-1}\left\|[\langle N_i, A_{kl} \rangle_{\mathbb{R}^{r \times r}}]_{k,l=1}^d - z_i\right\|^2_{\mathbb{R}^{d \times d}} + \lambda_\sigma \|A\|^2_{\mathbb{R}^{rd \times rd}}
        \label{eq:diff_outer_cost_with_A}
    \end{equation}
    subject to $A \succeq 0$ where $N_i := \frac{t_{i+1} - t_i}{2M}\sum_{u=1}^M\left[\gamma(y_i^{(u)}) \gamma(y_i^{(u)})^T + \gamma(y_{i+1}^{(u)})\gamma(y_{i+1}^{(u)})^T\right] \in \mathbb{R}^{r \times r}$ and $\gamma(x) := \gamma \varphi(x)$.
    
\end{theorem}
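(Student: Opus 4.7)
The plan is to show that $\hat{\gamma}$ acts as a partial isometry on $H^d$ that identifies the range of $\Pi_n$ with $\mathbb{R}^{rd}$, and then to transfer both the Hilbert--Schmidt geometry of operators $\Pi_n C \Pi_n$ and the cost functional \eqref{eq:diff_cost_with_M_i_hat} to this finite-dimensional matrix setting.

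First I would verify that $\gamma$ is an isometry on $V := \mathrm{span}\{k(\cdot, y_i^{(u)}) \colon i, u\}$ and vanishes on $V^\perp$. For $g = \sum_{i,u} c_{i,u} k(\cdot, y_i^{(u)}) \in V$, the reproducing property gives $\psi g = G c$, so
\begin{equation*}
    \gamma g = (R R^T)^{-1} R G c = (R R^T)^{-1} R (R^T R) c = R c,
\end{equation*}
hence $\|\gamma g\|^2_{\mathbb{R}^r} = c^T R^T R c = c^T G c = \|g\|^2_H$. Since $g \in V^\perp$ satisfies $g(y_i^{(u)}) = \langle g, k(\cdot, y_i^{(u)}) \rangle_H = 0$ for every $i, u$, one has $\psi g = 0$ and hence $\gamma g = 0$ on $V^\perp$. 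Consequently $U := \hat{\gamma}|_{V^d} \colon V^d \to \mathbb{R}^{rd}$ is a unitary isomorphism and the image of $\hat{\gamma}^*$ is contained in $V^d$.

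Next I would build the bijection. For any symmetric $A \in \mathbb{R}^{rd \times rd}$, the operator $C := \hat{\gamma}^* A \hat{\gamma}$ is symmetric Hilbert--Schmidt on $H^d$ and supported in $V^d$, so $\Pi_n C \Pi_n = C$. Conversely, for symmetric Hilbert--Schmidt $C$, setting $A := U \Pi_n C \Pi_n U^*$ and using $U^* U = I_{V^d}$ yields $\hat{\gamma}^* A \hat{\gamma} = \Pi_n C \Pi_n$. Because conjugation by the unitary $U$ preserves Hilbert--Schmidt norms, this bijection satisfies $\|\hat{\gamma}^* A \hat{\gamma}\|_{HS} = \|A\|_F$, which is the claimed isometry. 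The PSD constraint transfers in both directions via $\langle v, A v \rangle_{\mathbb{R}^{rd}} = \langle U^* v, C U^* v \rangle_{H^d}$ combined with surjectivity of $\hat{\gamma}$ onto $\mathbb{R}^{rd}$.

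Finally I would rewrite the cost. Because each $\varphi(y_i^{(u)}) \in V$, the finite-rank operator $\hat{M}_i$ is supported in $V$, so $\langle \hat{M}_i, C_{kl} \rangle_{HS} = \langle \hat{M}_i, \Pi_n C_{kl} \Pi_n \rangle_{HS}$. Writing $\Pi_n C_{kl} \Pi_n = \gamma^* A_{kl} \gamma$ blockwise, then using cyclicity of the trace together with the identity $\gamma \varphi(x) = \gamma(x)$, a direct computation gives $\gamma \hat{M}_i \gamma^* = N_i$ and therefore $\langle \hat{M}_i, C_{kl} \rangle_{HS} = \langle N_i, A_{kl} \rangle_F$. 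Combined with $\|C\|_{HS} = \|A\|_F$ and \cref{thm:diff_representer_thm}, which reduces the search space to $\Pi_n C \Pi_n$ without loss of optimality, this recasts \eqref{eq:diff_cost_with_M_i_hat} as \eqref{eq:diff_outer_cost_with_A}. The main delicacy will be bookkeeping: since $\gamma$ is only a partial isometry, every identity has to be reduced to statements on the finite-dimensional subspace $V^d$ before extending back to $H^d$ by zero, and one has to keep track of which inner products live in $H$, $H^d$, $\mathbb{R}^r$, or $\mathbb{R}^{r \times r}$ at each step.
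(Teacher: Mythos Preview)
Your proposal is correct and follows essentially the same route as the paper: both arguments hinge on the identities $\hat{\gamma}\hat{\gamma}^* = I_{rd}$ and $\hat{\gamma}^*\hat{\gamma} = \Pi_n$ (which you derive directly by computing $\gamma g = Rc$ on $V$ and $\gamma = 0$ on $V^\perp$, whereas the paper obtains them via the Moore--Penrose inverse in Proposition~\ref{prop:gamma_identities}), then use $A = \hat{\gamma} C \hat{\gamma}^*$ and $\gamma \hat{M}_i \gamma^* = N_i$ together with trace cyclicity to transfer the cost. Your explicit check that the PSD constraint passes through the bijection is a detail the paper leaves implicit.
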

The isometry may be proved using the relations $\hat{\gamma}\hat{\gamma}^* = I_{rd}$ and $\hat{\gamma}^* \hat{\gamma} = \Pi_n$ and noting that $\hat{\gamma} C\hat{\gamma}^* \in \mathbb{R}^{rd \times rd}$. The remainder of the proof is merely substituting $\hat{\gamma}^* A\hat{\gamma}$ in the relevant places and simplifying.  See appendix \ref{sec:implicit_diffusion_appendix} for more details.

Following \cite{learning_psd_valued_Muzellec}, we minimize the Fenchel dual of the cost instead.
\begin{theorem}
\label{thm:fenchel_dual_implicit_kernel}
   Suppose that the conditions of proposition \ref{prop:operator_occ_kerns} hold.  The minimizer of the Fenchel dual of \eqref{eq:diff_outer_cost_with_A} is given by
    \begin{multline}
        \label{eq:Fenchel_dual_cost}
        \beta^* := \\
         \underset{\beta \in \mathbb{R}^{n \times d \times d}}{\rm argmin}
        \left\{\frac{n}{4}\|\beta\|^2_{\mathbb{R}^{n \times d \times d}} + \langle \beta, z \rangle_{\mathbb{R}^{n \times d \times d}} + \frac{1}{4\lambda_\sigma}\left\|\left[\left[\sum_{i=0}^{n-1}N_i \beta_{i, kl}\right]_{k,l=1}^d\right]_-\right\|^2_{\mathbb{R}^{rd \times rd}}\right\}
    \end{multline}
    where $[A]_- = [U\Sigma U^T]_- := U\max\{-\Sigma, 0\}U^T$ and $z \in \mathbb{R}^{n \times d \times d}$ is the concatenation of the $z_i$.  Then the optimal $A^*$ minimizing \eqref{eq:diff_outer_cost_with_A} is given by 
    \begin{equation}
        A^* = \frac{1}{2\lambda_\sigma}\left[\left[\sum_{i=0}^{n-1} N_i \beta_{i, kl}^*\right]_{k,l=1}^d \right]_-
    \end{equation}

\end{theorem}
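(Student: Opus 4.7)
The plan is to cast \eqref{eq:diff_outer_cost_with_A} as a constrained convex program and apply Fenchel--Rockafellar duality. Write the primal as $\min_A F(A) + G(\mathcal{L}A)$, where $F(A) := \lambda_\sigma \|A\|^2_{\mathbb{R}^{rd \times rd}}$ if $A \succeq 0$ and $+\infty$ otherwise, $G(w) := \frac{1}{n}\sum_{i=0}^{n-1}\|w_i - z_i\|^2_{\mathbb{R}^{d\times d}}$ is the data-fit term in auxiliary variables $w = (w_0,\ldots,w_{n-1})$, and $\mathcal{L} \colon A \mapsto \bigl([\langle N_i, A_{kl}\rangle]_{k,l=1}^d\bigr)_{i=0}^{n-1}$ is the linear reconstruction. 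Since $F$ is strongly convex on its effective domain and strict feasibility holds (take $A = \varepsilon I$), the Fenchel--Rockafellar dual $\min_\beta F^*(-\mathcal{L}^*\beta) + G^*(\beta)$ is attained and enjoys zero duality gap.

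The two easy ingredients come next. Setting the gradient of $\langle \beta, w\rangle - G(w)$ to zero yields $w_i = z_i + \tfrac{n}{2}\beta_i$, so $G^*(\beta) = \langle \beta, z\rangle + \tfrac{n}{4}\|\beta\|^2$, contributing the first two terms of \eqref{eq:Fenchel_dual_cost}. Reading the adjoint from the bilinear pairing $\langle \mathcal{L}A, \beta\rangle = \sum_{i,k,l}\langle N_i, A_{kl}\rangle\beta_{i,kl}$ gives $\mathcal{L}^*\beta = \bigl[\sum_{i=0}^{n-1} N_i\beta_{i,kl}\bigr]_{k,l=1}^d$, which is exactly the matrix appearing inside the $[\cdot]_-$ in \eqref{eq:Fenchel_dual_cost}.

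The main obstacle is computing the conjugate of the PSD-restricted quadratic, $F^*(Y) = \sup_{A \succeq 0}\langle Y, A\rangle - \lambda_\sigma\|A\|^2$. Completing the square rewrites this as $\tfrac{1}{4\lambda_\sigma}(\|Y\|^2 - \inf_{A \succeq 0}\|2\lambda_\sigma A - Y\|^2)$; the infimum is the squared Frobenius distance from $Y$ to the PSD cone, attained by the spectral projection $U\max\{\Sigma,0\}U^T$ when $Y = U\Sigma U^T$. Using the orthogonal decomposition $\|Y\|^2 = \|U\max\{\Sigma,0\}U^T\|^2 + \|U\max\{-\Sigma,0\}U^T\|^2$ collapses everything to $F^*(Y) = \tfrac{1}{4\lambda_\sigma}\|U\max\{\Sigma,0\}U^T\|^2$, with maximizer $A = \tfrac{1}{2\lambda_\sigma}U\max\{\Sigma,0\}U^T$. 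Substituting $Y = -\mathcal{L}^*\beta$ flips the spectrum, so that the positive part of $-\mathcal{L}^*\beta$ equals $U\max\{-\Sigma,0\}U^T = [\mathcal{L}^*\beta]_-$ in the paper's convention. This produces $F^*(-\mathcal{L}^*\beta) = \tfrac{1}{4\lambda_\sigma}\|[\mathcal{L}^*\beta]_-\|^2$, which is the third term of \eqref{eq:Fenchel_dual_cost}, and primal recovery from the same maximizer formula gives $A^* = \tfrac{1}{2\lambda_\sigma}[\mathcal{L}^*\beta^*]_-$. Assembling the three pieces reproduces \eqref{eq:Fenchel_dual_cost} and the stated expression for $A^*$; the only subtle step is the sign-flipped identification of the PSD projection with the paper's notation $[\,\cdot\,]_-$.
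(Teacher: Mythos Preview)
Your proposal is correct and follows essentially the same route as the paper: the paper reduces Theorem~\ref{thm:fenchel_dual_implicit_kernel} to the explicit-kernel case (Theorem~\ref{thm:fenchel_dual_cost_explicit_kernel}), where it too writes the primal as $\theta(RA)+\Omega(A)$ with your $F=\Omega$, $G=\theta$, $\mathcal{L}=R$, invokes the Fenchel duality theorem, and recovers $A^*=\nabla\Omega^*(-R^*\beta^*)$. Your write-up is in fact more detailed than the paper's, which leaves the explicit computation of $\Omega^*$ (your completing-the-square argument and the identification $[-\mathcal{L}^*\beta]_+=[\mathcal{L}^*\beta]_-$) to the reader.
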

This result is a direct application of Fenchel's duality theorem \cite{rockafellar2015convex}.  One may express \eqref{eq:diff_outer_cost_with_A} as an unconstrained problem by defining a function $\Omega(A) := \lambda_\sigma \|A\|^2_{\mathbb{R}^{rd \times rd}}$ if $A \succeq 0$ and $\infty$ otherwise and writing $J'_{diff}(A) = \theta(RA) + \Omega(A)$ using the linear operator $R \colon A \mapsto \left[\left[\langle N_i, A_{kl} \rangle_{\mathbb{R}^{r \times r}}\right]_{k,l=1}^d\right]_{i=0}^{n-1} \in \mathbb{R}^{n \times d \times d}$ and function $\theta(\beta) := \frac{1}{n}\|\beta - z\|^2_{\mathbb{R}^{n \times d \times d}}$.  Fenchel's duality theorem asserts that the dual cost is then given by $\theta^*(\beta) + \Omega^*(-R^*\beta)$ where $\theta^*, \Omega^*$ are the Fenchel conjugates of $\theta, \Omega$, respectively, and $R^*$ is the adjoint of $R$.  If $\beta^*$ is optimal for the dual problem, then the optimal $A^*$ for the primal problem will be $\nabla \Omega^*(-R^*\beta^*)$, which can be shown using the relation $y \in \partial g(x) \Leftrightarrow x \in \partial g^*(y)$ and other elementary properties of subgradients.

The right side of \eqref{eq:Fenchel_dual_cost} may be minimized efficiently using the FISTA algorithm \cite{FISTA_beck2009}.  The optimal $a$ is then evaluated as  $a(x) = \hat{\gamma}(x)^T A^* \hat{\gamma}(x)$.  We derive an estimate of $\sigma_0$ by $\sigma(x) := U(x)\sqrt{\Sigma(x)} U(x)^T$ where $a(x) = U(x)\Sigma(x)U(x)^T$ is an eigendecomposition of the output.  Note that $\sigma_0$ is only unique up to a rotation $O$ since $\sigma_0(x) \varepsilon$ has the same distribution as $\sigma_0(x) O\varepsilon$ for $\varepsilon \sim N(0, I)$.  Thus, solutions for the SDE with diffusion $\sigma_0$ and the SDE with diffusion $\sigma_0 O$ will have the same law.  However, the two functions $f_0$ and  $\sigma_0\sigma_0^T$ are unique and fully characterize the SDE.  A summary is presented in algorithm \ref{alg:SOCK diffusion_learning}.

\begin{algorithm} 
    \caption{SOCK drift learning:}
    \label{alg:SOCK drift_learning}
\begin{algorithmic}[1]
    \REQUIRE \textbf{Training data}: $\{y_i^{(u)}\} \subset \mathbb{R}^d, i=0 \ldots n, u=1,\ldots, M$, $\lambda_f > 0$. 
    \FOR{$k=0 \ldots n-1, l=0 \ldots n-1$} 
    \STATE $L^*_{kl} \gets \frac{1}{M^2} \sum_{u=1}^M \sum_{v=1}^M Q_{[t_k, t_{k+1}]\times [t_l, t_{l+1}]}[K(y_s^{(u)}, y_t^{(v)})] \in \mathbb{R}^{d \times d}$ 
    \ENDFOR
    \STATE Set $L^*$ the $(nd, nd)$ block matrix with $(k,l)$ block $L_{kl}^*$.
    \STATE Solve the linear system $(L^*+\lambda_f I_{nd})\alpha = \overline{\Delta y}$ for $\alpha$
    \STATE \textbf{Return}: $f \colon x \mapsto \sum_{i=0}^{n-1}L_i^*(x) \alpha_i$ with $L_i^*$ as in \eqref{eq:drift_occ_kern_definition}.
\end{algorithmic}
\end{algorithm}

\begin{algorithm} 
    \caption{SOCK diffusion learning (implicit):}
    \label{alg:SOCK diffusion_learning}
\begin{algorithmic}[1]
    \REQUIRE \textbf{Training data}: $\{y_i^{(u)}\} \subset \mathbb{R}^d, i=0 \ldots n, u=1,\ldots, M$,  $f \in H$,  $\lambda_\sigma > 0$. 
    \STATE $\beta^* \gets \underset{\beta \in \mathbb{R}^{n \times d \times d}}{\rm argmin} J_{dual}(\beta)$ where $J_{dual}$ is the function on the right side of \eqref{eq:Fenchel_dual_cost}.
    
    \STATE $A^* \gets \frac{1}{2\lambda_{\sigma}} \left[\left[\sum_{j=0}^{n-1}N_j \beta^*_{j, kl}\right]_{k,l=1}^d\right]_-$
    \STATE \textbf{Return}: $a \colon x \mapsto \hat{\gamma}(x)^T A^* \hat{\gamma}(x)$ with $\hat{\gamma}$ as in \eqref{eq:gamma_definition}.
\end{algorithmic}
\end{algorithm}

\subsubsection{Explicit kernel}
\label{sec:diff_explicit_kernel_main_paper}
As an alternative, suppose we parametrize the function $a$ according to
\begin{equation}
    a(x) := \hat{\varphi}(x)^T Q \hat{\varphi}(x)
\end{equation}
where $\hat{\varphi} := I_d \otimes \varphi$ for a feature vector $\varphi \colon \mathbb{R}^d \to \mathbb{R}^p$.  Thus, $\hat{\varphi}(x) \in \mathbb{R}^{pd \times d}$ for each $x \in \mathbb{R}^d$.  Here, $Q$ is a $(pd, pd)$ matrix, which we require to be positive semi-definite.  Then \eqref{eq:diff_outer_cost} becomes an optimization over matrices $Q$ given by
\begin{equation}
    J_{diff}(Q) = \frac{1}{n}\sum_{i=0}^{n-1}\left\|\mathbb{E}\left[\int_{t_i}^{t_{i+1}}\hat{\varphi}(x_t)^T Q \hat{\varphi}(x_t) dt\Bigg \vert x_0 \right] - z_i\right\|^2_{\mathbb{R}^{d \times d}} + \lambda_{\sigma}\|Q\|^2_{\mathbb{R}^{pd \times pd}}
\end{equation}
subject to the constraint $Q \succeq 0$.

In this case, the occupation kernels will be given by
\begin{equation}
    M_i := \mathbb{E}\left[\int_{t_i}^{t_{i+1}}\varphi(x_t)\varphi(x_t)^T dt \Bigg \vert x_0\right] \in \mathbb{R}^{p \times p}
\end{equation}
and the cost reduces to
\begin{equation}
    \label{eq:diff_explicit_cost}
    J_{diff}(Q) = \frac{1}{n}\sum_{i=0}^{n-1}\left\|\left[\langle M_i, Q_{kl} \rangle_{\mathbb{R}^{p \times p}}\right]_{k,l=1}^d - z_i\right\|^2_{\mathbb{R}^{d\times d}} + \lambda_{\sigma}\|Q\|^2_{\mathbb{R}^{pd \times pd}}
\end{equation}
subject to $Q \succeq 0$.  We may then apply theorem \ref{thm:fenchel_dual_implicit_kernel} as in the implicit kernel case to derive an optimization problem that is efficient to solve.  The Fenchel dual to \eqref{eq:diff_explicit_cost} is a function of $\beta \in \mathbb{R}^{n \times d \times d}$ given by
\begin{multline}
    J'_{dual}(\beta) =\\
    \frac{n}{4}\|\beta\|^2_{\mathbb{R}^{n \times d \times d}} + \langle \beta, z \rangle_{\mathbb{R}^{n \times d \times d}} + \frac{1}{4\lambda_{\sigma}}\left\|\left[\left[\sum_{i=0}^{n-1}M_i \beta_{i, kl}\right]_{k,l=1}^d\right]_-\right\|^2_{\mathbb{R}^{pd \times pd}}
\end{multline}See appendix \ref{sec:derivation_outer_diff} for additional details.

Alternatively, we may minimize \eqref{eq:diff_explicit_cost} directly using projected gradient descent.  Thus, we iteratively update $Q$ according to 
\begin{align}
W^{(n+1)} &\gets Q^{(n)} - \alpha\nabla J_{diff}(Q^{(n)})\\ Q^{(n+1)} &\gets \Pi W^{(n+1)}
\end{align}
where $\Pi$ is orthogonal projection onto the space of positive semi-definite $(pd, pd)$ matrices and $\alpha > 0$ is the learning rate.  Since \eqref{eq:diff_explicit_cost} with constraint $Q \succeq 0$ is a convex problem and $\nabla J_{diff}$ is Lipschitz continuous, the iterates will converge to the global solution provided that $\alpha$ is chosen correctly.  The optimal learning rate is given by $1/\ell$ where $\ell$ is the Lipschitz constant of $\nabla J_{diff}$.  This may be computed analytically.  Note that projected gradient descent may also be used for minimizing \eqref{eq:diff_outer_cost_with_A} over $(rd, rd)$ matrices $A$ with $A \succeq 0$ although we do not discuss the details.

\subsection{Computational complexity}

\subsubsection{Drift}
We restrict our attention to the case of $I$-separable kernels given by 
\begin{equation}
    K(x,y) = k(x,y)I_d
\end{equation}
The linear system in \eqref{eq:linear_system_for_drift} is equivalent to an $(n,n)$ system with an $(n, d)$ right-hand side in this case.  Thus, we have $\mathcal{O}(n^2M^2)$ to compute the matrix $L^*$, and $\mathcal{O}(dn^3)$ to solve the system.  The method is linear in $d$. 
\subsubsection{Diffusion (implicit)}
In the following, assume for simplicity that $M=1$.  That is, we consider the case of one realization of a trajectory per initial condition in the training set.  Assume also that the functions $k(\cdot, y_i^{(u)})$ are linearly independent (and thus $r=n$).  We use FISTA \cite{FISTA_beck2009} to minimize \eqref{eq:Fenchel_dual_cost} with respect to the tensor $\beta \in \mathbb{R}^{n \times d \times d}$.  Let 
\begin{equation}
h(\beta) := \frac{1}{4\lambda_\sigma}\left\|\left[\left[\sum_{i=0}^{n-1} N_i \beta_{i,kl}\right]_{k,l=1}^d \right]_-\right\|^2
\end{equation}
The complexity of one step of FISTA in this case reduces to the complexity of computing $\nabla h(\beta)$.  Let 
\begin{equation}
Y := \left[\left[\sum_{i=0}^{n-1} N_i \beta_{i,kl}\right]_{k,l=1}^d\right]_- \in \mathbb{R}^{nd \times nd}    
\end{equation}
Then we may show that  $\partial h/\partial \beta_{i,kl} = \frac{1}{2\lambda_\sigma} \langle -N_i, Y_{k, l} \rangle_{\mathbb{R}^{n \times n}}$,  where $Y_{k, l}$ is the $(k,l)$ block of $Y$ of size $(n,n)$. Thus, the complexity of computing $\nabla h(\beta)$ given $Y$ is $\mathcal{O}(d^2n^3)$.  If the algorithm takes $T$ iterations to converge, then the complexity is $\mathcal{O}(Td^2n^3)$.  Computing all of the $N_i$ costs $\mathcal{O}(n^3)$.  Computing the eigendecomposition to obtain $Y$ has a cost of $\mathcal{O}(d^3n^3)$.  Thus, the total complexity is given by $\mathcal{O}(Td^3n^3)$.
\subsubsection{Diffusion (explicit)}
As before, assume for simplicity that $M=1$.  We use FISTA to minimize the cost function with respect to the tensor $\beta \in \mathbb{R}^{n \times d \times d}$.  Let 
\begin{equation}
h(\beta) := \frac{1}{4\lambda_\sigma}\left\|\left[\left[\sum_{i=0}^{n-1} M_i \beta_{i,kl}\right]_{k,l=1}^d \right]_-\right\|^2
\end{equation}
Like the implicit kernel case, the complexity of one step of FISTA reduces to the complexity of computing $\nabla h(\beta)$.  Let 
\begin{equation}
Z := \left[\left[\sum_{i=0}^{n-1} M_i \beta_{i,kl}\right]_{k,l=1}^d\right]_- \in \mathbb{R}^{pd \times pd}    
\end{equation}
Then we may show that  $\partial h/\partial \beta_{i,kl} = \frac{1}{2\lambda_\sigma} \langle -M_i, Z_{k, l} \rangle_{\mathbb{R}^{p \times p}}$,  where $Z_{k, l}$ is the $(k,l)$ block of $Z$ of size $(p,p)$. Thus, the complexity of computing $\nabla h(\beta)$ given $Z$ is $\mathcal{O}(d^2p^2n)$.  If the algorithm takes $T$ iterations to converge, then the complexity is $\mathcal{O}(Td^2p^2n)$.  Computing all of the $M_i$ costs $\mathcal{O}(p^2n)$.  Computing the eigendecomposition to obtain $Z$ has a cost of $\mathcal{O}(d^3p^3)$.  Therefore, the total complexity is given by $\mathcal{O}(Td^3p^3 + Td^2p^2n)$.

We also consider the complexity of projected gradient descent.  We see that the cost of computing $\nabla J_{diff}(Q)$ is $\mathcal{O}(d^2p^2n)$ since there are $d^2n$ inner products of $(p,p)$ matrices to compute.  The cost of projecting a $(pd, pd)$ matrix to the cone of positive semi-definite matrices is equivalent to that of taking an eigendecomposition and is thus given by  $\mathcal{O}(d^3p^3)$.  Therefore, if the algorithm takes $T$ iterations to converge, the total complexity is given by $\mathcal{O}(Td^3p^3 + Td^2p^2n)$, coinciding with FISTA.
\subsection{Techniques to reduce the complexity for large datasets}

While we do not describe how to do so in this paper, one may use an explicit kernel for the drift, which reduces the complexity to a linear dependence on $n$.  The standard Nystr\"{o}m method \cite{drineas2005nystrom} may be applied to the matrix $L^*$ in the drift estimation, as well.  To apply Nystr\"{o}m in the case of the implicit kernel diffusion method, we would first select an appropriate subset of the observations $y_i^{(u)}$ of size $m << n$, labeling them $\tilde{y}_j$.  When computing the approximations $\hat{M}_i$ of the operator occupation kernels (as described in section \ref{sec:est_diffusion}), we orthogonally project the $\varphi(y_i^{(u)})$ appearing in the original definition to the space $V := \mbox{\rm span}\{\varphi(\tilde{y}_j) \colon j=1,\ldots, m\}$.  This enables us to use a feature vector $\tilde{\psi} \colon H \to \mathbb{R}^m$ such that $\tilde{\psi}(x) \in \mathbb{R}^m$ rather than $\mathbb{R}^{(n+1)M}$.  In the representer theorem (\cref{thm:diff_representer_thm}), we may then substitute the projection operator $\Pi_m$, component-wise projection onto $V$, for $\Pi_n$.  This has the effect of making the complexity linear in $n$.  Alternatively, one may parametrize the function $a(x)$ with an explicit kernel as in section \ref{sec:diff_explicit_kernel_main_paper}.
\section{Experiments}
\label{sec:experiments}
We performed numerical experiments on simulated datasets, a real dataset consisting of brain amyloid levels of healthy and Alzheimer's disease subjects, and a dataset generated by a stochastic SIR model.  All experiments were conducted on a laptop with the Intel Core Ultra 9 Processor 185H, 64 GB RAM, and an NVIDIA GPU with 8GB of memory.  All expectations and integrals appearing in sections \ref{sec:est_drift} and \ref{sec:est_diffusion} were estimated with Monte Carlo and trapezoid-rule quadratures, respectively.

We use the Gaussian kernel for the drift and Gaussian Fourier features for the diffusion as a default choice of kernels.  Note that performance may be improved by selecting a more appropriate kernel if one happens to possess prior knowledge of the system under investigation.  In several of the simulated experiments, we chose a kernel using this prior knowledge.  These results appear in italics.

\subsection{Datasets}
\label{sec:datasets}
For the simulated SDE datasets, we integrated trajectories using the Euler-Maruyama method with a small step-size ($\tau = 0.0001$) starting from initial conditions randomly selected from a box centered at the origin.  We generated points at times $t_i := ih$ for $h > 0$ chosen suitably small and $i = 0, \ldots, n$.  Only one realization of each trajectory was generated (thus, $M = 1$ in the notation of section \ref{sec:notation}). 
\begin{enumerate}
    \item {\bf Geometric Brownian motion}:
    
    Dynamics given by 
    \begin{equation}
    dx_t = x_t dt + 0.3 x_t dW_t
    \end{equation}
    in one dimension.  Each trajectory had 101 equispaced points starting from time $0$ until time $1$.  There were 70 training trajectories, 10 validation trajectories (for hyperparameter tuning), and 20 test trajectories.  Thus, there were a total of $8080$ observations in the combined training and validation sets.
   
    \item {\bf Exponential dynamics}:  Non-linear dynamics given by 
    \begin{equation}
    dx_t = \exp(-x_t^2) dt + 0.3 \exp(-x_t^2)dW_t
    \end{equation}
    in one dimension.  There were 11 equispaced points per trajectory from time 0 until time 1.  There were 42 training trajectories, 6 validation trajectories, and 12 test trajectories.  There were a total of 528 observations in the training and validation sets together.
    \item {\bf Dense matrix-valued diffusion}: Two-dimensional dynamics given by 
    \begin{equation}
    dx_t = x_t dt + 0.3 Ax_t b^T dW_t
    \end{equation}
    where $A \in \mathbb{R}^{2 \times 2}$ and $b \in \mathbb{R}^2$ had entries drawn from a random uniform distribution over $(-1, 1)$.  Thus, the stochastic term in the dynamics had correlated entries.  There were 70 training trajectories, 10 validation trajectories, and 20 test trajectories.  Each trajectory had 11 equispaced points starting from time 0 until time 1.  Thus, there were 880 observations in the combined training and validation sets.
   
    \item {\bf Stochastic Lorenz 96-10}:  Ten-dimensional SDE given by
    \begin{equation}
        dx_t^i = \left((x_t^{i+1} - x_t^{i-2})x_t^{i-1} - x_t^i + F\right) dt + 0.3 x_t^i dW_t^i
    \end{equation}
    for $i=1,\ldots, 10$ with $x_t^{-1} := x_t^{9}, x_t^{0} := x_t^{10}, x_t^{11} := x_t^1$ and $F = 8$.  There were 101 equispaced points per trajectory starting at time 0 until time 1.  There were 70 training trajectories, 10 validation trajectories, and 20 test trajectories.  There were a total of 8080 observations in the combined training and validation sets.  This is a stochastic version of the chaotic Lorenz96 system in ten dimensions \cite{lorenz1996predictability}.
 
\begin{figure}
    \centering
    \includegraphics[width=1.0\linewidth]{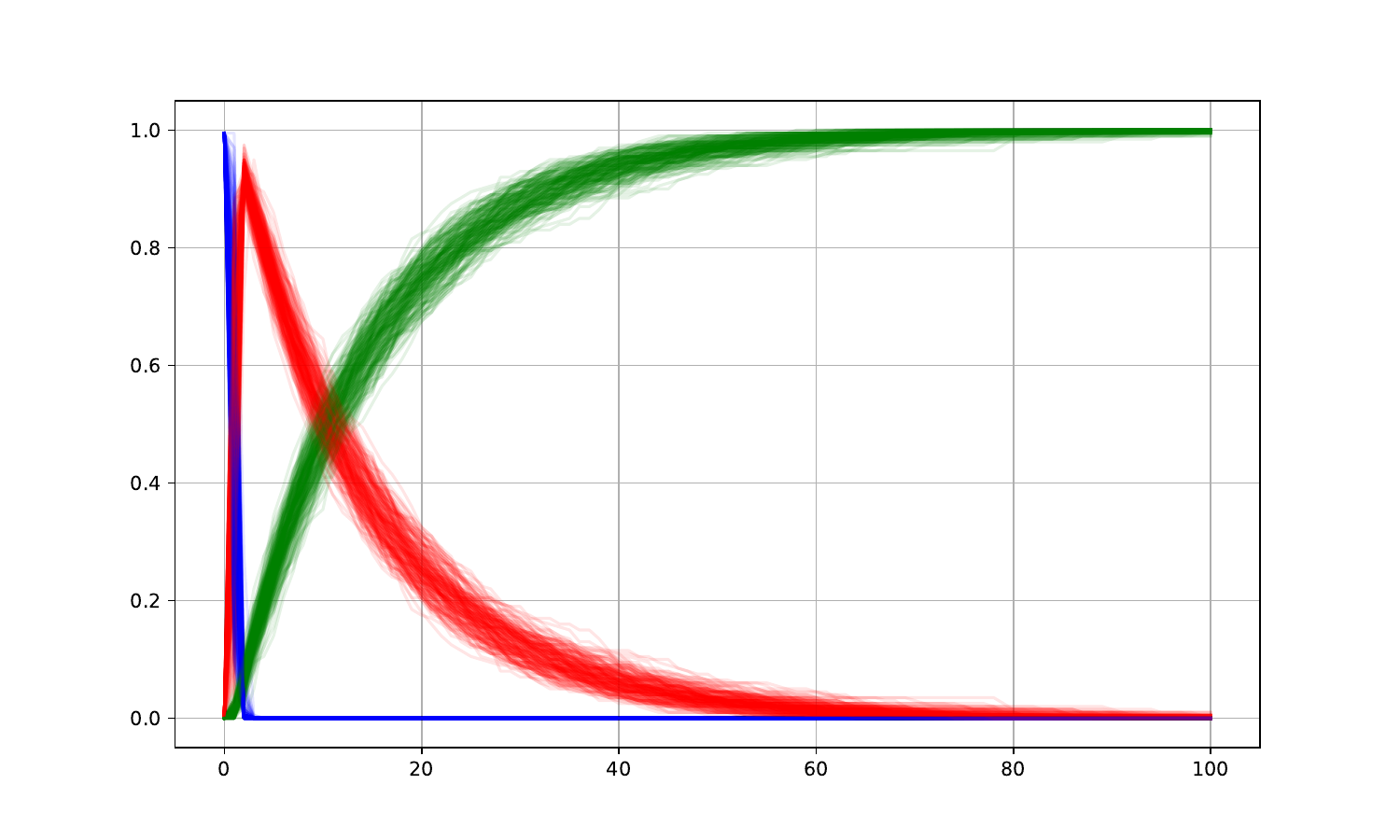}
    \caption{Stochastic SIR dataset.  Blue is susceptible (S), red is infected (I), and green is recovered (R).}
    \label{fig:stochastic_SIR}
\end{figure}
\end{enumerate}
We additionally obtained results on a real, medical dataset and a dataset generated by an agent-based model:
\begin{enumerate}
    \item {\bf Brain amyloid levels}:
    Amyloid levels were measured in healthy and Alz-heimer's disease subjects at multiple time points, yielding time series data in one dimension \cite{WRAP_johnson2018}.  There were 208 trajectories in the training set, 29 in the validation set, and 61 in the test set.  The trajectories had an average of 2.96 points on them.  
    \item {\bf Stochastic SIR model}:  We used the Gillespie stochastic simulation algorithm \cite{gillespie2007stochastic} to generate trajectories from a stochastic SIR model \cite{pajaro2022stochastic} using the parameters $\beta = 0.6, \alpha = 1/14$.  The population size was set to 200.  Trajectories each had 101 equispaced observations on the interval $[0,100]$.  Each dimension was scaled by the population size so that the trajectories represented proportions of the population.  There were 70 training trajectories, 50 validation trajectories, and 80 test trajectories.    The initial condition was given by a single infected individual with the remaining designated as susceptible.  Thus, for this dataset, $M >1$, and took the value 70 on the training set and 120 on the combined training and validation sets.  See figure \ref{fig:stochastic_SIR}.
\end{enumerate}

\begin{figure}
    \centering
    \includegraphics[width=0.5\linewidth]{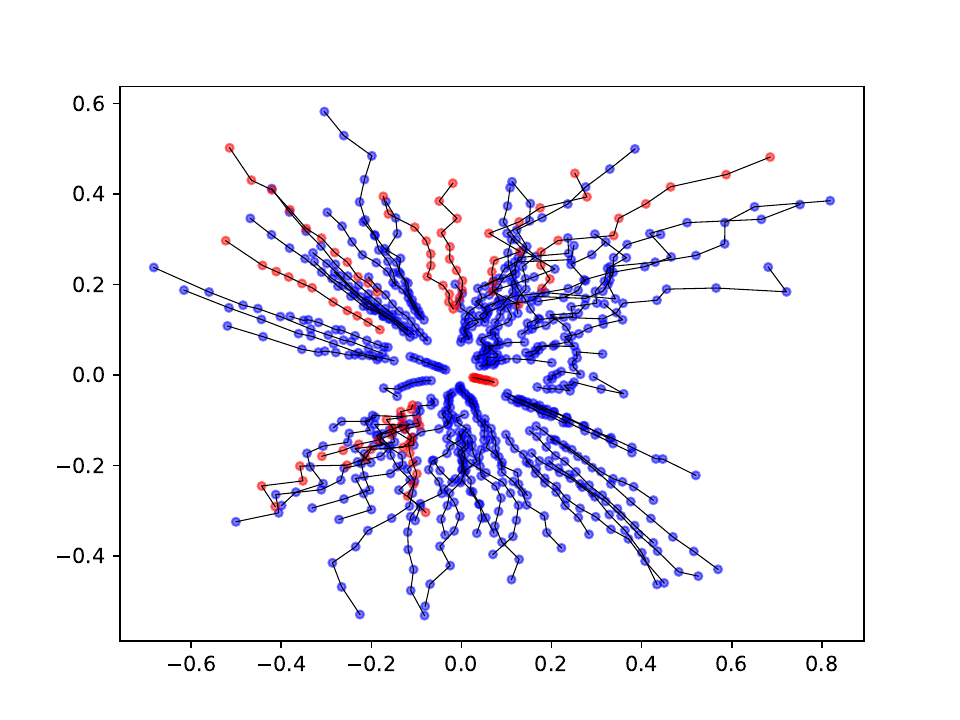}
    \caption{Dense matrix-valued diffusion dataset. Plot of train and validation sets.  Training data is in blue and validation data is in is red.  }
    \label{fig:train_val_data}
\end{figure}

\begin{figure}
  \begin{subfigure}{.49\linewidth}
    \includegraphics[width=1\linewidth]{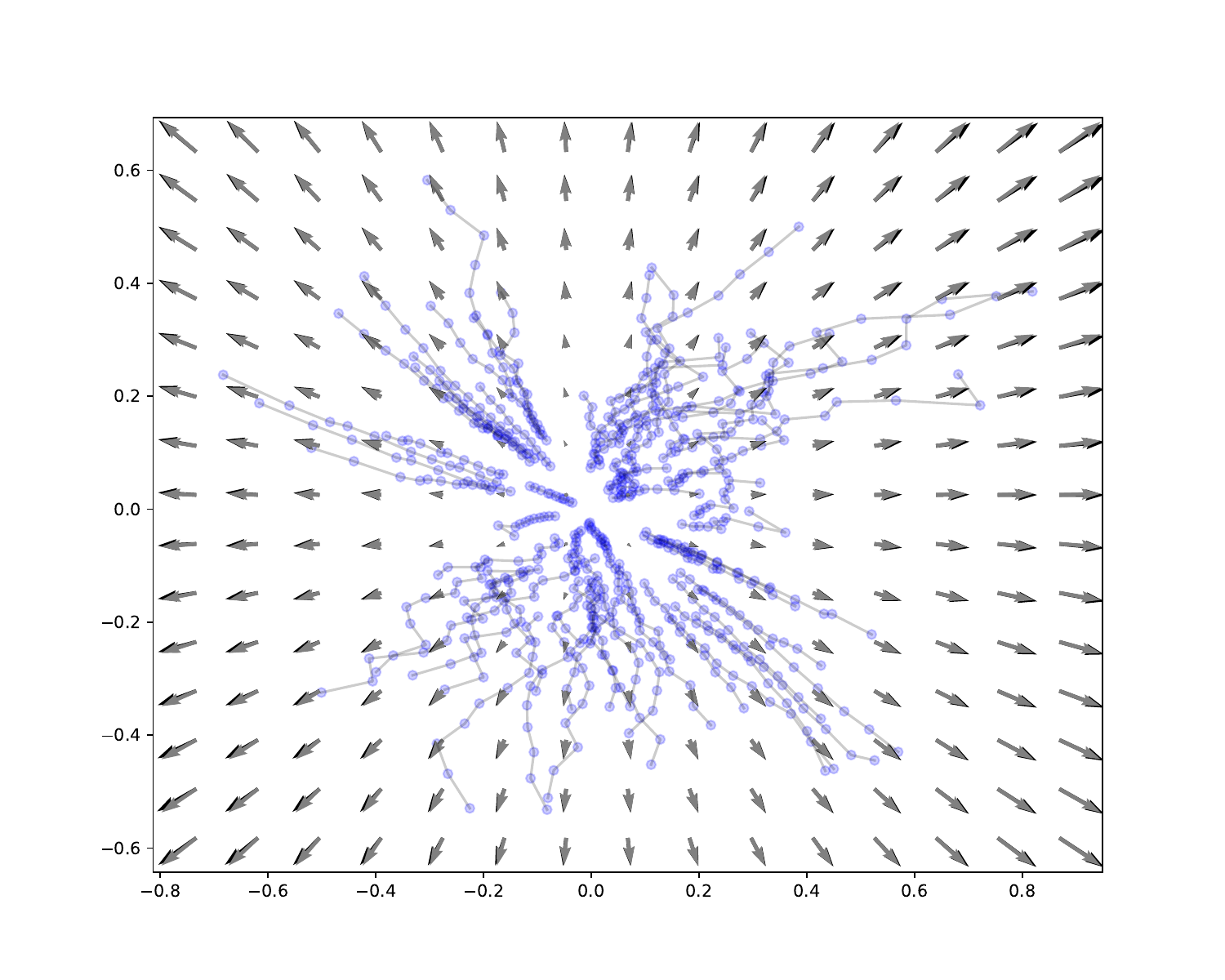}
   \caption{}
   \label{fig:drift_est}
   \end{subfigure}
   \begin{subfigure}{.49\linewidth}
       \includegraphics[width=1\linewidth]{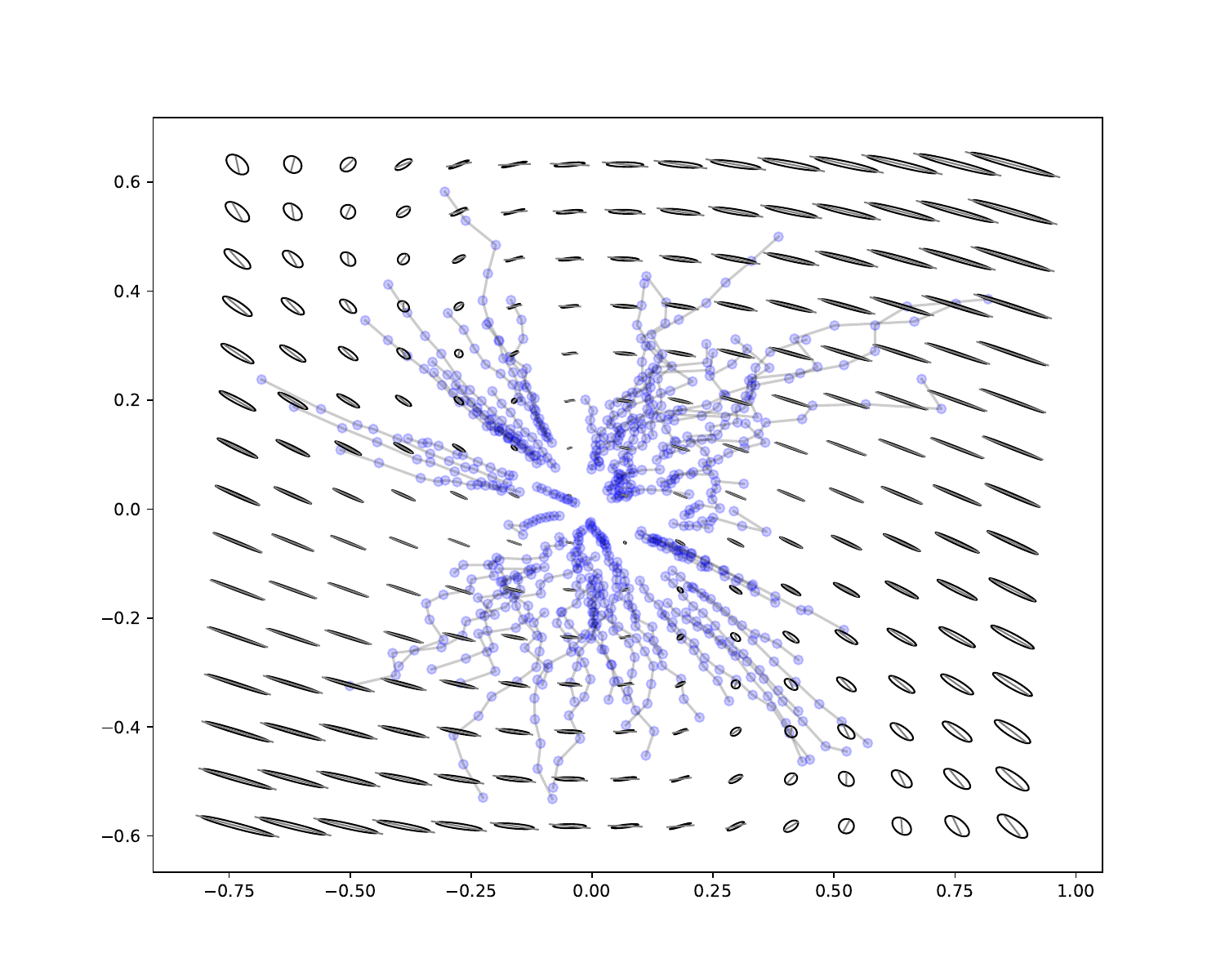}
       \caption{}
       \label{fig:diff_est}
   \end{subfigure}

    \caption{\small Dense matrix-valued diffusion dataset.  (\ref{fig:drift_est}) Plot of estimated drift in black and true drift in grey.  Training data is in the background.  (\ref{fig:diff_est})  Plot of estimated $a$ in black and true function $\sigma_0\sigma_0^T$ in grey.  The matrix-valued output is represented as ellipses.  Training data is in the background.}
    \label{fig:dataset_four}
\end{figure}
\begin{figure}
  \begin{subfigure}{.49\linewidth}
    \includegraphics[width=1\linewidth]{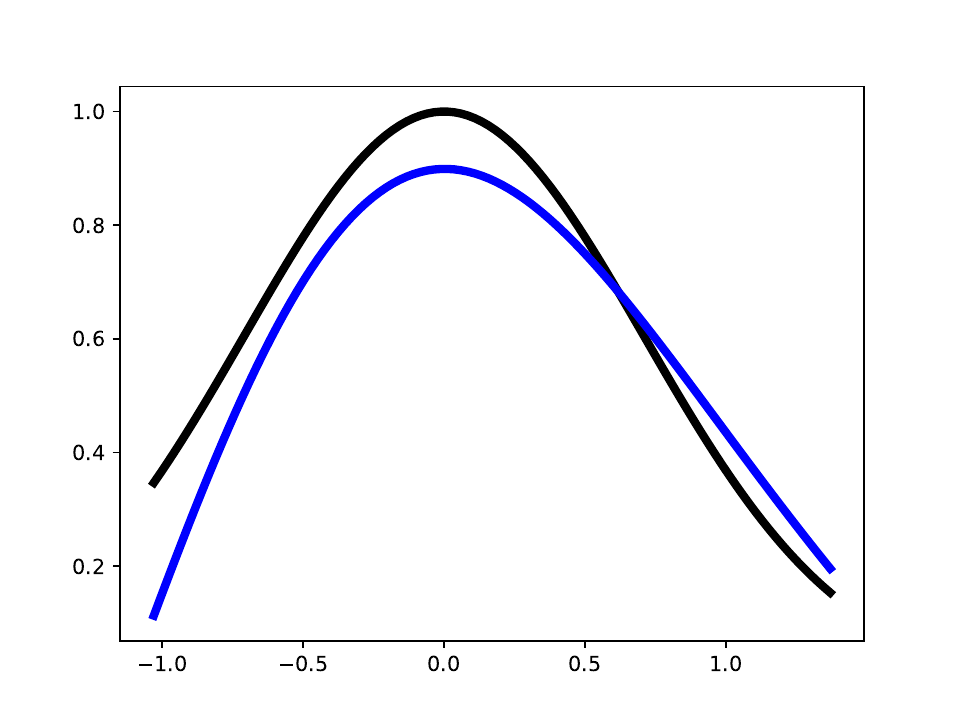}
   \caption{}
   \label{fig:drift_est_1D}
   \end{subfigure}
   \begin{subfigure}{.49\linewidth}
       \includegraphics[width=1\linewidth]{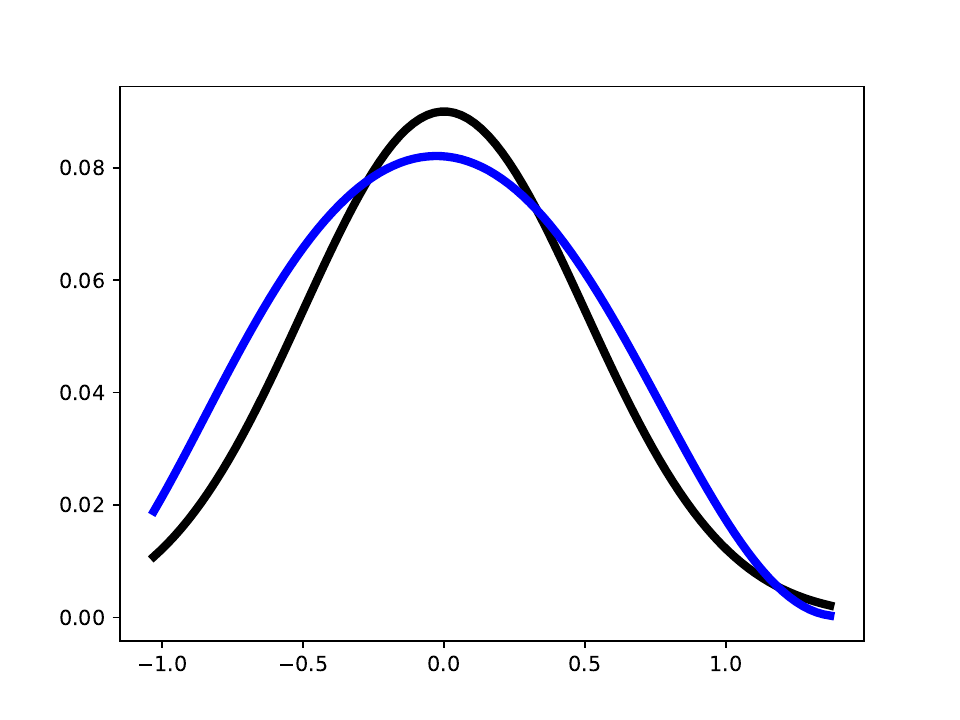}
       \caption{}
       \label{fig:diff_est_1D}
   \end{subfigure}

    \caption{\small Exponential dynamics dataset.  (\ref{fig:drift_est_1D}) Plot of estimated drift in blue and true drift in black.  (\ref{fig:diff_est_1D})  Plot of estimated $a$ in blue and true function $\sigma_0^2$ in black.  }
    \label{fig:exp_dynamics_est}
\end{figure}

We visualize the 2D dense matrix-valued diffusion
 dataset in figure \ref{fig:train_val_data}.  In figure \ref{fig:dataset_four}, we plot the true vector fields associated to the drift and diffusion along with a SOCK estimate on the dense matrix-valued diffusion dataset.  Since the function $\sigma_0\sigma_0^T$ is $(2,2)$ matrix-valued in this case, we plot its output as ellipses.  This is accomplished by taking an eigendecomposition of $\sigma_0\sigma_0^T(x)$ to obtain eigenvalues $\lambda_1 \leq \lambda_2$ and eigenvectors $v_1, v_2$.  We set $\theta:= \arctan(v_{2y}/v_{2x})$ and plot the ellipse centered at $x$, rotated by an angle of $\theta$, with height and width proportional to $\sqrt{\lambda_1}$,$\sqrt{\lambda_2}$, respectively.  Since the true function $\sigma_0\sigma_0^T$ has rank-one output, the grey ellipses appear as lines.
\subsection{Hyperparameter tuning}
\label{sec:hyperparameter_tuning}
We performed a grid search to find the optimal regularization parameters $\lambda_f, \lambda_\sigma$, as well the optimal kernel parameters (if any).  
\begin{enumerate}
    \item {\bf Approximate likelihood}:
To evaluate performance on the validation set, we computed an approximate likelihood in the following way.  First, we discretized the trajectories with the Euler-Maruyama scheme \cite{EulerMaruyama_MLE}, as given by  $y_{i+1} \approx y_i + hf(y_i) + \sqrt{h}\sigma(y_i)\epsilon_i$,  where $\epsilon_i \sim N(0, I)$.  This implies that the distribution of $y_{i+1}$ given $y_i$ is approximately $y_{i+1} \vert y_i \sim N\left(y_i + hf(y_i), ha(y_i)\right)$.  To reduce numerical instability, we introduced the quantity $\delta > 0$ and replaced $ha(y_i)$ with $ha(y_i) + \delta I_d$ \cite{DARCY2023133583}.  We used $\delta = 0.0$ on most of the datasets.  For the other datasets, we found $\delta = 10^{-3}$ was sufficient.  Using the Markov property of the solution of an SDE \cite{diffusions_markov_martingale_Rogers1987}, we get that
$p(y_0, \ldots, y_n \vert f, a) = \prod_{i=0}^{n-1}p(y_{i+1} \vert y_i, f, a)$, where we make the additional assumption that the initial point satisfied $p(y_0 \vert f, a)=  1$.  We select the hyperparameters that maximize this approximate likelihood on the validation set.
\end{enumerate}
\subsection{Evaluation metrics on simulated data} 
\label{sec:metrics_simulated_data}
\begin{enumerate}
    \item {\bf Perplexity}: We compute the log of the approximate likelihood on the test set as in section \ref{sec:hyperparameter_tuning}, followed by normalizing by the number of test points and dimension of the data.  We then take the exponential of the negative log likelihood and divide by the corresponding quantity for the true model, reporting the result as a percentage:  $P := \exp\left(\frac{l_{\rm true} - l_{\rm model}}{dn}\right)\cdot 100\%$.  The score represents the ratio (in percentage) of the estimated model's per-token, per-dimension perplexity to that of the true model. Perplexity is commonly used to evaluate probabilistic models by measuring how well the model predicts a sequence of observations \cite{jelinek1977perplexity}. A lower score indicates that the estimated model assigns a higher likelihood to the test data, relative to the true model.
    
    \item {\bf Relative error}:  Given an estimated function $\hat{g}$, we compute the matrix of function evaluations $\hat{G}$ on the data $\{y_i\}_{i=0}^n$ where $\hat{G}_i := \hat{g}(y_i)$.  We compute the analogous matrix $G$ for the true function.  The RE metric is then given as a percentage:  $RE_{\hat{g}} := \frac{\|\hat{G} - G\|_F}{\|G\|_F}\cdot 100\%$, where $\|\cdot\|_F$ is the Frobenius norm.  We report $RE_f, RE_{a}$ as evaluated on the test data.  A lower score indicates less point-wise error in the estimated functions as compared to the true functions.
    \item {\bf MMD statistic}:
    \label{sec:MMD_metric}
We also use the maximum mean discrepancy (MMD) \cite{JMLR:v13:gretton12a} to measure model performance.  We use a kernel defined over trajectories in $L^2([a,b], \mathbb{R}^d)$ given by the formula
\begin{equation}
    K_{MMD}(x, y) := \exp\left(-\frac{1}{\eta}\int_a^b \|x(t) - y(t)\|^2 dt\right)
    \label{eq:K_MMD}
\end{equation}
The parameter $\eta$ is set to be the median of all pairwise squared $L^2$- distances between trajectories in the training set \cite{garreau2018largesampleanalysismedian}.  This kernel is characteristic \cite[Theorem 9]{JMLR:v23:20-1180}, and thus it is an appropriate choice for the MMD.  We discretize the integral using a trapezoid-rule quadrature.  Trajectories are integrated using Euler-Maruyama with a step-size of $0.001$.  For the evaluation on the test set, we generate 500 trajectories for each distinct initial condition and compute the MMD using these and the true test trajectories.  This gives us a collection of MMD statistics (one for each initial condition in the test set).  We report the mean of these quantities.  In practice, we use an empirical MMD statistic with a discretized version of the kernel $K_{MMD}$.  A lower score indicates a greater similarity between the distributions of the model trajectories and test data.

\end{enumerate}
\subsection{Evaluation metrics on real data}
\label{sec:metrics_real_data}
\begin{enumerate}
    \item {\bf Perplexity}: Here we compute the perplexity as given by $P' := \exp\left(-\frac{l_{\rm model}}{dn}\right) \cdot 100\%$.

    \item {\bf MMD statistic}:  See section \ref{sec:metrics_simulated_data}.  
\end{enumerate}

\subsection{Kernels used}
\label{sec:kernels_used}
We used several different kernels with the SOCK method.  We describe them here along with any parameters they depend on:

\begin{enumerate}
    \item {\bf Gaussian kernel}:
    \begin{equation}
        K(x, y) = \exp\left(-\frac{\|x-y\|^2}{2\eta^2}\right)I_d
    \end{equation}
    where $\eta > 0$ is a scale parameter.  Denoted 'Gauss' in tables \ref{table:simulated_results} and \ref{table:real_results}. We found the optimal $\eta$ with a validation set.
    \item {\bf Polynomial kernel}:
    \begin{equation}
        K(x, y) = (x^T y + 1)^c I_d
    \end{equation}
    where $c \in \mathbb{N}$ is the polynomial degree.  We used $c=2$.  Denoted 'poly' in table \ref{table:simulated_results}.
    \item {\bf Linear kernel}:
    \begin{equation}
        K(x,y) = (x^T y) I_d
    \end{equation}
    Denoted 'linear' in table \ref{table:simulated_results}.
    \item {\bf Gaussian Fourier features}:
    \begin{equation}
        K(x,y) = (\varphi(x)^T \varphi(y)) I_d
    \end{equation}
    where
    \begin{equation}
        \varphi(x) := \sqrt{\frac{2}{p}}\begin{bmatrix}
            \cos(\omega_1^T x/\eta + \beta_1)\\
            \vdots\\
            \cos(\omega_p^T x/\eta + \beta_p)
        \end{bmatrix}
    \end{equation}
    with $\omega_i \sim N(0, I_d), \beta_i \sim \mbox{Unif}(0, 2\pi)$ for $i=1,\ldots p$.  We set $p := 100$ for all experiments.  The quantity $\eta > 0$ is a scale parameter.  We found $\eta$ with a validation set.  Denoted 'FF' in tables \ref{table:simulated_results} and \ref{table:real_results}.
\end{enumerate}
\subsection{Comparators}
\label{sec:comparators}
The methods we compare against in our numerical experiments are as follows:
\begin{enumerate}
    \item {\bf BISDE}:   This method employs a technique similar to SINDy \cite{SINDy} for ODEs.  First, finite-difference approximations are used to estimate evaluations of the drift and diffusion.  Next, a sparse regression is performed using a library of candidate functions to find estimates of the deterministic and stochastic components of the dynamics.  Rather than directly estimate the diffusion, the function $\sigma_0 \sigma_0^T$ is estimated instead.  However, there is no constraint imposed that this estimate be PSD-matrix-valued. \cite{BISDE2022244}
    \item {\bf gEDMD}: 
    Instead of estimating nonlinear dynamics on the observation space, this method estimates linear dynamics on the space of observables.  The generator of the Koopman semigroup is estimated from data, from which one then derives estimates of the drift and the function $\sigma_0\sigma_0^T$.  Like the previous method, there is no guarantee that the estimate of $\sigma_0\sigma_0^T$ is PSD-matrix-valued. \cite{gEDMD-Klus}
  
     \item {\bf Neural SDE}:
    We compared SOCK to each of the neural SDE methods described in \cite{neural-sde_stable_sde_Oh2024}.  They each define a model on latent space, relying on learned projections to map data to and from the hidden dimensions.  We denote the latent variable by $z$.  Each of the models has hyperparameters for the hidden space dimension and the number of layers in the neural network architectures used for the latent drift and diffusion. We validated these according to the MMD statistic of section \ref{sec:hyperparameter_tuning} as evaluated on the validation set.  We trained each model for 200 epochs using the mean-squared error (MSE) loss provided in the code repository.  Note that since neural SDE methods do not model the drift and diffusion directly, we are not able to compute the perplexity or relative error metrics for them.  We evaluate them solely with the MMD.
    
    The classical neural SDE model is given by
    \begin{equation}
        dz_t = f(t, z_t; \theta_f) dt + g(t, z_t; \theta_g) dW_t
    \end{equation}
    where $f$ and $g$ are neural networks with parameters $\theta_f, \theta_g$, respectively.
    \item {\bf Langevin-type SDE (Neural LSDE)}:
    This model is defined as:
    \begin{equation}
        dz_t = \gamma(z_t ; \theta_{\gamma}) dt + \sigma(t;\theta_{\sigma})dW_t
    \end{equation}
    Note that the drift depends only on the spatial variable while the diffusion depends only on time.
    \item {\bf Linear Noise SDE (Neural LNSDE)}:
    The model is defined as follows:
    \begin{equation}
        dz_t = \gamma(t, z_t ;\theta_{\gamma}) dt + \sigma(t;\theta_{\sigma})z_t dW_t
    \end{equation}
    Here the diffusion is a time-dependent linear function of the latent variable $z_t$.
    \item {\bf Geometric SDE (Neural GSDE)}:
    This model is given by:
    \begin{equation}
        \frac{dz_t}{z_t} = \gamma(t, z_t ; \theta_{\gamma}) dt + \sigma(t; \theta_{\sigma})dW_t 
    \end{equation}
    Note that $z_t$ will be an exponential function in this case.
\end{enumerate}

The NAs for the perplexity metric in tables \ref{table:simulated_results} and \ref{table:real_results} associated to the BISDE and gEDMD methods occur because the estimates of $\sigma_0 \sigma_0^T$ that these algorithms obtain are not necessarily PSD matrix-valued.  This invalidates the average likelihood and thus precludes the computation of the perplexity metric.  The NAs for the MMD metric occurred when there was numerical instability in the SDE models' solutions.  There are NAs for the neural SDE methods for all metrics besides the MMD.  As mentioned in section \ref{sec:comparators}, neural SDE methods do not model the drift and diffusion of the SDE under investigation; thus, we are limited in our ability to evaluate them.  We rely solely on the MMD metric using trajectories predicted by the model for this collection of methods.

As demonstrated in tables \ref{table:simulated_results} and \ref{table:real_results}, the SOCK method performs competitively with state-of-the-art methods for learning the dynamics of SDEs.  We demonstrate superior capabilities for estimating the function $\sigma_0\sigma_0^T$ in all cases and our estimates of $f_0$ are frequently the best.  Our method also obtained the best results on the real, medical dataset and the stochastic SIR dataset.

The BISDE method obtained good estimates for the drift and was frequently competitive in the MMD metric.  The gEDMD method did well on low-dimensional data, but not high-dimensional data.  The neural SDE methods did not achieve the best results on any dataset.  We speculate that this is because we used the MSE loss for training them, and thus, the trained models were not optimal for the MMD metric.  The MMD loss might have been more appropriate for training; however, this loss is numerically unstable and difficult to work with.  We frequently obtained NaN's using this loss, for example.

\begin{table}

  \small
  \caption{Results on simulated data.   The default SOCK method is SOCK Gauss/FF.  The SOCK method with adapted kernel appears in italics.  Best results are marked with $^*$.  Best results besides the SOCK adapted kernel method are marked with $\dagger$.  NA appears when the evaluation metric could not be computed.   See section \ref{sec:kernels_used} for details on kernels used.}
  \begin{subtable}{1.0\linewidth}


    \centering
    \begin{tabular}{l l l l l}
        \toprule
        \multicolumn{5}{c}{Geometric Brownian motion}\\
        Method       & $P$       & $RE_f$  & $RE_{a} $ & MMD\\
        \midrule
        {\em SOCK linear/linear}       & {\em 102.7}$^*$      & {\em 4.878e-2}$^*$  & {\em 12.66} & {\em 9.366e-2}\\
       
        SOCK Gauss/FF &114.6$\dagger$ & 10.33$\dagger$& 8.740$^*$$\dagger$ & 8.191e-2$^*$$\dagger$\\
        BISDE &  NA & 33.62 & 65.47 & 8.354e-2\\
        gEDMD & 117.2 & 19.13 &  32.68 & 8.465e-2\\
        Neural SDE & NA & NA & NA & 1.945e-1\\
        Neural LSDE & NA & NA & NA & 1.948e-1\\
        Neural LNSDE & NA & NA & NA & 1.969e-1\\
        Neural GSDE & NA & NA & NA & 2.041e-1\\
        \bottomrule
    \end{tabular}%
  \end{subtable}
   \begin{subtable}{1.0 \linewidth}
       \centering
    \begin{tabular}{l l l l l}
        \toprule
        \multicolumn{5}{c}{Exponential dynamics}\\
        Method       & $P$       & $RE_f$  & $RE_{a}$ & MMD \\
        \midrule
        SOCK Gauss/FF & 100.9$^*$$\dagger$ &12.42&  19.15$^*$$\dagger$ & 1.196e-1\\
        BISDE & 103.4 & 6.510$^*$$\dagger$ & 34.67 & 1.350e-1\\
        gEDMD & 105.9 & 14.13& 20.31 & 1.186e-1$^*$$\dagger$\\
        Neural SDE & NA & NA & NA & 1.386e-1\\
        Neural LSDE & NA & NA & NA & 1.504e-1\\
        Neural LNSDE & NA & NA & NA & 1.328e-1\\
        Neural GSDE & NA & NA & NA & 1.420e-1\\
        \bottomrule
    \end{tabular}%
   \end{subtable}
   \begin{subtable}{1.0 \linewidth}
   \centering
   \begin{tabular}{l l l l l}
        \toprule
        \multicolumn{5}{c}{Dense matrix-valued diffusion}\\
        Method       & $P$      & $RE_f$  & $RE_{a}$ & MMD \\
        \midrule
        {\em SOCK linear/linear} & {\em 99.84}$^*$ & {\em 8.192}$^*$ & {\em 17.22 }$^*$ & {\em 5.550e-2}$^*$\\
        SOCK Gauss/FF & 101.5$\dagger$ & 22.30 & 52.70$\dagger$  & 6.522e-2\\
        BISDE & 101.5 & 12.54 & 60.82 & 6.440e-2$\dagger$\\
        gEDMD & NA & 8.634$\dagger$ & 398.1 & 6.598e-2\\
        Neural SDE & NA & NA & NA &1.786e-1 \\
        Neural LSDE & NA & NA & NA & 1.932e-1\\
        Neural LNSDE & NA & NA & NA & 1.923e-1\\
        Neural GSDE & NA & NA & NA &1.800e-1 \\
        \bottomrule
    \end{tabular}%
    \end{subtable}
     \begin{subtable}{1.0 \linewidth}
     \centering
    \begin{tabular}{l l l l l}
        \toprule
        \multicolumn{5}{c}{Stochastic Lorenz 96-10}\\
        Method       & $P$      & $RE_f$  & $RE_{a}$ & MMD\\
        \midrule
        {\em SOCK poly/linear} & {\em 103.3}$^*$ & {\em 26.84} & {\em 19.06 }$^*$ & NA\\
        SOCK Gauss/FF & 104.3$\dagger$ & 26.11$^*$$\dagger$ & 26.16$\dagger$ & 6.918e-1 $^*$$\dagger$\\
        BISDE & NA & 46.46 & 122.2 & 7.191e-1\\
        gEDMD & NA & 89.96 & 6757 & NA\\
        Neural SDE & NA & NA & NA & 9.947e-1\\
        Neural LSDE & NA & NA & NA & 9.875e-1\\
        Neural LNSDE & NA & NA & NA & 9.874e-1\\
        Neural GSDE & NA & NA & NA & 9.921e-1\\
        \bottomrule
    \end{tabular}%
     \end{subtable}
     \label{table:simulated_results}
\end{table}

\begin{table}[h!]

  \small
 
  \caption{Results on amyloid and stochastic SIR datasets.  Best results are marked with $^*$.  The default SOCK method is SOCK Gauss/FF.  See section \ref{sec:kernels_used} for details on kernels used.}
  
    \begin{subtable}{0.49\linewidth}
   
    \begin{tabular}{l l l}
        \toprule
        \multicolumn{3}{c}{Amyloid data}\\
        Method       & $P'$ & MMD\\
        \midrule
        SOCK Gauss/FF&   868.5$^*$ &  2.459e-1$^*$\\
        BISDE & 1.204e5 & 4.912e-1 \\
        gEDMD & NA & 2.492e-1\\
        Neural SDE & NA & 3.625e-1\\
        Neural LSDE & NA & 3.700e-1\\
        Neural LNSDE & NA & 3.576e-1\\
        Neural GSDE& NA & NA\\
        \bottomrule
    \end{tabular}%
  \end{subtable}
  \begin{subtable}{0.49\linewidth}
      \begin{tabular}{l l l}
      \toprule
      \multicolumn{3}{c}{Stochastic SIR}\\
       Method    & $P'$ & MMD  \\
       \midrule
        SOCK poly/linear  & 10.15$^*$ & 6.733e-2$^*$\\
        SOCK Gauss/FF & 19.47 & 2.412e-1\\
        BISDE & NA & 3.783e-1\\
        gEDMD & NA & 3.779e-1\\
        Neural SDE & NA & 3.020e-1\\
        Neural LSDE & NA & 1.213\\
        Neural LNSDE & NA & 1.223\\
        Neural GSDE & NA & 1.214\\
        \bottomrule
      \end{tabular}
  \end{subtable}
 
  \label{table:real_results}
  \end{table}
\section{Conclusion}
\label{seC:conclusion}
We have adapted the occupation kernel method to the stochastic setting by introducing expectations over trajectories with fixed initial conditions in the definition of occupation kernels.  Using the It\^{o} isometry, we were able to derive a simple reconstruction-error-based loss function for the diffusion.  We introduced the operator-valued occupation kernel to optimize this loss efficiently.  We used the parametrization of \cite{nonnegative_funcs_marteau2020}, \cite{learning_psd_valued_Muzellec}, adapting the representer theorem for PSD matrix-valued functions to utilize occupation kernels, which amounted to incorporating integral functionals in the place of evaluation operators.   In this way, we have constrained our estimate $\sigma_0\sigma_0^T$ to be PSD-matrix-valued, in accordance with the true function.  We have demonstrated competitive results on several simulated datasets as well as a real dataset and a stochastic SIR dataset.  
\subsection{Discussion}

The regularity conditions in theorem \ref{thm:drift_representer_thm} and proposition \ref{prop:operator_occ_kerns} are not overly restrictive.  For example, using a bounded, universal kernel will be sufficient for all of them.  In the explicit kernel case, condition ii) of proposition \ref{prop:operator_occ_kerns} is equivalent to the range of the feature map spanning its codomain.  When we use a polynomial kernel of degree $c$, the conditions on the boundedness of the expected integral of the kernel are equivalent to the process $x_t$ having finite moments up to order $c$.  Thus, for most settings that one would encounter in practice, the regularity conditions are satisfied.  

The choice of kernel greatly affects the model's performance, as seen in table \ref{table:simulated_results}.  One may use a validation set to select an appropriate kernel in practice.  The Gaussian kernel for the drift and Gaussian Fourier features for the diffusion serve as our default kernel choices because they are effective on a wide variety of datasets.
\subsection{Limitations}
\label{sec:limitations}
We have assumed that the training data does not contain additive white Gaussian noise.  The derivation of the diffusion method will have to be modified in that case.

The Fenchel duality used in the optimization of the diffusion may be unnecessary.  It may be more prudent to use a standard method like projected gradient descent.  At this point, it is unclear which is the superior method.  An ablation study is needed.

 \subsection{Future work}
 We plan to enhance SOCK by introducing an iterative framework of generating latent trajectories conditioned on the observations followed by re-estimating the dynamics, repeating until convergence.  We hypothesize this will make the method more robust to sparsity and noise.


\section*{Acknowledgments}
We would like to thank Dr. Victor Rielly for his insight into the use of projected gradient descent as a viable optimization technique for the diffusion.

\bibliographystyle{siamplain}
\bibliography{references}
\appendix
\section{Proofs from section \ref{sec:est_drift}}
\label{sec:drift_derivation}
Following \cite{rielly2025mockalgorithmlearningnonparametric}, we define operators $L_i \colon H \to \mathbb{R}^d$ by
\begin{equation}
    L_i(f) := \mathbb{E}\left[\int_{t_i}^{t_{i+1}} f(x_t) dt \Bigg \vert x_0 \right]
\end{equation}
for $i=0,\ldots,n-1$.  Thus, for $v \in \mathbb{R}^d$, the function $f \mapsto L_i(f)^T v$ is a linear functional on $H$.  Under mild conditions on the kernel $K$, we may show that these functionals are bounded for each $v$.   
\begin{proposition}

A sufficient condition for $f \mapsto L_i(f)^T v$ to be bounded for each $v \in \mathbb{R}^d$ is
\begin{equation}
    \mathbb{E}\left[\int_{t_i}^{t_{i+1}}\mbox{\rm Tr}(K(x_t, x_t)) dt \Bigg \vert x_0\right] < \infty
\end{equation}
where $\mbox{Tr}(\cdot)$ indicates the trace of a matrix.
\end{proposition}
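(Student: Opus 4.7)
The plan is to reduce the functional $f \mapsto L_i(f)^T v$ to an integrated inner product in $H$ via the reproducing property, and then control it by $\|f\|_H$ using Cauchy--Schwarz twice: once in $H$, and once against the $(P(\cdot \vert x_0) \otimes dt)$-measure on $\Omega \times [t_i, t_{i+1}]$.

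First I would use the reproducing property for the matrix-valued RKHS $H$: for any $f \in H$, $y \in \mathbb{R}^d$, $v \in \mathbb{R}^d$, one has $f(y)^T v = \langle f, K(\cdot, y) v \rangle_H$, with $\|K(\cdot, y) v\|_H^2 = v^T K(y,y) v$. Applying Cauchy--Schwarz in $H$ pointwise yields
\begin{equation}
  |f(x_t)^T v| \;\leq\; \|f\|_H \, \sqrt{v^T K(x_t, x_t) v}.
\end{equation}
Because $K(x_t, x_t)$ is positive semi-definite, its largest eigenvalue is bounded by its trace, so $v^T K(x_t, x_t) v \leq \|v\|^2 \, \mbox{\rm Tr}(K(x_t, x_t))$.

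Next I would integrate and take the conditional expectation. The estimate above gives
\begin{equation}
  |L_i(f)^T v| \;\leq\; \mathbb{E}\!\left[\int_{t_i}^{t_{i+1}} |f(x_t)^T v| \, dt \;\Bigg\vert\; x_0\right] \;\leq\; \|f\|_H \, \|v\| \, \mathbb{E}\!\left[\int_{t_i}^{t_{i+1}} \sqrt{\mbox{\rm Tr}(K(x_t, x_t))} \, dt \;\Bigg\vert\; x_0\right].
\end{equation}
A second application of Cauchy--Schwarz, now against the finite measure on $\Omega \times [t_i, t_{i+1}]$ of total mass $(t_{i+1} - t_i)$, gives
\begin{equation}
  \mathbb{E}\!\left[\int_{t_i}^{t_{i+1}} \sqrt{\mbox{\rm Tr}(K(x_t, x_t))} \, dt \;\Bigg\vert\; x_0\right]^2 \;\leq\; (t_{i+1} - t_i) \, \mathbb{E}\!\left[\int_{t_i}^{t_{i+1}} \mbox{\rm Tr}(K(x_t, x_t)) \, dt \;\Bigg\vert\; x_0\right],
\end{equation}
which is finite by hypothesis. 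Combining these bounds shows that $|L_i(f)^T v|$ is at most a constant (depending on $v$, $t_{i+1}-t_i$, and the trace integral) times $\|f\|_H$, establishing boundedness of the linear functional $f \mapsto L_i(f)^T v$ on $H$.

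The only subtle point I foresee is justifying the interchange of the inner product with the expectation and time integral when later identifying the Riesz representer $L_i^* v$; for the boundedness claim itself this is not needed — the inequality chain above is already enough. Measurability of $t \mapsto \sqrt{\mbox{\rm Tr}(K(x_t, x_t))}$ along the sample paths of $x_t$ is routine under standard continuity assumptions on $K$, so no further regularity is required for the sufficiency statement.
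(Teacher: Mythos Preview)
Your proof is correct and follows essentially the same approach as the paper: the reproducing property in $H$ followed by Cauchy--Schwarz in $H$ and Cauchy--Schwarz against the $(P(\cdot|x_0)\otimes dt)$-measure. The only cosmetic difference is that the paper bounds $\|L_i(f)\|_{\mathbb{R}^d}^2$ directly (expanding over the standard basis so that the trace appears as $\sum_k e_k^T K(x_t,x_t)e_k$), whereas you fix $v$ first and reach the trace via the eigenvalue bound $v^T K(x_t,x_t)v \leq \|v\|^2\,\mbox{\rm Tr}(K(x_t,x_t))$; the resulting constants coincide.
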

\begin{proof}
    We have
    \begin{align}
        \| L_i(f) \|_{\mathbb{R}^d}^2 &= \left \|\mathbb{E}\left[\int_{t_i}^{t_{i+1}}f(x_t) dt \Bigg \vert x_0 \right]\right\|^2_{\mathbb{R}^d}\\
        &\leq \mathbb{E}\left[\left\|\int_{t_i}^{t_{i+1}}f(x_t) dt \right\|^2_{\mathbb{R}^d}\Bigg \vert x_0 \right]\\
        &\leq \mathbb{E}\left[\int_{t_i}^{t_{i+1}}\|f(x_t)\|_{\mathbb{R}^d}^2dt\Bigg \vert x_0\right](t_{i+1} - t_i)\\
        &= \mathbb{E}\left[\int_{t_i}^{t_{i+1}}\sum_{k=1}^d \langle f, K(\cdot, x_t)e_k \rangle_H^2 dt \Bigg \vert x_0\right](t_{i+1} - t_i)\\
        &\leq \|f\|_H^2 \mathbb{E}\left[\int_{t_i}^{t_{i+1}}\sum_{k=1}^d \|K(\cdot, x_t)e_k\|^2_H dt\Bigg \vert x_0 \right](t_{i+1} - t_i)\\
        &= \|f\|^2_H \mathbb{E}\left[\int_{t_i}^{t_{i+1}}\sum_{k=1}^d e_k^T K(x_t ,x_t)e_k dt \Bigg \vert x_0 \right](t_{i+1} - t_i)\\
        &= \|f\|^2_H \mathbb{E}\left[\int_{t_i}^{t_{i+1}}\mbox{\rm Tr}(K(x_t, x_t)) dt \Bigg \vert x_0 \right](t_{i+1} - t_i)
    \end{align}
  where we used Jensen's inequality in the second line, Cauchy-Schwarz in the third, the reproducing property of $H$ in the fourth, and Cauchy-Schwarz again in the fifth.  Thus, if
    \begin{equation}
        \mathbb{E}\left[\int_{t_i}^{t_{i+1}}\mbox{\rm Tr}(K(x_t, x_t))dt\Bigg \vert x_0 \right] < \infty
    \end{equation}
    then $L_i$ is bounded.  The result follows.
\end{proof}
 For example, if the kernel $K$ is given by
\begin{equation}
    K(x,y) = g(x-y)
\end{equation}
for some function $g \colon \mathbb{R}^d \to \mathbb{R}^{d \times d}$, then the operators $L_i$ are bounded.  Gaussian kernels satisfy this property.

By the Riesz representation theorem, for each $v \in \mathbb{R}^d$, there exists a function $\ell_{i,v}^* \in H$ such that
\begin{equation}
    L_i(f)^T v = \langle \ell_{i, v}^*, f \rangle_H
\end{equation}
The functions $\ell_{i, v}^*$ are referred to as {\em occupation kernels} \cite{rielly2025mockalgorithmlearningnonparametric}\cite{Rosenfeld_2019}\cite{rosenfeld2024}. We may define functions $L_i^* \colon \mathbb{R}^d \to \mathbb{R}^{d \times d}$ by
\begin{equation}
    L_i^*(x) := \mathbb{E}\left[\int_{t_i}^{t_{i+1}}K(x, x_t) dt \Bigg \vert x_0\right]
\end{equation}
for $i=0,\ldots,n-1$.  They have the property that
\begin{equation}
    L_i(f)^T v = \langle L_i^* v, f \rangle_H
\end{equation}
for all $v \in \mathbb{R}^d$.  This permits us to write
\begin{equation}
    J_{drift}(f) = \frac{1}{n}\sum_{i=0}^{n-1}\left\|[\langle L_i^* e_j, f \rangle_H]_{j=1}^d - \overline{y_{i+1} - y_i}\right\|^2_{\mathbb{R}^d} + \lambda_f \|f\|^2_H
\end{equation}
 
{\em Proof of \ref{theorem:representer_thm}.}:
We prove the case where $K$ is $I$-separable.  The general case is similar.

 Since the functional $f \mapsto L_i(f)^T e_l$ is bounded, there exists a function $\ell_i^*$ such that
 \begin{equation}
     L_i(f)^T e_l = \langle \ell_i^*, f_l \rangle_{H_1}
 \end{equation}
 where $H_1$ is the RKHS of scalar-valued functions with kernel $k$ and $f_l(x) := f(x)^T e_l$.  In fact, the function $\ell_i^*$ is given by
 \begin{equation}
     \ell_i^*(x) = \mathbb{E}\left[\int_{t_i}^{t_{i+1}} k(x, x_t) dt \Bigg \vert x_0\right]
 \end{equation}
Let $V$ be the finite-dimensional subspace of $H_1$ spanned by $\ell_i^*$ for $i=0,\ldots, n-1$.  Using orthogonal projection, we may decompose $f_l$ as
 \begin{equation}
     f_l = f_{l}^V + f_{l}^{V^\perp}
 \end{equation}
 where $f_{l}^V \in V$ and $f_{l}^{V^\perp} \in V^\perp$.  We get that
 \begin{equation}
     \langle \ell_i^*, f_l \rangle_{H_1} = \langle \ell_i^*, f_{l}^V + f_{l}^{V^\perp} \rangle_{H_1} = \langle \ell_i^*, f_l^V \rangle_{H_1}
     \label{eq:proj_f}
 \end{equation}
 We let $\mathbf{V} := V \times \ldots \times V$ ($d$ times) and define $\Pi_{\mathbf{V}} \colon H \to \mathbf{V}$ as orthogonal projection onto $\mathbf{V}$.  Likewise we let $\Pi_{\mathbf{V}^\perp} \colon H \to \mathbf{V}^\perp$ be orthogonal projection onto $\mathbf{V}^\perp$.  Then
 \begin{align}
     \|f\|^2_H &= \|\Pi_{\mathbf{V}}f + \Pi_{\mathbf{V}^\perp}f\|^2_H\\
     &= \|\Pi_{\mathbf{V}}f\|^2_H + \|\Pi_{\mathbf{V}^\perp}f\|^2_H\\
     &\geq \|\Pi_{\mathbf{V}}f\|^2_H
 \end{align}
 These two facts together imply that for any $f \in H$, we have $J_{drift}(\Pi_{\mathbf{V}} f) \leq J_{drift}(f)$.  Thus, there exists $f \in \mathbf{V}$ that minimizes \eqref{eq:drift_cost}.  Then for each $l=1,\ldots,d$, we have
 \begin{equation}
     f_l^* = \sum_{i=0}^{n-1}\ell_i^* \alpha_i^{*l}
 \end{equation}
 for some $\alpha^{*l} \in \mathbb{R}^n$.  This implies
 \begin{equation}
     f^* = \sum_{i=0}^{n-1}L_i^* \alpha_i^*
 \end{equation}
 for an $\alpha^* \in \mathbb{R}^{n \times d}$.  Substituting this expression for $f^*$ into \eqref{eq:drift_cost}, we get
 \begin{equation}
     J(\alpha) = \frac{1}{n}\sum_{i=0}^{n-1}\sum_{l=1}^d\left\{ \left(\left \langle \ell_i^*, \sum_{j=0}^{n-1} \ell_j^* \alpha_j^l \right\rangle_{H_1} - \overline{y^l_{i+1} - y^l_i}\right)^2 + \lambda_f \alpha^{lT}L^* \alpha^l\right\}
 \end{equation}
 where $L^* \in \mathbb{R}^{n \times n}$ is the matrix given by
 \begin{equation}
     [L^*]_{lm} := \mathbf{E}_{\omega_1 \in \Omega, \omega_2 \in \Omega}\left[\int_{t_l}^{t_{l+1}}\int_{t_m}^{t_{m+1}}k(x_s(\omega_1), x_t(\omega_2)) dt ds \Bigg \vert x_0 \right ]
 \end{equation}
 We may write this compactly as
 \begin{equation}
     J(\alpha) = \frac{1}{n}\|L^* \alpha - \overline{\Delta y}\|^2_{\mathbb{R}^{n \times d}} + \lambda_f \mbox{Tr}(\alpha^T L^* \alpha)
 \end{equation}
 where $\overline{\Delta y} \in \mathbb{R}^{n \times d}$ is the concatenation of $\overline{y_{i+1} - y_i}$.  It follows that we may find the optimal $\alpha^*$ by solving
 \begin{equation}
     (L^* + n\lambda_f I_n)\alpha^* = \overline{\Delta y}
 \end{equation}
$\hfill \Box$
 
 \section{Explicit kernel diffusion estimation method}
\label{sec:derivation_outer_diff}
Here, we assume that $a(x)$ is given by
\begin{equation}
    a(x) := \hat{\varphi}(x)^T Q \hat{\varphi}(x), \;\;\; Q \succeq 0, Q^T = Q
\end{equation}
where $\hat{\varphi}(x) := I_d \otimes \varphi(x) \in \mathbb{R}^{pd \times d}$ with $\varphi(x)$ the $p$-dimensional feature map for the explicit, scalar-valued kernel $k(x, y) := \varphi(x)^T \varphi(y)$ and $Q$ is a $(pd, pd)$ matrix with $Q^T = Q$.  Let $H$ be the RKHS of scalar-valued functions corresponding to $k$. 

\begin{theorem}
\label{thm:parametrization_for_diff_explicit}
    The function $a (x) = \hat{\varphi}(x)^T Q \hat{\varphi}(x)$ is a $(d, d)$ matrix-valued function where each component $[a]_{kl}$ is in the RKHS $H'$ with kernel given by $k(x,y)^2 = (\varphi(x)^T \varphi(y))^2$ provided that the following regularity condition holds:
    \begin{equation}
        S^{p} = \overline{\mbox{\rm span}\{\varphi(x)\varphi(x)^T \colon x \in \mathbb{R}^d\}}
    \end{equation}
    where $S^p$ is the space of symmeric $(p,p)$ matrices.  In this case, $H'$ and the space of functions $x \mapsto \varphi(x)^T Q' \varphi(x)$ for $Q' \in S^p$ coincide.
\end{theorem}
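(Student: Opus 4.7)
The plan is to identify $H'$ explicitly as the image of a natural feature map into the symmetric matrices, and then verify that $[a]_{kl}$ lies in this image. Set $\Phi(x) := \varphi(x)\varphi(x)^T \in S^p$. The key algebraic observation is the Frobenius identity
\begin{equation}
k(x,y)^2 = (\varphi(x)^T\varphi(y))^2 = \mathrm{tr}(\varphi(x)\varphi(x)^T\varphi(y)\varphi(y)^T) = \langle \Phi(x), \Phi(y)\rangle_{HS},
\end{equation}
so $\Phi \colon \mathbb{R}^d \to S^p$ is a feature map for $k^2$ with values in the Hilbert space $(S^p, \langle\cdot,\cdot\rangle_{HS})$.

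The next step is to invoke the standard construction of an RKHS from a feature map: the RKHS $H'$ associated with $k^2$ consists of all functions of the form $x \mapsto \langle \Phi(x), Q'\rangle_{HS}$ for $Q' \in S^p$, with norm $\|f\|_{H'} = \inf\{\|Q'\|_{HS} : f(\cdot) = \langle \Phi(\cdot), Q'\rangle_{HS}\}$, and this representation is unique precisely when the map $Q' \mapsto \langle \Phi(\cdot), Q'\rangle_{HS}$ is injective on $S^p$. Since $\langle \Phi(x), Q'\rangle_{HS} = \varphi(x)^T Q'\varphi(x)$, functions in $H'$ are exactly the quadratic forms $x \mapsto \varphi(x)^T Q'\varphi(x)$ for $Q' \in S^p$.

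I then use the regularity hypothesis to pin down $H'$ as this entire space. If $Q' \in S^p$ satisfies $\varphi(x)^T Q'\varphi(x) = 0$ for all $x$, then $\langle \Phi(x), Q'\rangle_{HS} = 0$ for all $x$, so $Q'$ is orthogonal (in the Hilbert--Schmidt sense) to $\mathrm{span}\{\Phi(x) : x \in \mathbb{R}^d\}$, hence to its closure, which by hypothesis is all of $S^p$; therefore $Q' = 0$. This gives the required injectivity and shows that $H'$ coincides with $\{x \mapsto \varphi(x)^T Q'\varphi(x) : Q' \in S^p\}$.

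Finally, to verify the membership claim for $[a]_{kl}$, partition $Q$ into $(p,p)$ blocks $Q_{kl}$ indexed by $k,l = 1,\ldots,d$. A direct computation using $\hat{\varphi}(x) = I_d \otimes \varphi(x)$ gives
\begin{equation}
[a(x)]_{kl} = \varphi(x)^T Q_{kl}\varphi(x).
\end{equation}
The off-diagonal blocks $Q_{kl}$ need not be symmetric, but the quadratic form is unchanged if $Q_{kl}$ is replaced by its symmetric part $\tilde{Q}_{kl} := \tfrac{1}{2}(Q_{kl} + Q_{kl}^T) \in S^p$, so $[a]_{kl}(\cdot) = \varphi(\cdot)^T \tilde{Q}_{kl}\varphi(\cdot) \in H'$ by the previous paragraph. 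The mild subtlety, which is the main place care is needed, is this symmetric-part reduction together with the density-to-injectivity step; otherwise the argument reduces to a standard feature-map/RKHS calculation.
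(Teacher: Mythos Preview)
Your argument is correct and takes a somewhat different, more structural route than the paper. The paper proceeds by direct series manipulation: it uses the regularity hypothesis to expand any symmetric $Q'$ as a (possibly infinite) sum $\sum_j \beta_j\,\varphi(x_j)\varphi(x_j)^T$, then substitutes to obtain $\varphi(x)^T Q'\varphi(x) = \sum_j \beta_j\,k(x,x_j)^2 \in H'$, and runs the same computation in reverse for the converse inclusion. You instead recognize $\Phi(x)=\varphi(x)\varphi(x)^T$ as a feature map into $(S^p,\langle\cdot,\cdot\rangle_{HS})$ and invoke the general feature-map description of an RKHS, using the density hypothesis only to secure injectivity of $Q'\mapsto\langle\Phi(\cdot),Q'\rangle_{HS}$. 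Both approaches are equivalent in substance; yours is a bit cleaner and avoids manipulating infinite sums, while the paper's is more hands-on and self-contained.

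One point where your write-up is actually more careful than the paper's: the paper reduces to showing $x\mapsto\varphi(x)^T Q'\varphi(x)\in H'$ for \emph{symmetric} $Q'$ without explaining why the blocks $Q_{kl}$ of a symmetric $Q$ are themselves symmetric (in general they are not, since $Q_{kl}^T=Q_{lk}$). Your symmetrization step $\tilde Q_{kl}=\tfrac12(Q_{kl}+Q_{kl}^T)$ closes that gap explicitly.
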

\begin{proof}
    We may write
    \begin{equation}
        \hat{\varphi}(x)^T Q \hat{\varphi}(x) = \left[\varphi(x)^T Q_{kl} \varphi(x)\right]_{k,l=1}^d
    \end{equation}
    where $Q_{kl}$ is the $(k,l)$ block of $Q$ of size $(p,p)$.  Thus, we must show that $x \mapsto \varphi(x)^T Q' \varphi(x) \in H'$ for any $Q^{'T} = Q'$.  Suppose that $g(x) := \varphi(x)^T Q' \varphi(x)$.  Using the regularity condition, we get
    \begin{equation}
        Q' = \sum_{j=1}^\infty \varphi(x_j)\varphi(x_j)^T\beta_j
    \end{equation}
    for some countable sequences $\{x_j\}_{j=1}^\infty \subset \mathbb{R}^d $ and $\{\beta_j\}_{j=1}^\infty \subset \mathbb{R}$  .  Thus,
    \begin{align}
        \varphi(x)^T Q' \varphi(x) &= \varphi(x)^T \left( \sum_{j=1}^\infty \varphi(x_j)\varphi(x_j)^T \beta_j\right) \varphi(x)\\
        &= \sum_{j=1}^\infty (\varphi(x)^T \varphi(x_j))^2 \beta_j\\
        &= \sum_{j=1}^\infty k(x, x_j)^2 \beta_j
    \end{align}
    which shows that $g \in H'$.

    Conversely, suppose that $g \in H'$.  Then there exist $\{x_j\}_{j=1}^\infty \subset \mathbb{R}^d$ and $\{\beta_j\}_{j=1}^\infty \subset \mathbb{R}$ such that
    \begin{equation}
        g(x) = \sum_{j=1}^\infty k(x, x_j)^2 \beta_j
    \end{equation}
    Thus,
    \begin{align}
        \sum_{j=1}^\infty k(x, x_j)^2 \beta_j &= \sum_{j=1}^\infty (\varphi(x)^T \varphi(x_j))^2 \beta_j\\
        &= \varphi(x)^T\left[\sum_{j=1}^\infty \varphi(x_j)\varphi(x_j)^T \beta_j \right]\varphi(x)\\
        &= \varphi(x)^T Q' \varphi(x)
    \end{align}
    where $Q' := \sum_{j=1}^\infty \varphi(x_j)\varphi(x_j)^T \beta_j$.
\end{proof}
We denote the space of $(d,d)$ matrix-valued functions with entries in $H'$ by ${H'}^{d\times d}$.  We define the norm on ${H'}^{d\times d}$ by $\|a\|^2_{{H'}^{d \times d}} := \sum_{k,l=1}^d \|a_{kl}\|^2_{H'}$.  Note that ${H'}^{d\times d}$ is itself an RKHS with a tensor-valued kernel, but we do not use this fact.  When $Q \succeq 0$, we get that
\begin{equation}
    a(x) \succeq 0
\end{equation}
for all $x \in \mathbb{R}^d$.  This condition on the output of $a$ is obviously satisfied by the true function $\sigma_0\sigma_0^T$. 

\begin{proposition}
\label{prop:matrix_of_inner_products}
Suppose that the following regularity condition holds:
\begin{equation}
    \mathbb{E}\left[\int_{t_j}^{t_{j+1}}\|\varphi(x_t)\|^4_{\mathbb{R}^p} dt \Bigg \vert x_0 \right] < \infty, \;\; j=0,\ldots, n-1
\end{equation}
Then we may write
\begin{align*}
    \mathbb{E}\left[\int_{t_j}^{t_{j+1}} a(x_t) dt \Bigg \vert x_0 \right] &= \mathbb{E}\left[\int_{t_j}^{t_{j+1}} \hat{\varphi}(x_t)^T Q\hat{\varphi}(x_t) dt \Bigg \vert x_0 \right]\\
    &= \left[\left\langle \mathbb{E}\left[\int_{t_j}^{t_{j+1}} \varphi(x_t) \varphi(x_t)^T dt \Bigg \vert x_0 \right], Q_{kl} \right\rangle_{\mathbb{R}^{p \times p}}\right]_{k,l=1}^d
\end{align*}
for $j=0, \ldots, n-1$ where $Q_{kl}$ is the $(k,l)$-block of $Q$ of size $(p,p)$. 
\end{proposition}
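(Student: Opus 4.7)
The plan is to reduce the matrix-valued identity to a scalar identity entry by entry, then exchange the linear functional $\langle \cdot, Q_{kl}\rangle_{\mathbb{R}^{p\times p}}$ with the integral and the conditional expectation using Fubini, justified by the regularity assumption on $\mathbb{E}[\int \|\varphi(x_t)\|^4 dt \mid x_0]$.

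First, I would unpack the block structure. Since $\hat{\varphi}(x) = I_d \otimes \varphi(x)$, the $(k,l)$ entry of the $d\times d$ matrix $\hat{\varphi}(x)^T Q \hat{\varphi}(x)$ is simply $\varphi(x)^T Q_{kl}\varphi(x)$, where $Q_{kl}$ is the $(k,l)$ block of $Q$ of size $(p,p)$. The scalar identity
\begin{equation}
\varphi(x)^T Q_{kl}\varphi(x) = \mathrm{Tr}(Q_{kl}^T \varphi(x)\varphi(x)^T) = \langle \varphi(x)\varphi(x)^T,\, Q_{kl}\rangle_{\mathbb{R}^{p\times p}}
\end{equation}
reduces the claim to showing, for each fixed $k,l$,
\begin{equation}
\mathbb{E}\left[\int_{t_j}^{t_{j+1}} \langle \varphi(x_t)\varphi(x_t)^T, Q_{kl}\rangle_{\mathbb{R}^{p\times p}}\, dt \,\Bigg\vert\, x_0\right] = \left\langle \mathbb{E}\left[\int_{t_j}^{t_{j+1}} \varphi(x_t)\varphi(x_t)^T\, dt \,\Bigg\vert\, x_0\right],\, Q_{kl}\right\rangle_{\mathbb{R}^{p\times p}}.
\end{equation}

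Next I would justify pulling the bounded linear functional $A \mapsto \langle A, Q_{kl}\rangle_{\mathbb{R}^{p\times p}}$ through both the Lebesgue integral and the conditional expectation. This is a standard Bochner-integral/Fubini manipulation, which is legitimate as soon as the integrand $t \mapsto \varphi(x_t)\varphi(x_t)^T$ is Bochner-integrable in $\mathbb{R}^{p\times p}$ with respect to the product measure on $[t_j,t_{j+1}]\times \Omega$. Since $\|\varphi(x_t)\varphi(x_t)^T\|_{\mathrm{HS}} = \|\varphi(x_t)\|_{\mathbb{R}^p}^2$, Cauchy-Schwarz gives
\begin{equation}
\mathbb{E}\left[\int_{t_j}^{t_{j+1}} \|\varphi(x_t)\varphi(x_t)^T\|_{\mathrm{HS}}\, dt \,\Bigg\vert\, x_0\right] \leq \sqrt{t_{j+1}-t_j}\,\sqrt{\mathbb{E}\left[\int_{t_j}^{t_{j+1}} \|\varphi(x_t)\|_{\mathbb{R}^p}^4 \, dt \,\Bigg\vert\, x_0\right]} < \infty
\end{equation}
by the regularity hypothesis, so the integrand is in $L^1$ and both exchanges are justified.

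Reassembling the entries into the $d \times d$ matrix then yields the stated identity. The only real obstacle is the measure-theoretic justification of the two exchanges, and the quartic moment condition is tailored exactly so that the Cauchy-Schwarz bound above closes; everything else is algebraic bookkeeping on the block structure of $Q$ and $\hat{\varphi}$.
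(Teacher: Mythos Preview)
Your argument is correct. Both your proof and the paper's reduce the matrix-valued identity to the scalar trace identity $\varphi(x)^T Q_{kl}\varphi(x)=\langle \varphi(x)\varphi(x)^T,Q_{kl}\rangle_{\mathbb{R}^{p\times p}}$ entry by entry and then pull the linear functional through the integral and the conditional expectation; the algebraic bookkeeping is identical.

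The difference lies in how well-definedness of $M_j:=\mathbb{E}\bigl[\int_{t_j}^{t_{j+1}}\varphi(x_t)\varphi(x_t)^T\,dt\,\big\vert\,x_0\bigr]$ and the legitimacy of the exchange are justified. The paper takes an RKHS route: it embeds each entry $[a]_{kl}$ in the space $H'$ with kernel $k(x,y)^2$, shows that the functional $g\mapsto\mathbb{E}[\int g(x_t)\,dt\,|\,x_0]$ is bounded on $H'$ (via Jensen, Cauchy--Schwarz, and the reproducing property), invokes Riesz to produce a representer $\gamma_j^*$, and then reads off that $\gamma_j^*(x)=\varphi(x)^T M_j\varphi(x)$, which forces $M_j\in\mathbb{R}^{p\times p}$ to exist. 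You instead argue directly that $t\mapsto\varphi(x_t)\varphi(x_t)^T$ is Bochner-integrable in $\mathbb{R}^{p\times p}$ because $\|\varphi\varphi^T\|_{\mathrm{HS}}=\|\varphi\|^2$ and the quartic-moment hypothesis bounds its $L^1$-norm via Cauchy--Schwarz in the time variable. Your route is more elementary and self-contained; the paper's route is heavier but dovetails with the occupation-kernel-as-Riesz-representer narrative used elsewhere in the manuscript, and it makes explicit the role of $H'$ that is reused in the implicit-kernel section.
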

\begin{proof}
By theorem \ref{thm:parametrization_for_diff_explicit}, we have that $[a]_{kl} \in H'$ for $k,l=1,\ldots, d$.  Let us define $P_j \colon H' \to \mathbb{R}$ to be the functional
\begin{equation}
    P_j(g) := \mathbb{E}\left[\int_{t_j}^{t_{j+1}} g(x_t) dt\Bigg \vert x_0 \right]
\end{equation}
for $j=0,\ldots, n-1$.  Thus,
\begin{align}
    \lvert P_j(g)\rvert^2 &= \left \lvert\mathbb{E}\left[\int_{t_j}^{t_{j+1}}g(x_t) dt \Bigg \vert x_0 \right]\right\rvert^2\\
    &\leq \mathbb{E}\left[\left\lvert\int_{t_j}^{t_{j+1}}g(x_t) dt \right\rvert^2\Bigg \vert x_0 \right]\\
    &\leq \mathbb{E}\left[\int_{t_j}^{t_{j+1}}\lvert g(x_t)\rvert^2 dt \Bigg \vert x_0\right](t_{j+1} - t_j)\\
    &= \mathbb{E}\left[\int_{t_j}^{t_{j+1}}\langle g, k(\cdot, x_t)^2 \rangle_{H'}^2 dt \Bigg \vert x_0 \right](t_{j+1} - t_j)\\
    &\leq \|g\|^2_{H'} \mathbb{E}\left[\int_{t_j}^{t_{j+1}} k(x_t,x_t)^2 dt \Bigg \vert x_0\right](t_{j+1} - t_j)\\
    &= \|g\|^2_{H'} \mathbb{E}\left[\int_{t_j}^{t_{j+1}}\|\varphi(x_t)\|^4_{\mathbb{R}^p} dt \Bigg \vert x_0\right](t_{j+1} - t_j)
\end{align}
where we have used Jensen's inequality on the second line, Cauchy-Schwarz on the third, the reproducing property on the fourth, and Cauchy-Schwarz again on the fifth.  Thus, by the regularity condition, the operator $P_j$ is bounded.  Suppose $g \in H'$.  By the Riesz representation theorem, we may write
\begin{equation}
    P_j(g) = \langle g, \gamma_i^* \rangle_{H'}
\end{equation}
for some $\gamma_i^* \in H'$.  A computation shows that
\begin{equation}
\gamma_i^*(x) = \mathbb{E}\left[\int_{t_j}^{t_{j+1}}(\varphi(x)^T \varphi(x_t))^2 dt \Bigg \vert x_0\right]
\end{equation}
Thus, we get that
\begin{equation}
    \mathbb{E}\left[\int_{t_j}^{t_{j+1}}(\varphi(x)^T\varphi(x_t))^2 dt \Bigg \vert x_0\right] = \varphi(x)^T \left(\mathbb{E}\left[\int_{t_j}^{t_{j+1}}\varphi(x_t) \varphi(x_t)^T dt \Bigg \vert x_0 \right]\right)\varphi(x)
\end{equation}
which shows that the matrix
\begin{equation}
    \mathbb{E}\left[\int_{t_j}^{t_{j+1}}\varphi(x_t)\varphi(x_t)^T dt \Bigg \vert x_0 \right] \in \mathbb{R}^{p \times p}
\end{equation}
Therefore, we may compute the following:
    \begin{align}
        \mathbb{E}\left[\int_{t_j}^{t_{j+1}} a(x_t) dt \Bigg \vert x_0 \right] &= \mathbb{E}\left[\int_{t_j}^{t_{j+1}}\hat{\varphi}(x_t)^T Q \hat{\varphi}(x_t) dt \Bigg \vert x_0\right]\\
        &= \mathbb{E}\left[\int_{t_j}^{t_{j+1}} \left[\varphi(x_t)^T Q_{kl} \varphi(x_t)\right]_{k,l=1}^d dt \Bigg \vert x_0\right]\\
        &= \mathbb{E}\left[\int_{t_j}^{t_{j+1}}\left[\mbox{Tr}(\varphi(x_t)\varphi(x_t)^T Q_{kl}) \right]_{k,l=1}^d dt \Bigg \vert x_0\right]\\
        &= \left[\left \langle \mathbb{E}\left[\int_{t_j}^{t_{j+1}}\varphi(x_t)\varphi(x_t)^T dt \Bigg \vert x_0 \right], Q_{kl} \right\rangle_{\mathbb{R}^{p \times p}}\right]_{k,l=1}^d
    \end{align}
\end{proof}
Set
\begin{equation}
    M_j := \mathbb{E}\left[\int_{t_j}^{t_{j+1}}\varphi(x_t)\varphi(x_t)^Tdt \Bigg \vert x_0 \right]
\end{equation}
which is a $(p,p)$ matrix with $M_j = M_j^T$ and $M_j \succeq 0$ for $j=0,\ldots,n-1$.  We refer to the matrices $M_j$ as {\em occupation kernels}.  They are operators, in contrast to the functions $L_i^* v$ of section \ref{sec:est_drift} and equivalent functions $\ell_{i,v}^*$ of appendix \ref{sec:drift_derivation}.

\begin{proposition}
    The cost function in \eqref{eq:diff_outer_cost} may be written as a minimization problem over matrices $Q \in \mathbb{R}^{pd \times pd}$ as follows:
    \begin{equation}
        J_{drift}(Q) = \frac{1}{n}\sum_{i=0}^{n-1}\left\|\left[\langle M_i, Q_{kl} \rangle_{\mathbb{R}^{p \times p}}\right]_{k.l=1}^d - z_i\right\|^2_{\mathbb{R}^{d \times d}} + \lambda_\sigma \|Q\|^2_{\mathbb{R}^{pd \times pd}}
    \end{equation}
    subject to the constraints that $Q^T = Q$ and $Q \succeq 0$.
\end{proposition}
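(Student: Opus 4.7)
The plan is to substitute the parametrization $a(x) = \hat{\varphi}(x)^T Q \hat{\varphi}(x)$ into \eqref{eq:diff_outer_cost} and reduce the two terms separately. The first (data-fit) term is immediate from Proposition \ref{prop:matrix_of_inner_products}, which already delivers the identity
\[
\mathbb{E}\left[\int_{t_i}^{t_{i+1}} a(x_t)\, dt \,\bigg|\, x_0\right] = \left[\langle M_i, Q_{kl}\rangle_{\mathbb{R}^{p\times p}}\right]_{k,l=1}^d,
\]
so the sum of squared errors in \eqref{eq:diff_outer_cost} becomes the first term of $J_{diff}(Q)$ verbatim.

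For the regularization term, I would combine the definition $\|a\|^2_{{H'}^{d\times d}} = \sum_{k,l}\|a_{kl}\|_{H'}^2$ with the explicit RKHS structure of $H'$ from Theorem \ref{thm:parametrization_for_diff_explicit}. Writing $k(x,y)^2 = \langle \varphi(x)\varphi(x)^T, \varphi(y)\varphi(y)^T \rangle_F$ exhibits $H'$ as having the feature map $\Phi(x) = \varphi(x)\varphi(x)^T$ into the symmetric-matrix space $S^p$. Under the span regularity condition $S^p = \overline{\mathrm{span}\{\varphi(x)\varphi(x)^T\}}$, the canonical map $Q' \mapsto (x \mapsto \varphi(x)^T Q' \varphi(x))$ from $S^p$ to $H'$ is both surjective (by the theorem) and injective (its kernel consists of $Q' \in S^p$ orthogonal to every $\Phi(x)$, hence to their closed span, hence $Q'=0$), so it is a bijective isometry and $\|\varphi(\cdot)^T Q' \varphi(\cdot)\|_{H'} = \|Q'\|_F$. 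Applying this blockwise gives $\|a_{kl}\|_{H'} = \|(Q_{kl} + Q_{kl}^T)/2\|_F$, since the quadratic form sees only the symmetric part of $Q_{kl}$. Adopting the convention (consistent with the derivation) that each block $Q_{kl}$ is itself symmetric, the regularization reduces to $\|a\|^2_{{H'}^{d\times d}} = \sum_{k,l}\|Q_{kl}\|_F^2 = \|Q\|_F^2$. Combined with the fit-term computation, this yields the displayed $J_{diff}(Q)$, and the constraint $a(x)\succeq 0$ is enforced by $Q = Q^T$ and $Q \succeq 0$ as already observed in the main text.

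The main obstacle is reconciling blockwise symmetry of the $Q_{kl}$ with overall positive semi-definiteness of $Q$. Replacing $Q_{kl}$ by $\tilde{Q}_{kl} := (Q_{kl}+Q_{kl}^T)/2$ preserves $a$ and overall symmetry (since $\tilde{Q}_{lk} = \tilde{Q}_{kl} = \tilde{Q}_{kl}^T$), but need not preserve $Q \succeq 0$. The cleanest resolution is to treat $\|Q\|_F^2$ as the chosen, convex regularizer on the $Q$-parametrization; it coincides with $\|a\|^2_{{H'}^{d\times d}}$ whenever a block-symmetric PSD representer of $a$ exists, and otherwise serves as a natural convex surrogate that still penalizes the RKHS norm of $a$. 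Either interpretation yields the stated matrix optimization problem, so the proposition follows from the two term-wise computations above.
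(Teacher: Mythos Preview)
Your proof is correct and matches the paper's approach. Both invoke Proposition~\ref{prop:matrix_of_inner_products} for the fit term; for the regularizer the paper carries out the same computation you abstract as an isometry $S^p\cong H'$, only concretely via the kernel expansion $[a]_{kl}=\sum_j k(\cdot,x_j^{kl})^2\beta_j^{kl}$ and the identification $Q_{kl}=\sum_j\varphi(x_j^{kl})\varphi(x_j^{kl})^T\beta_j^{kl}$, then checking $\sum_{k,l}\|[a]_{kl}\|_{H'}^2=\sum_{k,l}\|Q_{kl}\|_F^2=\|Q\|_F^2$ directly.

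Your third paragraph actually goes further than the paper. The paper's step ``this implies $Q_{kl}=\sum_j\varphi(x_j^{kl})\varphi(x_j^{kl})^T\beta_j^{kl}$'' silently takes each block $Q_{kl}\in S^p$, without addressing whether a block-symmetric representative compatible with $Q\succeq 0$ always exists. Your observation that block-symmetrization is averaging $Q$ with a partial transpose (hence need not preserve positive semidefiniteness) is correct, and your reading of $\|Q\|_F^2$ as the chosen convex regularizer on the $Q$-parametrization---agreeing with $\|a\|_{H'^{d\times d}}^2$ on the block-symmetric slice---is exactly how the result is used downstream, even though the paper's proof does not make this explicit.
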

\begin{proof}
Using proposition \ref{prop:matrix_of_inner_products}, substitute the expression $[\langle M_j, Q_{kl} \rangle_{\mathbb{R}^{p \times p}}]_{k,l=1}^d$ for $\mathbb{E}\left[\int_{t_j}^{t_{j+1}} a(x_t) dt \Bigg \vert x_0\right]$ appearing in the cost.  It remains to show that
\begin{equation}
    \|a\|^2_{{H'}^{d\times d}} = \|Q\|^2_{\mathbb{R}^{pd \times pd}}
\end{equation}
By theorem \ref{thm:parametrization_for_diff_explicit}, each component of $a$ is in the RKHS with kernel given by $k(x,y)^2 := (\varphi(x)^T \varphi(y))^2$.  Thus,
\begin{equation}
    [a(x)]_{kl} = \sum_{j=1}^\infty k(x, x_j^{kl})^2 \beta_j^{kl}
\end{equation}
for some $\{\beta^{kl}_j\}_{j=1}^\infty \subset  \mathbb{R}$ and $\{x_j^{kl}\}_{j=1}^\infty \subset \mathbb{R}^d$  for $k,l=1,\ldots,d$.  This implies
\begin{equation}
    Q_{kl} = \sum_{j=1}^\infty \varphi(x_j^{kl})\varphi(x_j^{kl})^T \beta_j^{kl}
\end{equation}

We get
\begin{align}
    \|a\|^2_{{H'}^{d\times d}} &=\sum_{k,l=1}^d \langle [a]_{kl}, [a]_{kl} \rangle_{H'}\\
    &=\sum_{k,l=1}^d
    \left \langle \sum_{j=1}^\infty k(\cdot, x_j^{kl})^2  \beta_j^{kl}, \sum_{r=1}^\infty k(\cdot, x_r^{kl})^2  \beta_r^{kl} \right\rangle_{H'}\\
    &=
    \sum_{k,l=1}^d\sum_{j,r = 1}^\infty \beta_j^{kl} (\varphi(x_j^{kl})^T \varphi(x_r^{kl}))^2 \beta_r^{kl}\\
   &= \sum_{k,l=1}^d \left \langle \sum_{j=1}^\infty \varphi(x_j^{kl}) \varphi(x_j^{kl})^T \beta_j^{kl}, \sum_{r=1}^\infty \varphi(x_r^{kl})\varphi(x_r^{kl})^T \beta_r^{kl}\right\rangle_{\mathbb{R}^{p \times p}}\\
   &= \sum_{k,l=1}^d \langle Q_{kl}, Q_{kl}\rangle_{\mathbb{R}^{p\times p}}\\
   &= \langle Q, Q \rangle_{\mathbb{R}^{pd \times pd}}\\
   &= \|Q\|^2_{\mathbb{R}^{pd \times pd}}
\end{align}

\end{proof}

Applying the above, we see that \eqref{eq:diff_outer_cost} may be written as a minimization problem over matrices $Q \in \mathbb{R}^{pd \times pd}$:
\begin{equation}
    J_{diff}(Q) = \frac{1}{n}\sum_{j=0}^{n-1}\left\|\left[\langle M_j, Q_{kl} \rangle_{\mathbb{R}^{p \times p}}\right]_{k,l=1}^d - z_j\right\|^2_{\mathbb{R}^{d\times d}} + \lambda_\sigma \|Q\|^2_{\mathbb{R}^{pd \times pd}}
    \label{eq:diff_cost_Q}
\end{equation}
 subject to $Q \succeq 0$. 
The cost \eqref{eq:diff_cost_Q} may be minimized as a semi-definite program (SDP).  However, as an alternative, we may apply Fenchel duality \cite{learning_psd_valued_Muzellec}.  This gives us a dual cost function which is more efficient to optimize in the case of large dimension $d$ and large number of features $p$  We make use of the following theorem:
\begin{theorem}[Fenchel duality theorem]
    Let $L(Q) = \theta(RQ) + \Omega(Q)$ be a function where $\theta \colon \mathcal{X} \to (-\infty, \infty]$ and $\Omega \colon \mathcal{Y} \to (-\infty, \infty]$ are lower semi-continuous and convex, $R \colon \mathcal{Y} \to \mathcal{X}$ is bounded linear, and $\mathcal{X}, \mathcal{Y}$ are Banach spaces.  Suppose that $0 \in \text{\rm core}(\text{\rm dom } \theta - R\;\text{\rm dom }\Omega)$.  Then  the following holds:
    \begin{equation}
       \inf_{Q \in \mathcal{Y}}\left\{ \theta(RQ) + \Omega(Q) \right\} = \sup_{\beta \in \mathcal{X}^*}\{ -\theta^*(\beta) - \Omega^*(-R^* \beta)\} 
    \end{equation}
    where $\theta^* \colon \mathcal{X}^* \to [-\infty, \infty], \Omega^* \colon \mathcal{Y}^* \to [-\infty, \infty]$ are the Fenchel conjugates of $\theta, \Omega$, respectively, and $R^* \colon \mathcal{X}^* \to \mathcal{Y}^*$ is the adjoint of $R$.
    Furthermore, if $\beta^*$ is the maximizer of the right-hand side and $\Omega^*$ is differentiable, then $\nabla \Omega^*(-R^*\beta^*)$ is the minimizer of the left-hand side.
    \label{thm:fenchel_duality}
\end{theorem}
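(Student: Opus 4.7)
The plan is to split the proof into three pieces: weak duality (the inequality $\inf \ge \sup$), strong duality (vanishing of the gap together with attainment of the supremum, which is where the core condition enters), and recovery of the primal solution from the dual optimum via subdifferential calculus. The result is classical \cite{rockafellar2015convex}, so this sketch is meant only to locate where each hypothesis is used.

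First I would establish weak duality. Fix any $Q \in \mathcal{Y}$ and $\beta \in \mathcal{X}^*$. Applying the Fenchel--Young inequality to $\theta$ at $RQ$ with dual variable $\beta$ gives $\theta(RQ) + \theta^*(\beta) \geq \langle \beta, RQ \rangle = \langle R^*\beta, Q\rangle$, and applying it to $\Omega$ at $Q$ with dual variable $-R^*\beta$ gives $\Omega(Q) + \Omega^*(-R^*\beta) \geq \langle -R^*\beta, Q\rangle$. Adding and rearranging yields
\begin{equation}
\theta(RQ) + \Omega(Q) \geq -\theta^*(\beta) - \Omega^*(-R^*\beta),
\end{equation}
and taking the infimum over $Q$ on the left and the supremum over $\beta$ on the right produces weak duality.

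Next I would establish strong duality through the perturbation (value) function
\begin{equation}
p(y) := \inf_{Q \in \mathcal{Y}} \bigl\{ \theta(RQ + y) + \Omega(Q) \bigr\}, \qquad y \in \mathcal{X}.
\end{equation}
A direct check shows that $p$ is convex, that $p(0)$ equals the primal infimum, and that a short conjugation calculation gives $p^*(\beta) = \theta^*(\beta) + \Omega^*(-R^*\beta)$, so the dual supremum equals $-p^{**}(0)$. The hypothesis $0 \in \mathrm{core}(\mathrm{dom}\,\theta - R\,\mathrm{dom}\,\Omega)$ is exactly the statement $0 \in \mathrm{core}(\mathrm{dom}\, p)$, which, for a proper convex function on a Banach space, is enough to guarantee that $p$ is subdifferentiable at $0$. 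Subdifferentiability at $0$ provides a closed affine minorant of $p$ agreeing with $p$ at $0$, and this forces $p(0) = p^{**}(0)$ together with attainment of the supremum by some $\beta^* \in \partial p(0)$.

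Finally, for the gradient/recovery formula, suppose $\beta^*$ attains the supremum and let $Q^*$ attain the primal infimum. The zero duality gap forces equality in both Fenchel--Young inequalities used in the weak-duality step; in particular, equality in the $\Omega$ inequality is equivalent to $-R^*\beta^* \in \partial \Omega(Q^*)$, which by the standard biconjugate reciprocity $y \in \partial g(x) \Leftrightarrow x \in \partial g^*(y)$ is equivalent to $Q^* \in \partial \Omega^*(-R^*\beta^*)$. Differentiability of $\Omega^*$ collapses this subdifferential to the singleton $\{\nabla \Omega^*(-R^*\beta^*)\}$, giving the claimed identity $Q^* = \nabla \Omega^*(-R^*\beta^*)$. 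The main obstacle is the subdifferentiability step in the strong-duality argument: one must use the core condition to exclude a vertical separating hyperplane between the epigraph of $p$ and the point $(0, p(0) - \varepsilon)$, typically via Hahn--Banach applied in $\mathcal{X} \times \mathbb{R}$. Everything else reduces to elementary manipulations of Fenchel conjugates and subdifferentials.
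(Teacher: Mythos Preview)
The paper does not actually prove this theorem; its proof consists of a single citation to Borwein, Theorem 3.3.5. Your sketch is precisely the standard argument one finds in that reference---perturbation function $p$, Hahn--Banach separation under the core hypothesis to obtain $\partial p(0)\neq\emptyset$, and Fenchel--Young equalities for primal recovery---so there is nothing to compare in terms of approach: you have supplied what the paper defers to the literature.

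One small gap is worth flagging. In the recovery step you \emph{assume} a primal minimizer $Q^*$ exists and then identify it as $\nabla\Omega^*(-R^*\beta^*)$, whereas the theorem as stated asserts existence of the primal minimizer as part of the conclusion. You can close this directly without assuming attainment. Set $Q^* := \nabla\Omega^*(-R^*\beta^*)$. Differentiability of $\Omega^*$ makes $\beta \mapsto \Omega^*(-R^*\beta)$ differentiable, so the subdifferential sum rule is exact at the dual maximizer $\beta^*$ and the first-order optimality condition reads $0 \in \partial\theta^*(\beta^*) - RQ^*$, i.e.\ $RQ^* \in \partial\theta^*(\beta^*)$, equivalently $\beta^* \in \partial\theta(RQ^*)$ by the same reciprocity you already invoked. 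Now both Fenchel--Young inequalities hold with equality at $(Q^*,\beta^*)$, and adding them shows $\theta(RQ^*) + \Omega(Q^*)$ equals the dual optimal value, which by the strong duality you have already established is the primal infimum. This yields primal attainment rather than presupposing it.
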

\begin{proof}
    See \cite{Borwein2005}[Theorem 3.3.5]. 
\end{proof}

\begin{theorem}
\label{thm:fenchel_dual_cost_explicit_kernel}
   The minimizer of the Fenchel dual of the problem in \eqref{eq:diff_cost_Q} is given by
   \begin{multline}
   \label{eq:fenchel_dual_explicit}
       \beta^* :=\\ \underset{\beta \in \mathbb{R}^{n \times d \times d}}{\rm argmin}\left\{\frac{n}{4}\|\beta\|^2_{\mathbb{R}^{n \times d \times d}} + \langle \beta, z \rangle_{\mathbb{R}^{n \times d \times d}} + \frac{1}{4\lambda_\sigma}\left\|\left[\left[\sum_{j=0}^{n-1}M_j \beta_{j, kl}\right]_{k,l=1}^d\right]_-\right\|^2_{\mathbb{R}^{pd \times pd}}\right\}
   \end{multline}
   where $[A]_- = [U\Sigma U^T]_- := U\max\{-\Sigma, 0\}U^T$.  The optimal $Q^*$ minimizing \eqref{eq:diff_cost_Q} is then given by
   \begin{equation}
       Q^* = \frac{1}{2\lambda_\sigma}\left[\left[\sum_{j=0}^{n-1}M_j \beta_{j, kl}^*\right]_{k,l=1}^d\right]_-
   \end{equation}
\end{theorem}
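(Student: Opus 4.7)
The plan is to apply Fenchel duality (Theorem~\ref{thm:fenchel_duality}) to the decomposition
\begin{equation*}
J_{diff}(Q) = \theta(RQ) + \Omega(Q),
\end{equation*}
where $R \colon \mathbb{R}^{pd \times pd} \to \mathbb{R}^{n \times d \times d}$ is the bounded linear map $RQ := \left[\left[\langle M_j, Q_{kl}\rangle_{\mathbb{R}^{p\times p}}\right]_{k,l=1}^d\right]_{j=0}^{n-1}$, $\theta(\gamma) := \frac{1}{n}\|\gamma - z\|^2$, and $\Omega(Q) := \lambda_\sigma \|Q\|^2$ when $Q \succeq 0$ and $+\infty$ otherwise. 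Both $\theta$ and $\Omega$ are proper, lower semi-continuous, and convex, and since $\mathrm{dom}\,\theta = \mathbb{R}^{n \times d \times d}$ is the whole space, the qualification condition $0 \in \mathrm{core}(\mathrm{dom}\,\theta - R\,\mathrm{dom}\,\Omega)$ is immediate.

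Next I would assemble the three ingredients the theorem needs. Setting the gradient of $\langle \beta, \gamma\rangle - \theta(\gamma)$ to zero yields $\gamma = z + n\beta/2$, hence $\theta^*(\beta) = \frac{n}{4}\|\beta\|^2 + \langle \beta, z\rangle$. Symmetry of the $M_j$ gives $R^*\beta = \left[\sum_{j=0}^{n-1}M_j \beta_{j, kl}\right]_{k,l=1}^d$. For the regularizer I would evaluate $\Omega^*(S) = \sup_{Q \succeq 0}\{\langle S, Q\rangle - \lambda_\sigma \|Q\|^2\}$ by diagonalising $S = U\Sigma U^T$ and invoking von Neumann's trace inequality, which forces the optimizer $Q$ to share its eigenbasis with $S$. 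The problem then separates into scalar maximizations $\max_{q_i \geq 0}\{\sigma_i q_i - \lambda_\sigma q_i^2\}$ with maximizer $q_i = \max\{\sigma_i, 0\}/(2\lambda_\sigma)$ and value $\max\{\sigma_i, 0\}^2/(4\lambda_\sigma)$. Specialising to $S = -R^*\beta$, whose eigenvalues are the negatives of those of $R^*\beta$, yields $\Omega^*(-R^*\beta) = \frac{1}{4\lambda_\sigma}\|[R^*\beta]_-\|^2$ using the definition of $[\cdot]_-$ from the statement.

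Substituting into $-\theta^*(\beta) - \Omega^*(-R^*\beta)$, the dual supremum becomes the negation of the functional in \eqref{eq:fenchel_dual_explicit}, so minimizing that functional is equivalent to solving the dual. The recovery formula for $Q^*$ then follows from the second assertion of Theorem~\ref{thm:fenchel_duality}: $\Omega^*$ is differentiable at $-R^*\beta^*$, and the argmax computation above gives $\nabla\Omega^*(-R^*\beta^*) = [R^*\beta^*]_-/(2\lambda_\sigma)$, which is the claimed expression.

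The main obstacle is the closed-form evaluation of $\Omega^*$ and $\nabla\Omega^*$ over the PSD cone with a squared-Frobenius penalty. The crux is justifying that the matrix supremum reduces to a separable scalar eigenvalue problem; von Neumann's trace inequality $\langle S, Q\rangle \leq \sum_i \sigma_i(S)\sigma_i(Q)$, together with the characterisation of its equality case, is the cleanest tool. Once that reduction is in hand, the remainder is bookkeeping: identifying the adjoint $R^*$, assembling the dual objective, and invoking the differentiability clause of Theorem~\ref{thm:fenchel_duality}.
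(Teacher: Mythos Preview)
Your proposal is correct and follows essentially the same route as the paper: the same decomposition $J_{diff}(Q)=\theta(RQ)+\Omega(Q)$, the same identification of $R^*$, and the same recovery of $Q^*$ via $\nabla\Omega^*(-R^*\beta^*)$ from Theorem~\ref{thm:fenchel_duality}. The only difference is cosmetic: the paper states the conjugates without detail and defers the computation of $\nabla\Omega^*$ to \cite{Malick2006}, whereas you spell out $\theta^*$ directly and handle $\Omega^*$ and its gradient via von Neumann's trace inequality and the resulting scalar eigenvalue problems.
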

\begin{proof}
In the notation of theorem \ref{thm:fenchel_duality}, let $R \colon \mathbb{R}^{pd \times pd} \to \mathbb{R}^{n \times d \times d}$ be defined as
\begin{equation}
    RQ := \left[\left[\langle M_j, Q_{kl} \rangle_{\mathbb{R}^{p \times p}}\right]_{j=0}^{n-1}\right]_{k,l=1}^d \in \mathbb{R}^{n \times d \times d}
\end{equation}
The derivation of $R^* \colon \mathbb{R}^{n \times d \times d} \to \mathbb{R}^{pd \times pd}$ is as follows:
\begin{multline}
    \langle \beta, RQ \rangle_{\mathbb{R}^{n \times d \times d}} = \sum_{j=0}^{n-1}\sum_{k,l=1}^d \beta_{j,kl}\text{Tr}(M_j Q_{kl}) =\\\sum_{k,l=1}^d \text{Tr}\left(\sum_{j=0}^{n-1}\beta_{j,kl}M_j Q_{kl}\right) =\\ \left\langle \left[\sum_{j=0}^{n-1} M_j \beta_{j,kl}\right]_{k,l=1}^d, Q \right \rangle_{\mathbb{R}^{pd \times pd}}
\end{multline}
Thus, we have that $R^* \beta$ is the $(pd, pd)$ block matrix whose $(k,l)$ block is given by
\begin{equation}
    [R^*\beta]_{kl} = \sum_{j=0}^{n-1} M_j \beta_{j,kl} \in \mathbb{R}^{p \times p}
\end{equation}

Let $\Omega \colon \mathbb{R}^{pd \times pd} \to (-\infty, \infty]$ be defined as
\begin{equation}
    \Omega(Q) := \begin{cases}
        \lambda_\sigma \|Q\|_{\mathbb{R}^{pd \times pd}}^2 & \text{if } Q \succeq 0\\
        \infty & \text{else}
    \end{cases}
\end{equation}
and $\theta \colon \mathbb{R}^{n \times d \times d} \to (-\infty, \infty]$ as
\begin{equation}
    \theta(\beta) := \frac{1}{n}\|\beta - z\|^2_{\mathbb{R}^{n \times d \times d}}
\end{equation}
where $z := [z_0, \ldots, z_{n-1}] \in \mathbb{R}^{n \times d \times d}$.  Then we may write \eqref{eq:diff_cost_Q} as
\begin{equation}
    L_2(Q) = \theta(RQ) + \Omega(Q)
\end{equation}
with no constraints.
By theorem \ref{thm:fenchel_duality}, we may derive \eqref{eq:fenchel_dual_explicit} using the definition of Fenchel conjugate applied to the functions $\theta$ and $\Omega$.   
   If $\beta^*$ is optimal for \eqref{eq:fenchel_dual_explicit}, then, by theorem \ref{thm:fenchel_duality}, the optimal $Q^*$ solving \eqref{eq:diff_cost_Q} is given by
\begin{equation}
    Q^* = \nabla \Omega^*(-R^*\beta^*) = \frac{1}{2\lambda_\sigma}\left[\left[\sum_{j=0}^{n-1}M_j \beta_{jkl}\right]_{k,l=1}^d\right]_-
\end{equation}
\end{proof}

We refer the interested reader to \cite{Malick2006} for a reference on how to compute the gradient $\nabla \Omega^*$ by computing the gradient of the projection operator $P_{S^{pd}_+}$ for the cone of symmetric, positive semi-definite matrices.
\section{Proofs of statements in section \ref{sec:est_diffusion}}
\label{sec:implicit_diffusion_appendix}
Recall that we use the parametrization from \cite{nonnegative_funcs_marteau2020}, \cite{learning_psd_valued_Muzellec} of $a$ given by
\begin{equation}
    a(x) := [\langle k(\cdot, x), C_{kl} k(\cdot, x) \rangle_H]_{k,l=1}^d
\end{equation}
where $C_{kl} \in \mbox{\rm Hom}_{HS}(H)$, the space of Hilbert-Schmidt operators on $H$.  Let us denote $\varphi(x) := k(\cdot, x)$.  We define the operator $C \colon H^d \to H^d$ as the matrix of operators $C_{kl}$, whose evaluation is given by
\begin{equation}
    C\mathbf{g} = \begin{bmatrix}
        \sum_{l=1}^d C_{1l}g_l \\ \vdots \\ \sum_{l=1}^d C_{dl}g_l
    \end{bmatrix} \mbox{ for } \mathbf{g} \in H^d
\end{equation}
We require $C^* = C$.  To enforce the constraint that $a(x) \succeq 0$ for all $x \in \mathbb{R}^d$, we require $C \succeq 0$ in the sense that
\begin{equation}
    \mathbf{g}^* C \mathbf{g} = \sum_{k,l=1}^d \langle g_k, C_{kl} g_l \rangle_H \geq 0 \mbox{ for all } \mathbf{g}\in H^d
\end{equation}

For $f,g \in H$, we denote by $f \otimes g$ the rank-one operator in $\mbox{Hom}_{HS}(H)$ given by $(f \otimes g)h := \langle g, h \rangle_H f$.

\begin{theorem}
\label{thm:parametrization_for_diff_implicit}
    The space of functions with parametrization according to $x \mapsto  \langle \varphi(x), D\varphi(x) \rangle_H$ with $D \in \mbox{\rm Hom}_{HS}(H)$ and $D^* = D$ and the RKHS $H'$ with kernel $k'(x,y) := k(x,y)^2$ coincide, provided the following regularity condition holds:
    \begin{equation}
        \{D \in \mbox{\rm Hom}_{HS}(H) \colon D^* = D\} = \overline{\mbox{\rm span}\{\varphi(x) \otimes \varphi(x) \colon x \in \mathbb{R}^d\}}
    \end{equation}
    where the closure is taken in the space $\mbox{\rm Hom}_{HS}(H)$. Furthermore, if $g,h \in H'$ correspond to $D, E \in \mbox{\rm Hom}_{HS}(H)$, then $\langle g, h \rangle_{H'} = \langle D, E \rangle_{HS}$.  In particular, each component of $a$ is in $H'$.
\end{theorem}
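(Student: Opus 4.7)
The plan is to construct the explicit map
$\Phi : \{D \in \mathrm{Hom}_{HS}(H) : D^* = D\} \to \{\text{functions on } \mathbb{R}^d\}$
defined by $(\Phi D)(x) := \langle \varphi(x), D\varphi(x)\rangle_H$, and to show that $\Phi$ is an isometric isomorphism onto $H'$ that identifies $\langle g, h\rangle_{H'}$ with $\langle D, E\rangle_{HS}$. The regularity condition (ii) of Proposition~\ref{prop:operator_occ_kerns} then instantly identifies the domain of $\Phi$ with the entire space of self-adjoint Hilbert-Schmidt operators, yielding the asserted equality of function spaces.

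\textbf{Key identity and action on rank-one diagonals.} First I would record the Hilbert-Schmidt formula
\begin{equation}
    \langle \varphi(y)\otimes \varphi(y),\ \varphi(z)\otimes \varphi(z)\rangle_{HS} = \langle \varphi(y),\varphi(z)\rangle_H^2 = k(y,z)^2 = k'(y,z),
\end{equation}
which follows from the general rule $\langle u_1\otimes v_1, u_2\otimes v_2\rangle_{HS} = \langle u_1,u_2\rangle_H\langle v_1,v_2\rangle_H$. At the same time, $\Phi(\varphi(y)\otimes \varphi(y))(x) = \langle \varphi(x),\varphi(y)\rangle_H^2 = k'(x,y)$, so $\Phi$ sends ``diagonal'' rank-one operators to reproducing kernel sections of $H'$, and matches the HS inner product between them with the $H'$ inner product between their images.

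\textbf{Isometry on the dense subspace.} For any finite combination $D = \sum_j \beta_j\,\varphi(y_j)\otimes \varphi(y_j)$, linearity gives $\Phi D = \sum_j \beta_j\, k'(\cdot, y_j) \in H'$ and
\begin{equation}
    \|\Phi D\|_{H'}^2 = \sum_{i,j}\beta_i\beta_j\, k'(y_i,y_j) = \sum_{i,j}\beta_i\beta_j\langle \varphi(y_i)\otimes \varphi(y_i),\, \varphi(y_j)\otimes \varphi(y_j)\rangle_{HS} = \|D\|_{HS}^2.
\end{equation}
Polarizing this norm identity gives $\langle \Phi D, \Phi E\rangle_{H'} = \langle D, E\rangle_{HS}$ on the span of the rank-one diagonals, so $\Phi$ extends uniquely by continuity to an isometric embedding of the HS-closure of $\mathrm{span}\{\varphi(y)\otimes \varphi(y) : y\in \mathbb{R}^d\}$ into $H'$. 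Because the image already contains the dense span of kernel sections of $H'$ and the map is isometric, the image is all of $H'$. Invoking condition (ii) identifies the domain with the full space of self-adjoint HS operators, proving the theorem. For the final claim, even when $C_{kl}$ is not itself self-adjoint, the quadratic form $\langle \varphi(x), C_{kl}\varphi(x)\rangle_H$ depends only on the symmetric part $\tfrac12(C_{kl}+C_{kl}^*)$, which is a self-adjoint HS operator; hence $a_{kl}\in H'$.

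\textbf{Main obstacle.} The substantive step is the continuous extension: one must confirm that HS-convergent sequences of operators give rise to $H'$-convergent sequences of functions, so that the isometry defined on the dense span passes to the closure without ambiguity. This is exactly what the identity $\|\Phi D\|_{H'} = \|D\|_{HS}$ on the dense set provides, upgrading HS Cauchyness to $H'$ Cauchyness and exploiting completeness of both spaces. The care needed here is in distinguishing pointwise limits (which are automatic from HS convergence via the reproducing property) from $H'$-norm limits (which require the isometry); condition (ii) is what guarantees that every self-adjoint HS operator is reachable by this construction.
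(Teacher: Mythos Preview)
Your proof is correct and follows essentially the same route as the paper's: both identify the rank-one diagonal operators $\varphi(y)\otimes\varphi(y)$ with the kernel sections $k'(\cdot,y)$, verify that the HS and $H'$ inner products agree on these generators, and then pass to closures. The paper carries this out via explicit infinite-series expansions of $D$ and $g$, whereas you package the same computation as an isometric extension of $\Phi$ from a dense subspace; your remark that $a_{kl}(x)$ depends only on the symmetric part $\tfrac12(C_{kl}+C_{kl}^*)$ is a useful clarification the paper leaves implicit.
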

\begin{proof}
    Suppose that $g(x) = \sum_{j=1}^\infty k(x, x_j)^2 \beta_j \in H'$ for some $\{x_j\}_{j=1}^\infty \subset \mathbb{R}^d$, $\{\beta_j\}_{j=1}^\infty \subset \mathbb{R}$.  Then we get
    \begin{align}
        \sum_{j=1}^\infty k(x, x_j)^2 \beta_j &= \sum_{j=1}^\infty \langle k(\cdot, x), (k(\cdot, x_j) \otimes k(\cdot, x_j)) \beta_j k(\cdot, x) \rangle_H\\
        &= \left \langle k(\cdot, x), \left[\sum_{j=1}^\infty k(\cdot, x_j) \otimes k(\cdot, x_j) \beta_j \right]k(\cdot, x)\right\rangle_H\\
        &= \langle \varphi(x), D\varphi(x) \rangle_H
    \end{align}
    where $D := \sum_{j=1}^\infty k(\cdot, x_j) \otimes k(\cdot, x_j)\beta_j \in \mbox{\rm Hom}_{HS}(H)$.  Conversely, suppose that $g(x) := \langle \varphi(x), D \varphi(x) \rangle_H$ for some $D \in \mbox{\rm Hom}_{HS}(H)$.  By the regularity condition, there exist $\{x_j\}_{j=1}^\infty \subset \mathbb{R}^d, \{\beta_j\}_{j=1}^\infty \subset \mathbb{R}$ such that
    \begin{equation}
        D = \sum_{j=1}^\infty \varphi(x_j) \otimes \varphi(x_j) \beta_j
    \end{equation}
    Thus,
    \begin{align}
        \langle \varphi(x), D \varphi(x) \rangle_H &= \left \langle \varphi(x), \left[\sum_{j=1}^\infty \varphi(x_j) \otimes \varphi(x_j) \beta_j \right] \varphi(x) \right \rangle_H\\
        &= \sum_{j=1}^\infty \langle \varphi(x), \varphi(x_j) \rangle_H^2 \beta_j\\
        &= \sum_{j=1}^\infty k(x, x_j)^2 \beta_j
    \end{align}
    showing that $g \in H'$.  Lastly, suppose that $g(x) = \langle \varphi(x), D \varphi(x) \rangle_H$ in $H'$ with $D = \sum_{j=1}^\infty \varphi(x_j) \otimes \varphi(x_j) \beta_j$ and $h(x) = \langle \varphi(x), E \varphi(x) \rangle_H$ in $H'$ with $E = \sum_{r=1}^\infty \varphi(y_r) \otimes \varphi(y_r) \delta_r$. Then
    \begin{align}
       \langle g, h \rangle_{H'} &= \sum_{j,r=1}^\infty k(x_j, y_r)^2 \beta_j \delta_r \\
       &= \sum_{j,r=1}^\infty \langle k(\cdot, x_j), k(\cdot, y_r)\rangle_H^2 \beta_j \delta_r\\
       &= \sum_{j,r=1}^\infty \langle k(\cdot, x_j) \otimes k(\cdot, x_j), k(\cdot, y_r) \otimes k(\cdot, y_r) \rangle_{HS} \beta_j \delta_r\\
       &= \left \langle \sum_{j=1}^\infty k(\cdot, x_j) \otimes k(\cdot, x_j) \beta_j, \sum_{r=1}^\infty k(\cdot, y_r) \otimes k(\cdot, y_r)\delta_r \right \rangle_{HS}\\
       &= \langle D, E \rangle_{HS}
    \end{align}
\end{proof}
{\em Proof of proposition \ref{prop:operator_occ_kerns}}:

Suppose the regularity condition of theorem \ref{thm:parametrization_for_diff_implicit} holds.  Suppose additionally that we have the following:
\begin{equation}
    \mathbb{E}\left[\int_{t_i}^{t_{i+1}}k(x_t, x_t)^2 dt \Bigg \vert x_0 \right] < \infty
\end{equation}
for $i=0,\ldots, n-1$.  Let us define functionals $S_i \colon H' \to \mathbb{R}$ by
\begin{equation}
    S_i(g) := \mathbb{E}\left[\int_{t_i}^{t_{i+1}}g(x_t) dt \Bigg \vert x_0 \right]
\end{equation}
for $i=0,\ldots, n-1$.  Let us show that $S_i$ is bounded:
\begin{align}
    \lvert S_i(g) \rvert^2 &\leq \mathbb{E}\left[\left\lvert \int_{t_i}^{t_{i+1}}g(x_t) dt \right\rvert^2 \Bigg \vert x_0 \right]\\
    &\leq \mathbb{E}\left[\int_{t_i}^{t_{i+1}}\lvert g(x_t) \rvert^2 dt \Bigg \vert x_0 \right](t_{i+1} - t_i)\\
    &=\mathbb{E}\left[\int_{t_i}^{t_{i+1}}\langle g, k(\cdot, x_t)^2 \rangle_{H'}^2 dt \Bigg \vert x_0 \right](t_{i+1} - t_i)\\
    &\leq \|g\|^2_{H'} \mathbb{E}\left[\int_{t_i}^{t_{i+1}}k(x_t, x_t)^2 dt \Bigg \vert x_0 \right](t_{i+1} - t_i)
\end{align}
where we have used Jensen's inequality on the first line, Cauchy-Schwarz on the second, the reproducing property of $H'$ on the third, and Cauchy-Schwarz again on the fourth.  Thus, the regularity condition shows that $S_i$ is bounded for $i=0,\ldots, n-1$.  By the Riesz representation theorem, there exists $\gamma_i^* \in H'$ with
\begin{equation}
    S_i(g) = \langle g, \gamma_i^* \rangle_{H'}
\end{equation}
We get
\begin{align}
    \gamma_i^*(x) &= \langle k(\cdot, x)^2, \gamma_i^* \rangle_{H'}\\
    &= S_i(k(\cdot, x)^2)\\
    &= \mathbb{E}\left[\int_{t_i}^{t_{i+1}}k(x,x_t)^2 dt \Bigg \vert x_0 \right]\\
    &= \mathbb{E}\left[\int_{t_i}^{t_{i+1}} \langle k(\cdot, x), (k(\cdot, x_t) \otimes k(\cdot, x_t)) k(\cdot, x) \rangle_H dt \Bigg \vert x_0 \right]\\
    &= \left \langle k(\cdot, x), \mathbb{E}\left[\int_{t_i}^{t_{i+1}}k(\cdot, x_t) \otimes k(\cdot, x_t) dt \Bigg \vert x_0 \right] k(\cdot, x) \right\rangle_{HS}
\end{align}
Since $\gamma_i^* \in H'$, this shows that $\mathbb{E}\left[\int_{t_i}^{t_{i+1}} k(\cdot, x_t)\otimes k(\cdot, x_t) dt \Bigg \vert x_0 \right] \in \mbox{\rm Hom}_{HS}(H)$ by theorem \ref{thm:parametrization_for_diff_implicit}.  Another application of theorem \ref{thm:parametrization_for_diff_implicit} shows that
\begin{align}
    \mathbb{E}\left[\int_{t_i}^{t_{i+1}}a(x_t) dt \Bigg \vert x_0 \right] &= [\langle \gamma_i^*, [a]_{kl} \rangle_{H'}]_{k,l=1}^d\\
    &=\left[\left \langle \mathbb{E}\left[\int_{t_i}^{t_{i+1}}\varphi(x_t) \otimes \varphi(x_t) dt \Bigg \vert x_0 \right], C_{kl} \right\rangle_{HS}\right]_{k,l=1}^d
\end{align}
\hfill $\Box$

Let
\begin{equation}
    M_i := \mathbb{E}\left[\int_{t_i}^{t_{i+1}}\varphi(x_t) \otimes \varphi(x_t) \Bigg \vert x_0\right] \in \mbox{\rm Hom}_{HS}(H)
\end{equation}
for $i=0,\ldots, n-1$.  We refer to the operators $M_i$ as {\em occupation kernels}.  They satisfy $M_i = M_i^*$ and $M_i \succeq 0$.

In order to adapt the representer theorem of \cite{learning_psd_valued_Muzellec}, we must approximate the $M_i$ with finite-rank operators, denoted $\hat{M}_i$.  We define them as
\begin{equation}
    \hat{M}_i := \frac{t_{i+1} - t_i}{2M}\sum_{u=1}^M \left[\varphi(y_i^{(u)}) \otimes \varphi(y_i^{(u)}) + \varphi(y_{i+1}^{(u)}) \otimes \varphi(y_{i+1}^{(u)})\right]
\end{equation}
where we have used a trapezoid-rule integral quadrature and Monte Carlo estimate of the expectation.

{\em Proof of theorem \ref{thm:diff_representer_thm}}:

Let $J'_{diff}$ be the loss function in equation \eqref{eq:diff_cost_with_M_i_hat}.  Let $\pi_n \colon H \to H$ denote orthogonal projection onto the finite-dimensional subspace $V$ of $H$ spanned by $k(\cdot, y_i^{(u)})$ for $i=0,\ldots,n$ and $u=1,\ldots, M$.  Then we get
\begin{align}
    [\langle \hat{M}_i, \pi_n C_{kl} \pi_n \rangle_{HS}]_{k,l=1}^d &= [\langle \pi_n \hat{M}_i \pi_n, C_{kl} \rangle_H]_{k,l=1}^d\\
    &= \scriptscriptstyle{\left[\left \langle \pi_n \frac{t_{i+1} - t_i}{2M}\sum_{u=1}^M \left[\varphi(y_i^{(u)}) \otimes \varphi(y_i^{(u)}) + \varphi(y_{i+1}^{(u)}) \otimes \varphi(y_{i+1}^{(u)}) \right] \pi_n, C_{kl} \right \rangle_{HS}\right]_{k,l=1}^d}\\
    &= \scriptstyle{\left[\left \langle \frac{t_{i+1} - t_i}{2M}\sum_{u=1}^M \left[\varphi(y_i^{(u)}) \otimes \varphi(y_i^{(u)}) + \varphi(y_{i+1}^{(u)}) \otimes \varphi(y_{i+1}^{(u)}) \right], C_{kl} \right \rangle_{HS}\right]_{k,l=1}^d}\\
    &= [\langle \hat{M}_i, C_{kl} \rangle_{HS}]_{k,l=1}^d
\end{align}
using the self-adjoint property of projection operators.  Let $\Pi_n \colon H^d \to H^d$ denote component-wise orthogonal projection onto $V$ and $\Pi^\perp_n \colon H^d \to H^d$ the analogous operator onto $V^\perp$.  Then
\begin{align}
    \|C\|^2_{HS} &= \|(\Pi_n + \Pi^\perp_n)C(\Pi_n + \Pi^\perp_n)\|^2_{HS}\\
    &= \|\Pi_n C \Pi_n\|^2_{HS} + \|\Pi_n C \Pi^\perp_n\|^2_{HS} + \|\Pi_n^\perp C \Pi_n\|^2_{HS} + \|\Pi_n^\perp C \Pi_n^\perp\|^2_{HS}\\
    &\geq \|\Pi_n C \Pi_n\|^2_{HS}
\end{align}
These two facts together imply that $J'_{diff}(\Pi_n C \Pi_n) \leq J'_{diff}(C)$ for any operator $C \in \mbox{\rm Hom}_{HS}(H)$ with $C^* = C$.
\hfill $\Box$

Let $\psi \in \mathcal{L}(H, \mathbb{R}^{(n+1)M})$ be defined as
\begin{equation}
    \psi := \begin{bmatrix}
        k(\cdot, y_0^{(1)})\\
        k(\cdot, y_0^{(2)})\\
        \vdots
        \\
        k(\cdot, y_n^{(M)})
    \end{bmatrix}
\end{equation}
so that
\begin{equation}
    \psi g = \begin{bmatrix}
        \langle k(\cdot, y_0^{(1)}), g \rangle_H \\ \langle k(\cdot, y_0^{(2)}), g \rangle_H \\ \vdots \\ \langle k(\cdot, y_n^{(M)}), g \rangle_H
    \end{bmatrix}=\begin{bmatrix}
        g(y_0^{(1)}) \\ g(y_0^{(2)}) \\ \vdots \\ g(y_n^{(M)})
    \end{bmatrix}
\end{equation}
Then we have the adjoint, $\psi^* \in \mathcal{L}(\mathbb{R}^{(n+1)M}, H)$ given by
\begin{equation}
    \psi^* \beta = \sum_{j=0}^n \sum_{u=1}^M k(\cdot, y_j^{(u)}) \beta_{(j, u)}
\end{equation}
where we use double-index notation for the entries of $\beta$. This means that
\begin{equation}
    (j,u) := (j-1)M + u
\end{equation}
so that $\beta \in \mathbb{R}^{(n+1)M}$.
Define
\begin{equation}
    \psi(x) := \psi \varphi(x) = \begin{bmatrix}
        k(x, y_0^{(1)}) \\ \vdots \\ k(x, y_n^{(M)})
    \end{bmatrix}
\end{equation}
so that we may think of $\psi$ as a function $\mathbb{R}^d \to \mathbb{R}^{(n+1)M}$.  Let $K \in \mathbb{R}^{(n+1)M \times (n+1)M}$ be such that
\begin{equation}
    K_{(i, u), (j, v)} := k(y_i^{(u)}, y_j^{(v)})
\end{equation}

\begin{proposition}
    Considering $\psi \psi^*$ as an element of $\mathbb{R}^{(n+1)M \times (n+1)M}$, we get
    \begin{equation}
        \psi \psi^* = K
    \end{equation}
    where
    \begin{equation}
        K_{(i, u), (j,v)} := k(y_i^{(u)}, y_j^{(v)})
    \end{equation}
    \label{prop:psi_psi_star_is_K}
\end{proposition}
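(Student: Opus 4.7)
The plan is to verify the identity entry-wise by using the explicit formulas already given for $\psi$ and $\psi^*$, together with the reproducing property of the kernel $k$. This is a short, structural computation and I do not anticipate any serious obstacle.

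First I would note that for any standard basis vectors $e_{(i,u)}, e_{(j,v)} \in \mathbb{R}^{(n+1)M}$, we have
\begin{equation}
    (\psi\psi^*)_{(i,u),(j,v)} = \langle e_{(i,u)}, \psi\psi^* e_{(j,v)}\rangle_{\mathbb{R}^{(n+1)M}} = \langle \psi^* e_{(i,u)}, \psi^* e_{(j,v)}\rangle_H,
\end{equation}
since $\psi$ is the adjoint of $\psi^*$ (regarding $\psi$ as mapping $H$ to $\mathbb{R}^{(n+1)M}$, with the standard Euclidean inner product on the codomain).

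Next I would apply the formula for $\psi^*$ given just before the proposition, which yields $\psi^* e_{(i,u)} = k(\cdot, y_i^{(u)})$ and $\psi^* e_{(j,v)} = k(\cdot, y_j^{(v)})$, since the basis vector $e_{(i,u)}$ picks out a single term of the sum defining $\psi^*\beta$. Substituting these into the inner product above and invoking the reproducing property of $H$ gives
\begin{equation}
    \langle k(\cdot, y_i^{(u)}), k(\cdot, y_j^{(v)})\rangle_H = k(y_i^{(u)}, y_j^{(v)}) = K_{(i,u),(j,v)}.
\end{equation}
Since the $(i,u),(j,v)$ entries of $\psi\psi^*$ and $K$ coincide for every pair of multi-indices, we conclude $\psi\psi^* = K$ as elements of $\mathbb{R}^{(n+1)M \times (n+1)M}$.

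The only subtlety worth emphasizing is the identification of $\psi$ with an operator valued in $\mathbb{R}^{(n+1)M}$ rather than in $H^{(n+1)M}$: once one agrees that $\psi$ is the linear map $g \mapsto (g(y_i^{(u)}))_{i,u}$ and $\psi^*$ its Hilbert-space adjoint, the computation is immediate from the reproducing property, and there is no further difficulty.
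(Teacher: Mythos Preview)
Your proof is correct and essentially the same as the paper's: both are direct computations from the definitions of $\psi$ and $\psi^*$ together with the reproducing property. The only cosmetic difference is that the paper applies $\psi\psi^*$ to a general vector $v$ and reads off $Kv$, whereas you test on basis vectors and use the adjoint relation $\langle e_{(i,u)}, \psi\psi^* e_{(j,v)}\rangle = \langle \psi^* e_{(i,u)}, \psi^* e_{(j,v)}\rangle_H$; these are equivalent formulations of the same one-line argument.
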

\begin{proof}
    Let $v \in \mathbb{R}^{(n+1)M}$.  Then
    \begin{align}
        \psi \psi^* v &= \psi \left(\sum_{j=0}^n \sum_{u=1}^M k(\cdot, y_j^{(u)})v_{(j, u)}\right)\\
        &= \begin{bmatrix}
            \sum_{j=0}^n \sum_{u=1}^M k(y_0^{(1)}, y_j^{(u)})v_{(j,u)}
            \\
            \vdots
            \\
            \sum_{j=0}^n \sum_{u=1}^M k(y_n^{(M)}, y_j^{(u)})v_{(j,u)}
        \end{bmatrix}
        \\
        &= Kv
    \end{align}
\end{proof}
\begin{proposition}
\label{prop:projection_formula}
    We have that
    \begin{equation}
        \psi^* K^{+} \psi = \pi_n
    \end{equation}
    where $\pi_n$ is orthogonal projection onto the span of $\{k(\cdot, y_j^{(u)})\}_{(j,u)=(0,1)}^{(n,M)}$ and $K^+$ is the Moore-Penrose inverse of $K$.
\end{proposition}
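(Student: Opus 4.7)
The plan is to recognize this as a standard Moore-Penrose identity: for any bounded linear map $A$ between Hilbert spaces (with closed range, which is automatic here since $\mathbb{R}^{(n+1)M}$ is finite-dimensional), the composition $A A^{+}$ equals orthogonal projection onto $\overline{\text{range}(A)}$. Applied with $A = \psi^{*}$, this gives $\psi^{*}(\psi^{*})^{+} = \pi_n$, because the range of $\psi^{*}$ is exactly the span of $\{k(\cdot, y_j^{(u)})\}$ by the definition of $\psi^{*}$ in the displayed formula just above the proposition. It then remains to rewrite $(\psi^{*})^{+}$ using the identity $A^{+} = (A^{*}A)^{+} A^{*}$, which with $A = \psi^{*}$ yields $(\psi^{*})^{+} = (\psi \psi^{*})^{+}\psi = K^{+}\psi$, invoking proposition \ref{prop:psi_psi_star_is_K}. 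Substitution gives $\pi_n = \psi^{*} K^{+}\psi$.

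For readers unwilling to invoke the pseudo-inverse black box, I would also include a direct verification. The first step is the orthogonal decomposition $H = V \oplus V^{\perp}$. Using the reproducing property, $\psi g = 0$ iff $\langle k(\cdot, y_j^{(u)}), g\rangle_H = 0$ for every $(j,u)$, so $\ker(\psi) = V^{\perp}$. Hence for $g \in V^{\perp}$ the operator $\psi^{*} K^{+} \psi$ clearly annihilates $g$.

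The second step is to show $\psi^{*} K^{+} \psi$ fixes $V$. Any $g \in V = \text{range}(\psi^{*})$ can be written as $g = \psi^{*} c$ for some $c \in \mathbb{R}^{(n+1)M}$, and by proposition \ref{prop:psi_psi_star_is_K} we have $\psi g = \psi \psi^{*} c = K c$. The claim reduces to $\psi^{*} K^{+} K c = \psi^{*} c$. Since $K$ is symmetric PSD, $K^{+} K$ is the orthogonal projection onto $\ker(K)^{\perp}$, so $(I - K^{+}K)c \in \ker(K)$. But $K c' = 0$ implies $0 = \langle K c', c' \rangle = \|\psi^{*} c'\|_H^{2}$, so $\ker(K) \subseteq \ker(\psi^{*})$; hence $\psi^{*}(I - K^{+}K)c = 0$ and $\psi^{*} K^{+} K c = \psi^{*} c = g$.

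There is no real obstacle here; the only subtlety is the possible singularity of $K$ (e.g.\ when the $k(\cdot, y_j^{(u)})$ are linearly dependent), which is handled uniformly by the pseudo-inverse and the observation $\ker(K) = \ker(\psi^{*})$. When $K$ is invertible the formula collapses to the familiar $\psi^{*} K^{-1} \psi$, matching the standard kernel-regression projection formula.
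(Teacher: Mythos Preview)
Your proposal is correct. The paper's own proof takes the normal-equations route: for arbitrary $g\in H$ it writes $\pi_n g=\psi^* w$ for some coefficient vector $w$, observes that the orthogonality conditions $\langle g-\psi^* w,\,k(\cdot,y_i^{(v)})\rangle_H=0$ are equivalent to $Kw=\psi g$, and then takes $w=K^{+}\psi g$ as a solution of this consistent system. Your second argument (the direct verification on $V$ and $V^{\perp}$) is the same content reorganized: instead of \emph{deriving} the coefficients $w$, you check that the candidate operator $\psi^{*}K^{+}\psi$ kills $V^{\perp}=\ker\psi$ and fixes $V=\operatorname{range}\psi^{*}$, using the key fact $\ker K=\ker\psi^{*}$ to handle the possible singularity of $K$. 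Your first argument, invoking the abstract Moore--Penrose identities $AA^{+}=P_{\overline{\operatorname{range}(A)}}$ and $A^{+}=(A^{*}A)^{+}A^{*}$ with $A=\psi^{*}$, is a genuinely slicker alternative that bypasses any explicit computation; it has the advantage of making clear that nothing special about the RKHS structure is used beyond $\psi\psi^{*}=K$ and the finite-dimensionality of the codomain of $\psi$.
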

\begin{proof}
    Let $g \in H$.  By applying the definition of $\psi^*$ we have that
    \begin{equation}
        \pi_n g = \psi^* w
    \end{equation}
    for some $w \in \mathbb{R}^{(n+1)M}$.  We may find $w$ by solving the following system of equations:
    \begin{equation}
        \left \langle g - \sum_{j=0}^n \sum_{u=1}^M k(\cdot, y_j^{(u)}) w_{(j, u)}, k(\cdot, y_i^{(v)})\right\rangle_H = 0
    \end{equation}
    for $i=0,\ldots, n, v=1,\ldots, M$.  This is equivalent to solving
    \begin{equation}
        Kw = \psi g
    \end{equation}
    This system is consistent so that $w = K^{+}\psi g$ is a solution.  The result follows.
\end{proof}
Let $R \in \mathbb{R}^{r \times (n+1)M}$ be a matrix such that $K = R^T R$, where $r := \mbox{rank}(K)$.  Note that such a matrix necessarily exists because $K \succeq 0, K^T = K$.  Then define $\gamma \in \mathcal{L}(H, \mathbb{R}^r)$ by
\begin{equation}
    \gamma := (RR^T)^{-1}R\psi
\end{equation}
Note that in the case where $K$ is full-rank, this is equal to
\begin{equation}
    \gamma = R^{-T}\psi
\end{equation}
\begin{proposition}
\label{prop:gamma_identities}
    We have
    \begin{equation}
        \gamma \gamma^* = I_r, \;\;\; \gamma^* \gamma = \pi_n
    \end{equation}
    where $I_r$ is the (r,r) identity matrix and $\pi_n$ is orthogonal projection onto the span of $\{k(\cdot, y_j^{(u)})\}_{(j,u) = (0, 1)}^{(n,M)}$. 
    \label{prop:gamma_prop}
\end{proposition}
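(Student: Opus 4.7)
The plan is to compute both products directly from the definition $\gamma := (RR^T)^{-1} R \psi$ and then reduce to the two propositions already proved. First I would observe that since $r = \mathrm{rank}(K) = \mathrm{rank}(R^T R) = \mathrm{rank}(R)$, the $r \times (n+1)M$ matrix $R$ has full row rank, so $RR^T \in \mathbb{R}^{r \times r}$ is invertible and $\gamma$ is well-defined. Using symmetry of $RR^T$, the adjoint is
\[
\gamma^* = \psi^* R^T (RR^T)^{-1} \in \mathcal{L}(\mathbb{R}^r, H).
\]

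For the first identity, I would simply plug in and use Proposition \ref{prop:psi_psi_star_is_K}:
\[
\gamma \gamma^* = (RR^T)^{-1} R \psi \psi^* R^T (RR^T)^{-1} = (RR^T)^{-1} R (R^T R) R^T (RR^T)^{-1} = (RR^T)^{-1}(RR^T)(RR^T)(RR^T)^{-1} = I_r,
\]
which telescopes cleanly. This step is essentially mechanical.

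For the second identity, the analogous computation gives
\[
\gamma^* \gamma = \psi^* R^T (RR^T)^{-2} R \psi,
\]
and in view of Proposition \ref{prop:projection_formula} it suffices to show that $R^T (RR^T)^{-2} R = K^+$. This is the main step requiring care. I would argue that since $R$ has full row rank, $R^+ = R^T (RR^T)^{-1}$, and since $K = R^T R$ is the product of a matrix with linearly independent columns ($R^T$) times a matrix with linearly independent rows ($R$), the reverse-order rule $(BC)^+ = C^+ B^+$ applies, giving
\[
K^+ = (R^T R)^+ = R^+ (R^T)^+ = R^T (RR^T)^{-1} \cdot (RR^T)^{-1} R = R^T (RR^T)^{-2} R,
\]
as required. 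Combining this with Proposition \ref{prop:projection_formula} yields $\gamma^* \gamma = \psi^* K^+ \psi = \pi_n$.

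The only non-mechanical part is the pseudoinverse identity in the second step; as a hedge I would be prepared to verify it directly by checking the four Moore--Penrose axioms ($KXK=K$, $XKX=X$, $(KX)^T = KX$, $(XK)^T = XK$) with $X := R^T(RR^T)^{-2}R$, each of which reduces to a telescoping of $RR^T$ factors and so is immediate.
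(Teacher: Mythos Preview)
Your proposal is correct and follows essentially the same route as the paper: both compute $\gamma\gamma^*$ by substituting $\psi\psi^*=K=R^TR$ and telescoping, and both reduce $\gamma^*\gamma$ to $\psi^*K^+\psi=\pi_n$ via the identity $K^+=(R^TR)^+=R^+(R^T)^+=R^T(RR^T)^{-2}R$, invoking the full-row-rank formula $R^+=R^T(RR^T)^{-1}$ together with the reverse-order law. Your extra remark about verifying the Moore--Penrose axioms directly is a sound backup but unnecessary here.
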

\begin{proof}
By proposition \ref{prop:psi_psi_star_is_K}, we have
    \begin{align}
        \gamma \gamma^* &= (RR^T)^{-1}R\psi \psi^* R^T (RR^T)^{-1}\\
        &= (RR^T)^{-1}RKR^T (RR^T)^{-1}\\
        &= (RR^T)^{-1}R(R^T R) R^T(RR^T)^{-1}\\
        &= I_r
    \end{align}
By proposition \ref{prop:projection_formula}, the projection $\pi_n$ is given by $\pi_n = \psi^* K^{+} \psi$.  Thus,
\begin{align}
\gamma^* \gamma  &= \psi^* R^T(RR^T)^{-2}R \psi  \\
&= \psi^* R^+ R^{+T}\psi \\
&= \psi^* (R^T R)^+\psi \\
&= \psi^* K^{+} \psi \\
&= \pi_n 
   \end{align}
\end{proof}
Define
\begin{equation}
    \hat{\gamma} := I_d \otimes \gamma \in \mathcal{L}(H^d, \mathbb{R}^{rd}), \;\;\; \hat{R} := I_d\otimes R \in \mathbb{R}^{rd \times rd}
\end{equation}
and $\gamma(x) := \gamma \varphi(x)$.

\begin{lemma}
\label{lemma:gamma_identities_multivariate}
    We have
    \begin{equation}
    \hat{\gamma}\hat{\gamma}^* = I_{rd},\;\;\;
        \hat{\gamma}^* \hat{\gamma} = \Pi_n
    \end{equation}
\end{lemma}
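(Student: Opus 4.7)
The plan is to deduce both identities directly from Proposition \ref{prop:gamma_prop} by exploiting the block-diagonal (Kronecker) structure of $\hat{\gamma} = I_d \otimes \gamma$. Since $\gamma$ acts componentwise on elements of $H^d = H \times \cdots \times H$, its adjoint $\hat{\gamma}^*$ also acts componentwise, and compositions of such operators reduce to componentwise composition of $\gamma$ with itself or its adjoint.

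First I would identify $\hat{\gamma}^*$ explicitly: for $\mathbf{v} = (v_1,\ldots,v_d) \in \mathbb{R}^{rd}$ (with each $v_k \in \mathbb{R}^r$) and $\mathbf{g} = (g_1,\ldots,g_d) \in H^d$, the inner product pairing
\begin{equation}
    \langle \hat{\gamma}\mathbf{g}, \mathbf{v}\rangle_{\mathbb{R}^{rd}} = \sum_{k=1}^d \langle \gamma g_k, v_k\rangle_{\mathbb{R}^r} = \sum_{k=1}^d \langle g_k, \gamma^* v_k\rangle_H
\end{equation}
shows that $\hat{\gamma}^* = I_d \otimes \gamma^*$, i.e. $\hat{\gamma}^*\mathbf{v} = (\gamma^* v_1,\ldots,\gamma^* v_d)$.

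Next I would compute both compositions componentwise. For $\mathbf{v} \in \mathbb{R}^{rd}$,
\begin{equation}
    \hat{\gamma}\hat{\gamma}^*\mathbf{v} = (\gamma\gamma^* v_1,\ldots,\gamma\gamma^* v_d) = (v_1,\ldots,v_d)
\end{equation}
by Proposition \ref{prop:gamma_prop}, giving $\hat{\gamma}\hat{\gamma}^* = I_{rd}$. For $\mathbf{g} \in H^d$,
\begin{equation}
    \hat{\gamma}^*\hat{\gamma}\mathbf{g} = (\gamma^*\gamma g_1,\ldots,\gamma^*\gamma g_d) = (\pi_n g_1,\ldots,\pi_n g_d) = \Pi_n \mathbf{g},
\end{equation}
since $\Pi_n$ was defined as the componentwise application of $\pi_n$.

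There is no real obstacle here: the identities are a mechanical consequence of the Kronecker structure together with Proposition \ref{prop:gamma_prop}. The only mild care needed is to verify that $(I_d \otimes \gamma)^* = I_d \otimes \gamma^*$ in this mixed Hilbert-space/Euclidean setting, which the inner-product computation above handles directly.
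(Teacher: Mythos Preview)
Your proof is correct and takes essentially the same approach as the paper: both reduce the identities to Proposition~\ref{prop:gamma_prop} via the block-diagonal structure $\hat{\gamma} = I_d \otimes \gamma$. The paper writes this slightly more tersely using the Kronecker mixed-product rule $(I_d \otimes \gamma)(I_d \otimes \gamma^*) = I_d \otimes (\gamma\gamma^*)$, while you unpack the same computation componentwise and explicitly verify that $(I_d \otimes \gamma)^* = I_d \otimes \gamma^*$; the substance is identical.
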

\begin{proof}
    By proposition \ref{prop:gamma_identities}, we get
    \begin{align}
\hat{\gamma}\hat{\gamma}^* &= I_d \otimes \gamma \gamma^*\\
        &= I_d \otimes I_r\\
        &= I_{rd}
    \end{align}
    and
    \begin{align}
    \hat{\gamma}^*\hat{\gamma} &= I_d \otimes \gamma^* \gamma\\
    &= I_d \otimes \pi_n\\
    &= \Pi_n
    \end{align}
\end{proof}

{\em Proof of theorem \ref{thm:isometry}}:

By lemma \ref{lemma:gamma_identities_multivariate}, we have
\begin{align}
    \Pi_n C \Pi_n &= \hat{\gamma}^*\hat{\gamma} C \hat{\gamma}^*\hat{\gamma}\\
    &= \hat{\gamma}^* A \hat{\gamma}
\end{align}
where $A := \hat{\gamma}C\hat{\gamma}^* \in \mathbb{R}^{rd \times rd}$.  Note that $A^T = A$.  Also,
\begin{align}
    \hat{\gamma}^* A \hat{\gamma} &= \Pi_n \hat{\gamma}^* A \hat{\gamma}\Pi_n\\
    &= \Pi_n C \Pi_n
\end{align}
where $C := \hat{\gamma}^* A \hat{\gamma} \in \mbox{\rm Hom}_{HS}(H)$.  We have $C^* = C$.

We compute
\begin{align}
    \langle \hat{M}_i, C_{kl} \rangle_{HS} &=
    \langle \hat{M}_i, \gamma^* A_{kl} \gamma \rangle_{HS} \\
    &= \langle \gamma \hat{M}_i \gamma^*, A_{kl} \rangle_{\mathbb{R}^{r \times r}}\\
    &= \left \langle \gamma \frac{t_{i+1} - t_i}{2M}\sum_{u=1}^M\left[\varphi(y_i^{(u)}) \otimes \varphi(y_i^{(u)}) + \varphi(y_{i+1}^{(u)})\otimes \varphi(y_{i+1}^{(u)})\right]\gamma^*, A_{kl} \right\rangle_{\mathbb{R}^{r \times r}}\\
    &= \left \langle \frac{t_{i+1} - t_i}{2M}\sum_{u=1}^M\left[\gamma(y_i^{(u)})\gamma(y_i^{(u)})^T + \gamma(y_{i+1}^{(u)})\gamma(y_{i+1}^{(u)})^T\right], A_{kl} \right\rangle_{\mathbb{R}^{r \times r}}\\
    &= \langle N_i, A_{kl} \rangle_{\mathbb{R}^{r \times r}}
\end{align}
And
\begin{align}
    \|\hat{\gamma}^* A \hat{\gamma}\|^2_{HS} &= \mbox{\rm Tr}\left(\hat{\gamma}^* A \hat{\gamma}\hat{\gamma}^*A \hat{\gamma}\right)\\
    &= \mbox{\rm Tr}\left(\hat{\gamma}\hat{\gamma}^* A \hat{\gamma}\hat{\gamma}^* A\right)\\
    &=\mbox{\rm Tr}(I_{rd}AI_{rd}A)\\
    &=\|A\|^2_{\mathbb{R}^{rd \times rd}}
\end{align}
\hfill $\Box$

In this way, we have reduced the problem to the case of an explicit kernel with $k''(x,y) := \gamma(x)^T\gamma(y)$.  Thus, the theorems for the explicit kernel case apply here.

{\em Proof of theorem \ref{thm:fenchel_dual_implicit_kernel}}:

See the proof of theorem \ref{thm:fenchel_dual_cost_explicit_kernel} in appendix \ref{sec:derivation_outer_diff} with $A$ in place of $Q$ and $N_i$ in place of $M_i$.

\hfill $\Box$
\end{document}